\newtheorem{theorem}{Theorem}[section]
\newtheorem{lemma}[theorem]{Lemma}
\newtheorem{proposition}[theorem]{Proposition}
\newtheorem{corollary}[theorem]{Corollary}
\newtheorem{definition}{Definition}
\newtheorem{remark}{Remark}
\def\se{{\psi_1}}
\def\sg{{\psi_2}}
\def\S{\mathcal{S}}
\def\N{\mathcal{N}}
\def\d{\mathrm{d}}
\def\B{\mathcal{B}}
\def\YY{\mathcal{Y}}
\def\XX{\mathcal{X}}
\def\D{\mathbf{D}}
\def\Bols{\B^\delta_\Sig(\betaOLS)}
\def\Bolst{\B_\delta(\tilde{\beta}^{\sf ols})}
\def\Ev{\mathcal{E}}
\def\lmin{\lambda_{\text{min}}}
\def\lmax{\lambda_{\text{max}}}
\def\dcgf{{\cgf^{(1)}}}
\def\ddcgf{{\cgf^{(2)}}}
\def\dddcgf{{\cgf^{(3)}}}
\def\X{\bold{X}}
\def\I{\bold{I}}
\def\e{{\epsilon}}
\def\Sig{{\boldsymbol \Sigma}}
\def\SigT{\widetilde{\Sig}}
\def\SigH{\widehat{\boldsymbol {\Sigma}}}
\def\soft{{\sf soft}}
\def\supp{\mathrm{supp}}
\def\betaGLM{\beta^{\sf pop}}
\def\betaOLS{\beta^{\sf ols}}
\def\betaOLStilde{\tilde{\beta}^{\sf ols}}
\def\betaGLMreg{\beta^{\sf pop}}
\def\betaOLShat{\hat{\beta}^{\sf ols}}
\def\betaOURhat{\hat{\beta}^{\sf\hspace{.01in} sls}}
\def\betaRIDGE{\beta^{\sf ols}}
\def\betaLASSO{{\beta}^{\sf lasso}}
\def\tbeta{\tilde{\beta}}
\def\tmu{\tilde{\mu}}
\def\hz{\hat{\zeta}}
\def\cgf{\Psi} 
\def\c{c}
\def\hc{\hat{\c}}
\def\hy{\hat{y}}
\def\tx{w}
\def\by{\gamma}
\def\<{\langle}
\def\>{\rangle}
\def\normal{{\sf N}}
\def\reals{{\mathbb R}}
\def\sumton{\sum_{i=1}^n}
\def\argmin{\operatorname{argmin} \displaylimits}
\def\minimize{\text{minimize}}
\newcommand{\eq}[1]{\begin{align}#1\end{align}}
\newcommand{\eqn}[1]{\begin{align*}#1\end{align*}}
\renewcommand{\P}[1]{\mathbb{P}\left( #1 \right)}
\newcommand{\E}[1]{\mathbb{E}\left[#1\right]}
\newcommand{\var}[1]{\mathrm{Var}\left(#1\right)}
\newcommand{\expp}[1]{\exp\left\{#1\right\}}
\newcommand{\logg}[1]{\log\left(#1\right)}
\newcommand{\maxx}[1]{\max\left\{#1\right\}}
\newcommand{\minn}[1]{\min\left\{#1\right\}}
\newcommand{\abs}[1]{\left|#1\right|}
\newcommand{\norm}[1]{\left\|#1\right\|_2}
\newcommand{\normO}[1]{\left\|#1\right\|_1}
\newcommand{\normI}[1]{\left\|#1\right\|_\infty}
\newcommand{\normSG}[1]{\left\|#1\right\|_\sg}
\newcommand{\normSE}[1]{\left\|#1\right\|_\se}
\newcommand{\inner}[1]{{\langle #1 \rangle}}
\renewcommand{\O}[1]{\mathcal{O}\left(#1\right)}
\begin{document}

\title{{Scalable Approximations for Generalized Linear Problems}}

\author{
  Murat A.~Erdogdu\thanks{
  Department of Statistics,
  Stanford University, \texttt{erdogdu@stanford.edu}}
 \; \; \; \; 
   Mohsen Bayati\thanks{
   Graduate School of Business,
   Stanford University,
  \texttt{bayati@stanford.edu}}
   \; \; \; \;
   Lee H.~Dicker\thanks{
   Department of Statistics and Biostatistics,
   Rutgers University and Amazon 
   (Work conducted while at Rutgers University),
   \texttt{ldicker@stat.rutgers.edu},
   }
}

\date{November 16, 2016}
\maketitle

\begin{abstract}
In stochastic optimization,
the population risk is generally approximated by the empirical risk.
However, in the large-scale setting, minimization of the empirical risk may be 
computationally restrictive.
In this paper, we design
an efficient algorithm to approximate the population risk minimizer
in generalized linear problems such as
binary classification with surrogate losses and generalized linear regression models.
We focus on large-scale problems,
where the iterative minimization of the empirical risk is computationally intractable, i.e.,
the number of observations $n$
is much larger than the dimension of the parameter $p$, i.e. $n \gg p \gg 1$.
We show that under random sub-Gaussian design,
the true minimizer of the population risk is approximately proportional to the
corresponding ordinary least squares (OLS) estimator.
Using this relation,
we design an algorithm that achieves the same accuracy as 
the empirical risk minimizer
through iterations that 
attain up to a cubic convergence rate,
and that are cheaper than 
any batch optimization algorithm by at least a factor of $\mathcal{O}(p)$.
We provide theoretical guarantees for our algorithm,
and analyze the convergence behavior in terms of data dimensions.
Finally, we demonstrate the performance of 
our algorithm on well-known classification and regression problems,
through extensive numerical studies 
on large-scale datasets, and show that
it achieves the highest performance compared to several other widely
used and specialized optimization algorithms.
\end{abstract}
\section{Introduction}
\label{sec::intro}
We consider the following optimization problem
\eq{\label{eq::population-risk}
  \underset{\beta \in \reals^p}{\text{minimize}} \ R(\beta) \coloneqq 
  \E{\vphantom{\big[}\cgf\left(\inner{x, \beta}\right)- y \inner{x, \beta}},
}
where $\cgf : \reals \to \reals$ is a non-linear function,
$y \in \YY \subset \reals$ denotes the response variable, 
$x \in\XX \subset \reals^p$ denotes the predictor (or covariate),
and the expectation is over the joint distribution of $(y, x)$.
The above minimization is called a generalized linear problem in its
canonical representation, and it is commonly encountered in the statistical learning. 
Celebrated examples include binary classification 
with smooth surrogate losses \cite{buja2005loss, reid2010composite}, 
and generalized linear models (GLMs) such as 
Poisson regression, logistic regression, ordinary least squares,
multinomial regression and many applications involving graphical models
\cite{nelder1972generalized,mccullagh1989generalized,
wainwright2008graphical,koller2009probabilistic}.
These methods play a crucial role in 
numerous machine learning and statistics problems,
and 
provide a miscellaneous framework for many regression and classification tasks.  

The exact minimization of the stochastic optimization problem \eqref{eq::population-risk},
requires the knowledge of the underlying distribution of the variables $(y,x)$. 
In practice, however, the joint distribution is not available.
Therefore, after observing $n$ independent data points $(y_i,x_i)$, 
the standard approach is
to minimize the empirical risk approximation given as
\eq{\label{eq::empirical-risk}
  \underset{\beta \in \reals^p}{\text{minimize}} \ \widehat{R}(\beta) \coloneqq 
 \frac{1}{n}\sum_{i=1}^n  \cgf\left(\inner{x_i, \beta}\right)- y_i \inner{x_i, \beta}\, .
}
In the case of GLMs, the empirical risk minimization given 
in \eqref{eq::empirical-risk} is called the maximum likelihood
estimation, whereas
in the case of binary classification, it is generally referred to as 
surrogate loss minimization. Due to non-linear structure
of the optimization task given in \eqref{eq::empirical-risk},
for both problems, the minimization 
of the empirical risk requires iterative methods.
Regardless of the problem formulation, 
the most commonly used optimization method is the Newton-Raphson method, 
which may be viewed as a reweighted least squares algorithm
\cite{mccullagh1989generalized, buja2005loss}.
This method uses a second-order approximation to benefit
from the curvature of the log-likelihood and achieves
locally quadratic convergence. A drawback of this
approach is its excessive per-iteration cost of $\mathcal{O}(np^2)$.
To remedy this, Hessian-free Krylov sub-space based methods
such as conjugate gradient and minimal residual are used, 
but the resulting direction is imprecise 
\cite{hestenes1952methods,paige1975solution,martens2010deep}.
%
On the other hand, first-order approximation yields the gradient descent
algorithm, which attains a linear convergence rate 
with $\mathcal{O}(np)$ per-iteration cost. 
Although its convergence rate is slow compared to 
that of the second-order methods, 
its modest per-iteration cost makes it practical for large-scale problems. 
In the regime $n \gg p$, another popular optimization technique
is the class of Quasi-Newton methods 
\cite{bishop1995neural,nesterov2004introductory},
which can attain a per-iteration cost of $\mathcal{O}(np)$, 
and the convergence rate is locally super-linear; 
a well-known member of this class of methods is the 
BFGS algorithm
\cite{broyden1970convergence,fletcher1970new,goldfarb1970family,shanno1970conditioning}.
There are recent studies 
that exploit the special structure of GLMs
\cite{erdogdu2015newton-stein},
and achieve near-quadratic convergence with
a per-iteration cost of $\O{np}$,
and an additional cost of covariance estimation.

In this paper, we take an alternative approach for minimizing \eqref{eq::population-risk},
based on an identity that is well-known in some areas of statistics, but
appears to have received relatively little attention for its computational
implications in large-scale problems. Let $\betaGLM$ denote the true minimizer
of the population risk given in \eqref{eq::population-risk}, 
and let $\betaOLS$ denote the corresponding
ordinary least squares (OLS) coefficients defined as 
$\betaOLS = \E{xx^T}^{-1}\E{xy}$.  Then, under certain random
predictor (design) models, 
\begin{equation} \label{eq::proportionality}
  \betaGLM\propto\betaOLS.  
\end{equation}
For logistic regression with Gaussian design (which is equivalent to
Fisher's discriminant analysis), 
\eqref{eq::proportionality} was noted by Fisher in the 1930s
\cite{fisher1936use}; a more general formulation for models with
Gaussian design is given in
\cite{brillinger2012generalized}.  The relationship
\eqref{eq::proportionality} suggests that if the constant of
proportionality is known, then $\betaGLM$ can be estimated
by computing the OLS estimator, which may be substantially simpler than
minimizing the empirical risk.
In fact, in some applications
like binary classification,
it may not be necessary to find the constant of proportionality in
\eqref{eq::proportionality}.
Our work in this paper builds on this idea.

Our contributions can be summarized as follows.
\begin{enumerate}
\item 
  We show that  $\betaGLM$
  is approximately proportional to $\betaOLS$ in the random design setting,
  regardless of the covariate (predictor) distribution.
  That is, we prove
  \[
    \normI{\betaGLM - c_{\cgf} \times\betaOLS} \lesssim \frac{1}{p},
  \]
  for some $\c_\cgf \in \reals$ which depends on the non-linearity $\cgf$.
  Our generalization uses zero-bias transformations \cite{goldstein1997stein}.
We also show that the above relation still holds under certain types of regularization.
\item We design a computationally efficient estimator
  for $\betaGLM$ by first estimating the OLS coefficients,
  and then estimating the proportionality constant $c_{\Psi}$ via line search.
  We refer to the resulting estimator as 
  the Scaled Least Squares (SLS) estimator 
  and denote it by $\betaOURhat$.  After estimating the OLS
  coefficients, 
  the second step of our algorithm involves finding a root of a real
  valued function; this can be accomplished using iterative
  methods with up to a cubic convergence rate
  and only $\mathcal{O}(n)$ per-iteration cost. 
  This is cheaper than
  the classical batch methods mentioned above 
  by at least a factor of $\mathcal{O}(p)$.

\item For random design with sub-Gaussian predictors, we show that
  \[
    \normI{\betaOURhat - \betaGLM} 
    \lesssim  \frac{1}{p} + 
    \sqrt{\frac{p}{n/\log(n)}}.
  \]
  This bound characterizes the performance of the 
  proposed estimator in terms of data dimensions, 
  and justifies the use of the algorithm in the regime $n \gg p \gg 1$.

\item We demonstrate how to transform 
a binary classification problem with smooth surrogate loss
into a generalized linear problem, and how our methods can
be applied to obtain a computationally efficient optimization scheme.
We further discuss the canonicalization of the square loss, which may be of 
independent interest to non-convex optimization community.

\item We propose a scalable algorithm for converting one generalized linear
problem to another by exploiting the proportionality relation \eqref{eq::proportionality}. 
The proposed algorithm requires only $\O{n}$ per each iteration,
with no additional cost.

\item We study the statistical and computational performance of $\betaOURhat$, 
  and compare it to that
  of the empirical risk minimizer (using several well-known implementations), 
  on a variety of large-scale datasets. 
\end{enumerate}

The rest of the paper is organized as follows:
Section \ref{sec::related-work} surveys the related work
and Section \ref{sec::preliminaries} introduces the
required background and the notation.
In Section \ref{sec::equivalence}, we provide 
the intuition behind the relationship \eqref{eq::proportionality}, which
are based on exact calculations for the Gaussian design setting.
In Section \ref{sec::algorithm}, 
we propose our algorithm and discuss its computational properties. 
Theoretical results are given in Section \ref{sec::theory}.
In Section \ref{sec::conversion}, we propose an algorithm to
convert one GLM type to another.
We discuss how a binary classification problem can be cast as a
generalized linear problem in Section \ref{sec::binary-classification},
and in Section \ref{sec::canonicalization}
we propose a method to canonicalize the square loss.
Section \ref{sec::experiments}
provides a thorough comparison between the
proposed algorithm and other existing methods.
Finally, we conclude with a brief discussion in Section \ref{sec::discussion}.

\subsection{Related work} \label{sec::related-work}

As mentioned in Section \ref{sec::intro}, the relationship
\eqref{eq::proportionality} is well-known in several forms in
statistics.  Brillinger \cite{brillinger2012generalized} derived
\eqref{eq::proportionality} for models with Gaussian predictors using Stein's lemma. 
Li \& Duan \cite{li1989regression} studied model misspecification problems
in statistics
and derived
\eqref{eq::proportionality} when the predictor distribution has linear
conditional means (this is a slight generalization of Gaussian
predictors). 
The relation \eqref{eq::proportionality} has led to various techniques for
dimension reduction \cite{li1991sliced,li2009dimension}, 
and more recently, it has been studied by \cite{plan2015generalized,thrampoulidis2015lasso}
in the context of compressed sensing.
It has been shown that the standard lasso estimator may be very effective when
used in models where the relationship between the expected response and
the signal is nonlinear, and the predictors (i.e. the design or
sensing matrix) are Gaussian.  A common theme for all of this previous
work is that it focuses solely on settings where
\eqref{eq::proportionality} holds exactly and the predictors are
Gaussian (or, in the case of \cite{li1989regression}, very nearly
Gaussian).   Two key novelties of the present paper are (i) our focus on
the 
computational benefits following from \eqref{eq::proportionality} for large scale
problems  with $n \gg p \gg 1$; and (ii) our rigorous finite sample analysis of models with
non-Gaussian predictors, where \eqref{eq::proportionality} is shown to
be approximately valid.  To the best of our knowledge, the present paper 
and its earlier version \cite{erdogdu2016scaled} are the first to
consider the relation \eqref{eq::proportionality} in the context of optimization.

\section{Preliminaries and notation}
\label{sec::preliminaries}
We assume a random design setting,  
where the observed data consists of $n$ random iid pairs $(y_1,x_1)$, $(y_2,x_2)$, $\ldots$,
$(y_n,x_n)$; $y_i \in \YY \subset \reals$ is the response variable and 
$x_i = (x_{i1},\ldots,x_{ip})^T \in \XX \subset \reals^{p}$ is 
the vector of predictors or covariates.
We focus on problems where the minimization \eqref{eq::population-risk} is desirable,
but we do not need to assume that $(y_i,x_i)$ are actually drawn from
a particular distribution or the corresponding statistical model 
(i.e. we allow for model misspecification).

\begin{equation}\label{eq::glm}
  \betaGLM = \argmin_{\beta \in \reals^p}\
  \E{\vphantom{\big[}\Psi(\<x_i,\beta\>) - y_i\<x_i,\beta\>}.
\end{equation}
While we make no assumptions on $\Psi$ beyond
smoothness, note that when the optimization problem is GLM, 
and $\Psi$ is the cumulant generating function
for $y_i\mid x_i$, then the problem reduces to the standard GLM with canonical
link and regression parameters $\betaGLM$
\cite{mccullagh1989generalized}.
Examples of GLMs in
this form include logistic regression with $\Psi(w) = \log\{1+e^w\}$,
Poisson regression with $\Psi(w) = e^w$, and linear regression (least squares)
with $\Psi(w) = w^2/2$.

Our objective is to find a computationally efficient estimator for
$\betaGLM$.   
The alternative estimator for $\betaGLM$ proposed in this paper is
related to the OLS coefficient vector, which is defined by 
$\betaOLS \coloneqq \mathbb{E}[{x_ix_i^T}]^{-1}\E{x_iy_i}$; the
corresponding OLS estimator is $\betaOLShat \coloneqq
(\X^T\X)^{-1}\X^Ty$, where $\X = (x_1,\ldots,x_n)^T$ is the $n \times
p$ design matrix and $y =
(y_1,\ldots,y_n)^T \in \reals^n$.

Additionally, throughout the text we let $[m]\!=\! \{1,2,...,m \}$,
for positive integers $m$, and 
we denote the size of a set $S$ by $|S|$. The $m$-th derivative of
a function $g:\reals \to \reals$ is denoted by $g^{(m)}$.
For a vector $u \in \reals^p$ and a $n \times p$ matrix $\mathbf{U}$, 
we let $\|u\|_q$ and $\|\mathbf{U}\|_q$ denote the
$\ell_q$-vector and -operator norms, respectively.  If $S \subseteq
[n]$, let $\mathbf{U}_S$ denote the $|S|\times p$ matrix obtained from
$\mathbf{U}$ by
extracting the rows that are indexed by $S$. 
For a symmetric matrix $\mathbf{M} \in \reals^{p\times p}$, 
$\lmax(\mathbf{M})$ and $\lmin(\mathbf{M})$ 
denote the maximum and minimum eigenvalues, respectively, and 
$\rho_k(\mathbf{M})$ denotes the condition number of $\mathbf{M}$ with respect to $k$-norm.
We denote by $\normal_q$ the $q$-variate normal distribution,
and all expectations are over all randomness inside the brackets.
Finally, we use $a \lesssim b$ and $a \leq \O{b}$ interchangeably,
whichever is convenient (where $\O{\cdot}$ refers to the big O notation).

\section{OLS is equivalent to the true minimizer up to a scalar factor}  
\label{sec::equivalence}


To motivate our methodology, we assume in this section that 
the covariates are multivariate normal, as in \cite{brillinger2012generalized}.
These distributional assumptions will be relaxed in
Section \ref{sec::theory}.

\begin{proposition}\label{prop::equivalence}
  Assume that the covariates are multivariate normal 
  with mean 0 and covariance matrix $\Sig$, i.e.
  $x_i \sim \normal_p(0,\Sig)$. Then $\betaGLM$ 
  can be written as
  \eq{\label{eq::prop-lemma}
    \betaGLM =  \c_{\cgf} \times \betaOLS,
  }
  where $\c_\cgf \in \reals$ is the fixed point of the mapping
  \eq{
  z \to \E{\ddcgf(\<x_i,\betaOLS\>z)}^{-1}.
  }
\end{proposition}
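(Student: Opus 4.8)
The plan is to characterize $\betaGLM$ through its first-order optimality condition and then collapse that condition using Stein's lemma, which is available precisely because the covariates are Gaussian. Since the objective $R$ is smooth and its gradient vanishes at the minimizer, I would first differentiate under the expectation to write
\eq{
  \grad R(\betaGLM) = \E{x_i\, \dcgf(\inner{x_i,\betaGLM})} - \E{x_i y_i} = 0,
}
so that $\E{x_i\, \dcgf(\inner{x_i,\betaGLM})} = \E{x_i y_i}$. The left-hand side is an expectation of the Gaussian vector $x_i$ multiplied by a scalar function of a linear form in $x_i$, which is exactly the setting in which Stein's lemma applies.

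Next I would invoke Stein's lemma for $x_i \sim \normal_p(0,\Sig)$: for a suitably regular scalar function $h$, one has $\E{x_i\, h(\inner{x_i,\beta})} = \Sig\beta\,\E{h'(\inner{x_i,\beta})}$. Taking $h = \dcgf$ and $\beta = \betaGLM$, and using $\tfrac{\d}{\d t}\dcgf(t) = \ddcgf(t)$, gives
\eq{
  \Sig\,\betaGLM\,\E{\ddcgf(\inner{x_i,\betaGLM})} = \E{x_i y_i} = \Sig\,\betaOLS,
}
where the final equality is just the definition $\betaOLS = \E{x_ix_i^T}^{-1}\E{x_iy_i} = \Sig^{-1}\E{x_iy_i}$. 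Because $\Sig$ is positive definite and hence invertible, I can cancel it to obtain $\betaGLM = \c_\cgf\,\betaOLS$ with $\c_\cgf = \E{\ddcgf(\inner{x_i,\betaGLM})}^{-1}$, which is \eqref{eq::prop-lemma}. To identify $\c_\cgf$ as the claimed fixed point, I would substitute the proportionality relation back into its own definition: replacing $\inner{x_i,\betaGLM}$ by $\c_\cgf\inner{x_i,\betaOLS}$ yields $\c_\cgf = \E{\ddcgf(\c_\cgf\inner{x_i,\betaOLS})}^{-1}$, which is precisely the statement that $\c_\cgf$ is a fixed point of the map $z \mapsto \E{\ddcgf(\inner{x_i,\betaOLS}z)}^{-1}$.

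The computation is short, so the real work is regularity bookkeeping. The main technical point is justifying the two interchanges of expectation and differentiation — once to differentiate $R$ under the integral sign, and once inside Stein's lemma — which requires mild growth conditions on $\dcgf$ and $\ddcgf$ relative to the Gaussian tails; I would either state these explicitly or fold them into the standing smoothness assumption on $\cgf$. A secondary subtlety is that this argument only exhibits $\c_\cgf$ as \emph{a} fixed point of the stated map: the existence of the minimizer $\betaGLM$ (equivalently, of a finite fixed point) and any uniqueness claim are separate issues that would need a convexity or monotonicity argument — for instance, $\cgf$ convex forces $\ddcgf \ge 0$ and keeps the map well defined — though I would not expect the fixed point to be unique without further structure on $\cgf$.
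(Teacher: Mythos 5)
Your proposal is correct and takes essentially the same route as the paper's proof: both reduce the problem to the normal equations $\E{x_i y_i} = \E{x_i \dcgf(\inner{x_i,\betaGLM})}$ and then collapse the right-hand side to $\Sig\,\betaGLM\,\E{\ddcgf(\inner{x_i,\betaGLM})}$ via Stein's lemma --- the paper simply derives that identity in place by integration by parts against the Gaussian density (noting it is ``basically Stein's lemma'') rather than citing it. Your explicit back-substitution verifying the fixed-point characterization of $\c_\cgf$, and your remarks on differentiation under the expectation and on existence/uniqueness of the fixed point, are reasonable bookkeeping that the paper leaves implicit, but they do not constitute a different argument.
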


\begin{proof}[Proof of Proposition \ref{prop::equivalence}]
  %
  The optimal point in the optimization problem \eqref{eq::glm},
  has to satisfy the following normal equations,
  \eq{\label{eq::normal-eq}
    \E{yx_i} = \E{x_i \dcgf(\<x_i,\beta\>)}.
  }
  Now, denote by $\phi(x\mid\Sig)$ the multivariate normal density with 
  mean 0 and covariance matrix $\Sig$. 
  We recall the well-known property of Gaussian density 
  $\d\phi(x\mid\Sig)/\d x = -\Sig^{-1} x \phi(x\mid\Sig)$.
  Using this and
  integration by parts on the right hand side of the above equation,
  we obtain
  \eq{\label{eq::stein}
    \E{x_i \dcgf(\<x_i,\beta\>)} =& \int x \dcgf(\<x,\beta\>) \phi(x\mid\Sig) \ \d
    x, \\
    =& \Sig \beta \ \underbrace{\mathbb{E}\big[\ddcgf(\<x_i,\beta\>)\big]}_{\in\  \reals},\nonumber
}
which is basically the Stein's lemma. Combining this with the normal equations
\eqref{eq::normal-eq} and multiplying both side with $\Sig^{-1}$, we obtain the desired result. 

\end{proof}

Proposition \ref{prop::equivalence} and its proof provide the main intuition behind
our proposed method. 
Observe that in our derivation,
we only worked with the right hand side of the normal equations
\eqref{eq::normal-eq} which does not depend on the
response variable $y_i$. 
Therefore, the equivalence will hold
regardless of the joint distribution of $(y_i,x_i)$.
This is the main difference from the proof of \cite{brillinger2012generalized}
where $y_i$ is assumed to follow a single index model.
In Section \ref{sec::theory}, where we extend the method to
non-Gaussian predictors, the identity \eqref{eq::stein} is generalized via the
zero-bias transformations \cite{goldstein1997stein}.


\subsection{Regularization}
\label{sec::regularization}
A version of Proposition \ref{prop::equivalence} incorporating regularization
--- an important tool for datasets where $p$ is large
relative to $n$ or the predictors are highly collinear --- is also
possible, as outlined briefly in this section.  We focus on $\ell^2$-regularization (ridge regression) in this
section; some connections with lasso ($\ell^1$-regularization) are
discussed in Section \ref{sec::theory} and Corollary \ref{cor::lasso}.

For $\lambda \geq 0$, define the $\ell_2$-regularized empirical risk minimizer,
\eq{
  \betaGLMreg_\lambda = \argmin_{\beta\in \reals^p}\ 
  \E {\cgf(\inner{x_i, \beta}) -y_i\inner{x_i,\beta} } +
  \frac{\lambda}{2} \norm{\beta}^2
}
and the corresponding $\ell^2$-regularized OLS coefficients $
\betaRIDGE_\lambda = \left(\E{x_ix_i^T} + \lambda \I \right)^{-1}\E{x_iy_i}$
(so $\betaGLM = \betaGLMreg_0$ and $\betaOLS = \betaRIDGE_0$). The
same argument as above implies that
\eq{
  \betaGLMreg_\lambda =
  c_\cgf \times \betaRIDGE_{\gamma}, \ \text{ where }\ \gamma = \lambda c_\cgf.
}
This suggests that the ordinary ridge regression for
the linear model can be used to estimate the $\ell^2$-regularized
empirical risk minimizer
$\betaGLMreg_{\lambda}$.  Further pursuing these ideas
for problems where regularization is a critical issue 
may be an interesting area for future research.  
%

\section{SLS: Scaled Least Squares estimator}
\label{sec::algorithm}

\begin{algorithm}[t]
  \caption{SLS: Scaled Least Squares Estimator}
  \label{alg::1}
  \begin{algorithmic}
    \STATE {\bfseries Input:} Data $(y_i,x_i)_{i=1}^n$
    \STATE {\bfseries Step 1. Compute the least squares estimator: 
      $\betaOLShat$ and $\hy = \X\betaOLShat$.}
    \STATE \hspace{.39in} For a sub-sampling based OLS
    estimator, let $S \subset [n]$ be a
    \STATE \hspace{.39in} random subset and take  $\betaOLShat = \frac{|S|}{n}
    (\X^T_S\X_S^\text{\vphantom{T}} )^{-1}\X^Ty$.
    \STATE {\bfseries Step 2. Solve the following equation for $\c \in \reals$: 
      $1 = \frac{c}{n} \sum_{i=1}^n \ddcgf(\c\, \hy_i)$.}
    \STATE \hspace{.39in} Use Newton's root-finding method:
    \STATE \hspace{.65in} Initialize $c$;
    \STATE \hspace{.65in} Repeat until convergence:
    \vspace{.05in}
    \STATE \hspace{.75in} 
    $\c \leftarrow \c -\cfrac{\c\, \frac{1}{n} \sum_{i=1}^n \ddcgf(\c\, \hy_i)-1}
    {\frac{1}{n} \sum_{i=1}^n \left\{\ddcgf(\c\, \hy_i) + \c\,\hy_i\dddcgf(\c\, \hy_i)\right\} } $. 
    \STATE \hspace{.55in} {\bfseries }
    \vspace{.01in}
    \STATE {\bfseries Output}: $\betaOURhat = c \times\betaOLShat$.
  \end{algorithmic}
\end{algorithm} 

Motivated by the results in the previous section, 
we design a computationally efficient algorithm that approximates 
the stochastic optimization problem \eqref{eq::population-risk}
that is as simple as solving the least squares problem; it is
described in Algorithm \ref{alg::1}.
The algorithm has two basic steps. 
First, we estimate the OLS
coefficients, and then in the second step
we estimate the proportionality constant via
a simple root-finding algorithm.

There are numerous fast optimization methods to solve the
least squares problem, and
even a superficial review of these could go beyond the page limits of this paper.
We emphasize that this step (finding the OLS estimator) does not have to be iterative 
and it is the main computational cost
of the proposed algorithm. 
We suggest using a sub-sampling based estimator for $\betaOLS$,
where we only use a subset of the observations to 
estimate the covariance matrix. Let $S \subset [n]$ be a random
sub-sample and denote by $\X_S$ the sub-matrix formed by the
rows of $\X$ in $S$. Then the sub-sampled OLS estimator is given
as $\betaOLShat = \big(\frac{1}{|S|}\X^T_S\X_S^\text{\vphantom{T}} \big)^{-1} \frac{1}{n}\X^Ty$.
Properties of 
sub-sampling and sketching based estimators have been well-studied 
\cite{vershynin2010introduction,dhillon2013subsampling, erdogdu2015convergence, pilanci2015newton, roosta2016sub}.
For sub-Gaussian covariates, it suffices to use
a sub-sample size of $\O{p \log(p)}$ \cite{vershynin2010introduction}.
Hence, this step requires a single time computational cost of 
$\O{|S|p^2+p^3+np}\approx \O{p\max\{p^2 \log(p),n\}}$.
For other approaches, we refer reader to 
\cite{rokhlin2008fast,Drineas:2011, dhillon2013subsampling, erdogdu2015convergence} 
and the references therein.

\begin{figure}[t]
  \centering
  \includegraphics[width=.45\linewidth]{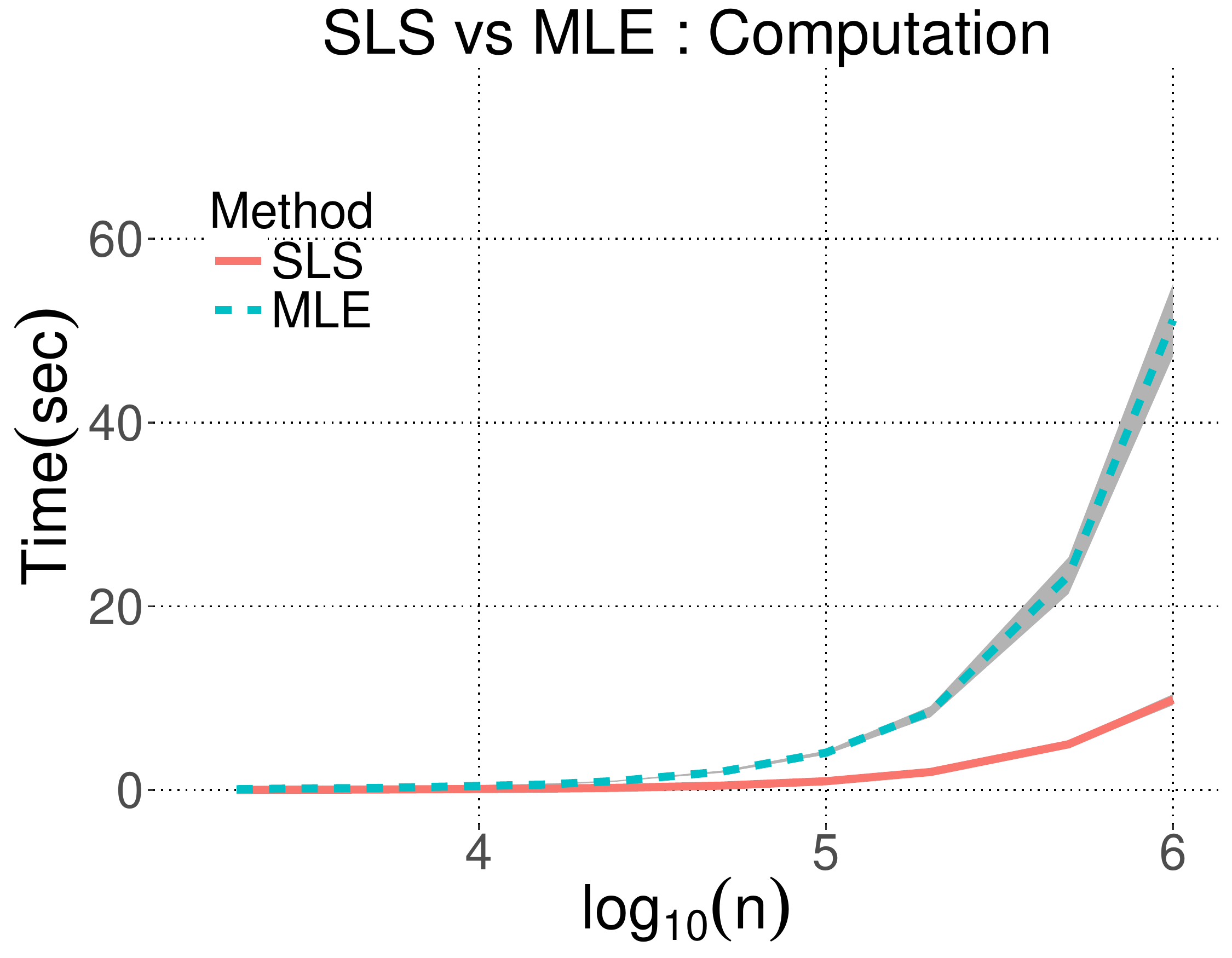}
  \includegraphics[width=.45\linewidth]{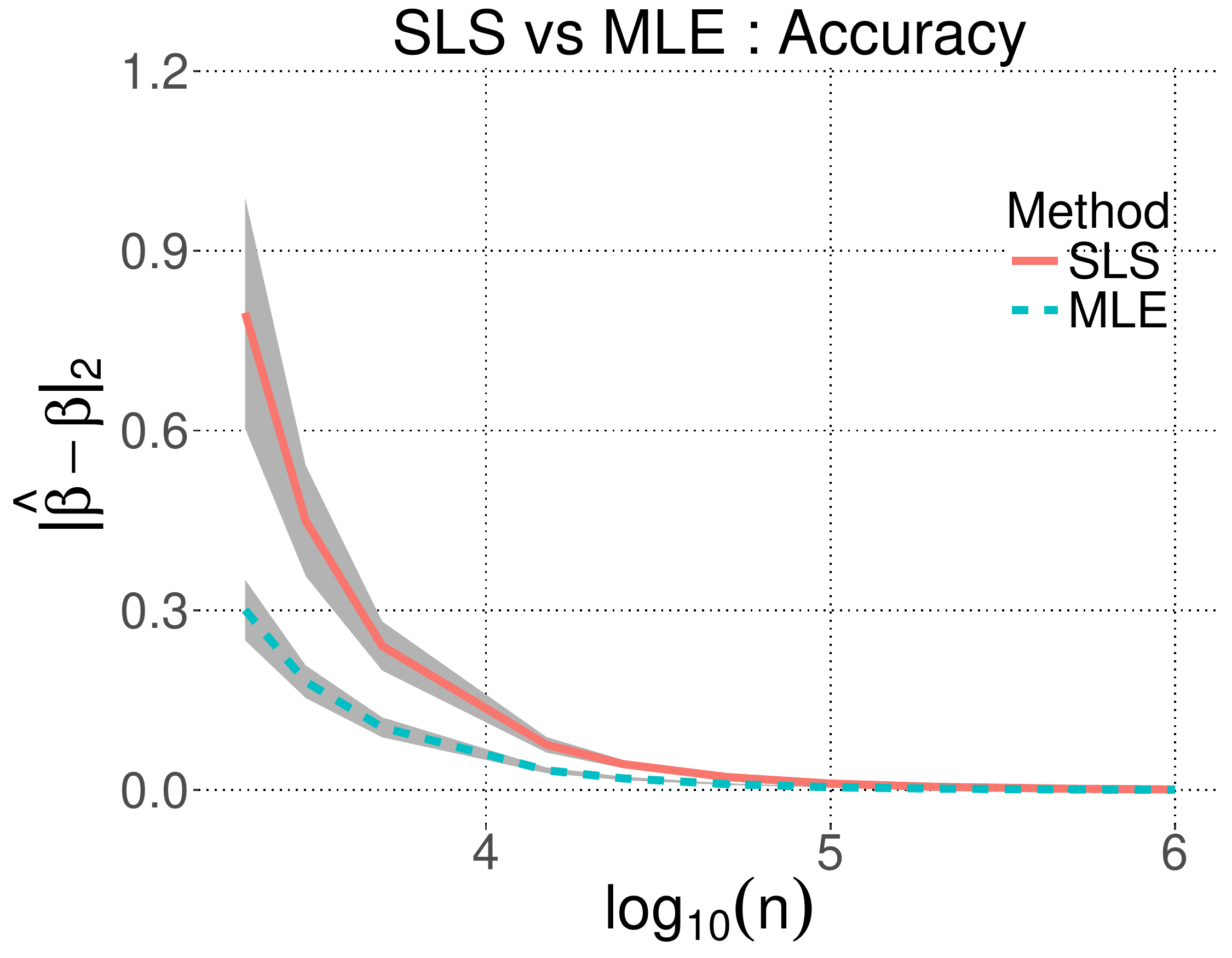}
  \caption{\small
    \label{fig::mle-vs-our}
    Logistic regression with iid standard Gaussian design.  The left plot shows the computational cost (time) for finding
    the MLE and SLS as $n$ grows and $p=200$.
    The right plot depicts the accuracy of the estimators.
    In the regime where the MLE is expensive to 
    compute, the SLS is found much more rapidly and has the same accuracy.
    \texttt{R}'s built-in functions are used to find the MLE.
  }
\end{figure}
The second step of Algorithm \ref{alg::1} involves solving a simple root-finding
problem. As with the first step of the algorithm, there are numerous
methods available for completing this task.
Newton's root-finding method with quadratic convergence or 
Halley's method with cubic convergence may be appropriate choices.
We highlight that this step costs only $\O{n}$ per-iteration and that we 
can attain up to a cubic rate of convergence.
The resulting per-iteration cost is cheaper than 
other commonly used batch algorithms by at least
a factor of $\O{p}$ --- indeed, the cost of computing the gradient is $\O{np}$.
For simplicity, we use Newton's root-finding method.

Correct initialization of the scaling constant $c$ depends on the optimization problem.
For example, in the case of GLM problems,
assuming that the GLM is a good approximation to the true conditional distribution,
by the law of total variance and basic properties of GLMs, we have
\eq{
  \var{y_i} = \E{\var{y_i \mid x_i}} 
  + \var{\E{y_i\mid x_i}}\approx c_\cgf^{-1}
  + \text{Var}\big(\dcgf(\inner{x_i,\beta})\big).
} 
It follows that 
the initialization $c = 2/\var{y_i}$ is reasonable as long as $c_\cgf^{-1} \approx
\E{\var{y_i \mid x_i}} $ is not much smaller than 
$\text{Var}\big(\dcgf(\inner{x_i,\beta})\big)$.
Our experiments show that SLS is very robust to initialization.

In Figure \ref{fig::mle-vs-our}, we compare the performance of our
SLS estimator to that of the MLE in a GLM optimization problem, 
when both are used to analyze synthetic data
generated from a logistic regression model 
under general Gaussian design with randomly generated covariance matrix.
The left plot shows the computational cost of obtaining both
estimators as $n$ increases for fixed $p$. 
The right plot shows the accuracy of the estimators.
In the regime $n \gg p \gg 1$
---  where the MLE is hard to compute ---
the MLE and the SLS achieve the same accuracy,
yet SLS has significantly smaller computation time.
We refer the reader to Section \ref{sec::theory}
for theoretical results characterizing the finite sample behavior of
the SLS.


\section{Theoretical results}
\label{sec::theory}

In this section, we use the zero-bias transformations 
\cite{goldstein1997stein}
to generalize the equivalence relation given in the previous section to the
settings where the covariates are non-Gaussian.
\begin{definition}\label{def::zb}
  Let $z$ be a random variable with mean 0 and variance $\sigma^2$. 
  Then, there exists a random variable $z^*$ that satisfies
  $
  \E{zf(z)}= \sigma^2\mathbb{E}[f^{(1)}(z^*)],
  $
  \ for all differentiable functions $f$.  The distribution of $z^*$ is
  said to be the $z$-zero-bias distribution.  
\end{definition}
The existence of $z^*$ in Definition \ref{def::zb} is a consequence of 
Riesz representation theorem \cite{goldstein1997stein}.  The normal
distribution is the unique distribution whose zero-bias transformation
is itself (i.e. the normal distribution is a fixed point of the
operation mapping the distribution of $z$ to that of $z^*$ -- which is basically Stein's lemma).

To provide some intuition behind the usefulness of the zero-bias
transformation, we refer back to the proof of Proposition
\ref{prop::equivalence}.  For simplicity, assume that the covariate vector $x_i$ has 
iid entries with mean 0, and variance 1.
Then the zero-bias transformation applied to
the $j$-th normal equation in \eqref{eq::normal-eq} yields
\eq{\label{eq::zer-bias-normal}
  \underbrace{
    \E{y_ix_{ij}} = \E{x_{ij}\dcgf\big(x_{ij}\beta_j +\Sigma_{k \neq j}x_{ik} \beta_k\big)}
  }_\text{$j$-th normal equation}
  =
  \underbrace{
    \beta_j\E{\ddcgf\left(x_{ij}^*\beta_j +\Sigma_{k \neq j}x_{ik} \beta_{ik} \right)}
  }_\text{Zero-bias transformation}.
}
The distribution of $x_{ij}^*$ is the $x_{ij}$-zero-bias distribution
and is entirely
determined by the distribution of $x_{ij}$; general properties of
$x_{ij}^*$ can be found, for example, in \cite{chen2010normal}.
If $\beta$ is well spread, it turns out that taken together, with $j =
1,\ldots,p$, the far right-hand side
in \eqref{eq::zer-bias-normal} behaves similar to the right side of
\eqref{eq::stein}, with $\Sig = \I$; that is, the behavior is similar
to the Gaussian case, where the proportionality relationship given in
Proposition \ref{prop::equivalence} holds.  This argument leads to an
approximate proportionality relationship for problems with
non-Gaussian predictors, which, when carried out rigorously, yields
the following result.

\begin{theorem}{}
  \label{thm::population-bound}
  Suppose that the whitened covariates $w_i = \Sig^{-\nicefrac{1}{2}}x_i$ are 
  independent with mean 0, covariance $\I$, and have sub-Gaussian norm bounded by $\kappa$.
  Furthermore, $w_i$'s have
  constant first and second conditional moments, i.e., 
  $\forall j \in [p]$ and $\tbeta =  \Sig^{\nicefrac{1}{2}}\betaGLM$,
  $\mathbb{E}[{w_{ij}\big|\Sigma_{k\neq j} \tbeta_k w_{ik}}]$ and 
  $\mathbb{E}[{w_{ij}^2\big|\Sigma_{k\neq j} \tbeta_k w_{ik}}]$ are constant.
  Let $\|\betaGLM\|_2 = \tau$ and assume $\betaGLM$ is $r$-well-spread 
  in the sense that $\tau / \normI{\betaGLM} = r \sqrt{p}$
  for some $r \in (0,1]$, and the function $\ddcgf$
  is Lipschitz continuous with constant $k$.
  Then, for $c_\cgf = 1/\E{\ddcgf(\inner{x_i,\betaGLM})}$, and 
  $\rho = \rho_\infty(\Sig^{1/2})$ denoting the condition number of $\Sig^{1/2}$, we have
  \eq{
    \normI{ \frac{1}{\c_\cgf}\times\betaGLM -   \betaOLS} 
    \leq  \frac{\eta}{p},\ 
    \text{ where } \ 
    \eta = 8 k \kappa^3\rho\|\Sig^{1/2}\|_\infty(\tau/r)^2.
  }
\end{theorem}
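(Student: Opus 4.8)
The plan is to transfer the exact Gaussian identity of Proposition \ref{prop::equivalence} to the sub-Gaussian setting by replacing Stein's lemma with the zero-bias transformation, and to control the resulting discrepancy coordinate by coordinate. First I would pass to whitened coordinates: writing $w_i = \Sig^{-1/2}x_i$ and $\tbeta = \Sig^{1/2}\betaGLM$, so that $\inner{x_i,\betaGLM} = \inner{w_i,\tbeta}$. Combining the first-order optimality (normal) equation $\E{y_ix_i} = \E{x_i\dcgf(\inner{x_i,\betaGLM})}$ from the proof of Proposition \ref{prop::equivalence} with the definition $\betaOLS = \E{x_ix_i^T}^{-1}\E{x_iy_i} = \Sig^{-1}\E{x_iy_i}$, I obtain the clean identity $\Sig^{1/2}\betaOLS = \E{w_i\dcgf(\inner{w_i,\tbeta})}$. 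Likewise $c_\cgf^{-1} = \E{\ddcgf(\inner{w_i,\tbeta})}$ by definition, so the whole problem reduces to comparing, coordinate by coordinate, the vector $\E{w_i\dcgf(\inner{w_i,\tbeta})}$ against $\tbeta/c_\cgf$.

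The heart of the argument is the per-coordinate application of the zero-bias transformation. Fixing $j$ and writing $V_j = \sum_{k\neq j}\tbeta_k w_{ik}$, I would split $\inner{w_i,\tbeta} = \tbeta_j w_{ij} + V_j$ and apply Definition \ref{def::zb} to $w_{ij}$ \emph{conditionally on} $V_j$. The constant conditional moment hypotheses are exactly what make this legitimate: they force $\E{w_{ij}\mid V_j} = 0$ and $\E{w_{ij}^2\mid V_j}=1$, so the conditional zero-bias identity holds with unit conditional variance and no extraneous $V_j$-dependent factor. Applying it to $g(t) = \dcgf(\tbeta_j t + V_j)$, whose derivative is $\tbeta_j\ddcgf(\tbeta_j t + V_j)$, and averaging over $V_j$, yields
\[
(\Sig^{1/2}\betaOLS)_j = \tbeta_j\,\E{\ddcgf(\tbeta_j w_{ij}^* + V_j)},
\]
where $w_{ij}^*$ is the (conditional) zero-bias transform of $w_{ij}$. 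Since $c_\cgf^{-1} = \E{\ddcgf(\tbeta_j w_{ij} + V_j)}$, the $j$-th coordinate of the whitened error is $\Delta_j := \tbeta_j\big(\E{\ddcgf(\tbeta_j w_{ij}^* + V_j)} - \E{\ddcgf(\tbeta_j w_{ij} + V_j)}\big)$, i.e. the error comes entirely from swapping $w_{ij}$ for its zero-bias version in the argument of $\ddcgf$. In the Gaussian case $w_{ij}^* \eqdist w_{ij}$ and $\Delta_j = 0$, recovering Proposition \ref{prop::equivalence}.

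Next I would bound $|\Delta_j|$. Because $\ddcgf$ is Lipschitz with constant $k$, the map $t\mapsto\ddcgf(\tbeta_j t + V_j)$ is Lipschitz with constant $k|\tbeta_j|$; comparing each expectation to its value at $w=0$ and using the triangle inequality gives $|\Delta_j| \leq k\tbeta_j^2\,(\E{|w_{ij}^*|} + \E{|w_{ij}|})$. The first moment $\E{|w_{ij}|}\lesssim\kappa$ is immediate from the sub-Gaussian norm bound; the key estimate is $\E{|w_{ij}^*|}\lesssim\kappa^3$, which follows by writing the zero-bias density explicitly and relating the first moment of $w_{ij}^*$ to the third moment of $w_{ij}$, then invoking sub-Gaussianity. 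This is where the $\kappa^3$ in $\eta$ originates, so that $|\Delta_j| \lesssim k\kappa^3\tbeta_j^2 \leq k\kappa^3\normI{\tbeta}^2$.

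Finally I would assemble the pieces and de-whiten. The whitened error satisfies $\normI{\tbeta/c_\cgf - \Sig^{1/2}\betaOLS} = \max_j|\Delta_j| \lesssim k\kappa^3\normI{\tbeta}^2$, and multiplying by $\Sig^{-1/2}$ gives $\normI{\betaGLM/c_\cgf - \betaOLS} \leq \normI{\Sig^{-1/2}}\max_j|\Delta_j|$. The well-spread assumption $\normI{\betaGLM} = \tau/(r\sqrt p)$ together with $\normI{\tbeta}\leq\normI{\Sig^{1/2}}\normI{\betaGLM}$ gives $\normI{\tbeta}^2\leq\normI{\Sig^{1/2}}^2\tau^2/(r^2 p)$, producing the decisive factor $1/p$. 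Collecting constants via $\normI{\Sig^{-1/2}}\,\normI{\Sig^{1/2}}^2 = \rho\,\normI{\Sig^{1/2}}$ yields exactly $\eta/p$ with $\eta = 8k\kappa^3\rho\normI{\Sig^{1/2}}(\tau/r)^2$ once the numerical constants from the moment bounds are tracked. The main obstacle I anticipate is the second step: making the conditional zero-bias transformation rigorous — in particular justifying that the constant-conditional-moment assumption (rather than full independence of the coordinates of $w_i$) suffices to run the transformation inside the conditional expectation and to control the moments of the mixture variable $w_{ij}^*$. The remaining steps are Lipschitz bookkeeping and norm conversions.
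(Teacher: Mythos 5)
Your proposal is correct and follows essentially the same route as the paper's proof: whitening plus the normal equations, a coordinate-wise zero-bias transformation applied conditionally on $V_j$ (legitimized, exactly as you say, by the constant conditional moment assumptions), a Lipschitz estimate on $\ddcgf$ combined with the sub-Gaussian third-moment bound $\E{|w_{ij}|^3}\le 3^{3/2}\kappa^3$, and finally the well-spread assumption with $\ell_\infty$ operator-norm conversions producing the $1/p$ rate through $\rho$ and $\normI{\Sig^{1/2}}$. The only difference is bookkeeping: the paper bounds the swap error via the coupled quantity $\E{|w_{ij}^*-w_{ij}|}\le 1.5\,\E{|w_{ij}|^3}$, whereas you anchor both expectations at $\ddcgf(V_j)$ and bound $\E{|w_{ij}^*|}+\E{|w_{ij}|}$ separately --- the same triangle-inequality computation resting on the identity $\E{|w_{ij}^*|}=\tfrac{1}{2}\E{|w_{ij}|^3}$, so both arguments land on the same constant $8$.
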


Theorem \ref{thm::population-bound} is proved in the Appendix.  
It implies that the population parameters
$\betaOLS$ and $\betaGLM$ are approximately equivalent up to a scaling factor,
with an error bound of $\O{1/p}$.
The assumption that $\betaGLM$ is well-spread can be
relaxed with minor modifications.  
For example, if we have a sparse coefficient vector, 
where $\supp(\betaGLM) = \{j; \
\betaGLM_j \neq 0\}$ is the support set of $\betaGLM$, 
then Theorem \ref{thm::population-bound} holds with $p$ 
replaced by the size of the support set.

The assumptions on the conditional moments are the relaxed versions of assumptions
that are commonly encountered in dimension reduction techniques. 
For example, sliced inverse regression methods
assume that the first conditional moment $\E{x\big|\inner{x,\beta}}$ is linear in $x$ for all $\beta$ 
\cite{li1989regression,li1991sliced},
which is satisfied by elliptically distributed random vectors.
 An important case that is not covered by these methods is the independent coordinate case, i.e.,  
 when the whitened covariates have 
independent, but not necessarily identical entries.
It is straightforward to observe that this case satisfies the assumptions of Theorem \ref{thm::population-bound}.
We refer reader to \cite{li2009dimension}, for a good
review of dimension reduction techniques and their corresponding assumptions.
We also highlight that our moment assumptions can be relaxed further,
at the expense of introducing some additional complexity into the results.

An interesting consequence of Theorem \ref{thm::population-bound} and
the remarks following the theorem is that
whenever an entry of $\betaGLM$ is zero, the corresponding entry
of $\betaOLS$ has to be small, and conversely.
For $\lambda \geq 0$, define the lasso coefficients
\eq{
  \betaLASSO_{\lambda} = \argmin_{\beta \in \reals^p}\ 
  \frac{1}{2}\E{(y_i-\inner{x_i,\beta})^2} + \lambda \normO{\beta}.
}
\begin{corollary} \label{cor::lasso}
  For any $\lambda \geq \eta/ \vert \supp(\betaGLM) \vert$,
  if $\E{x_i} = 0$ and $\E{x_ix_i^T} = \I$, 
  we have
  $
  \supp(\betaLASSO) \subset \supp(\betaGLM).
  $
  Further, if $\lambda$ and $\betaGLM$ also satisfy that $\forall j \in \supp(\betaGLM)$,
  $|\betaGLM_j| > \c_\cgf\left( \lambda + \eta/\vert \supp(\betaGLM) \vert\right)$,
  then we have
  $
  \supp(\betaLASSO) = \supp(\betaGLM).
  $
\end{corollary}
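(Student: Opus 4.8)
The plan is to exploit the special structure of the population lasso objective under the whitening assumptions $\E{x_i}=0$ and $\E{x_ix_i^T}=\I$, which collapses the minimization into a coordinate-wise soft-thresholding of $\betaOLS$. First I would expand the quadratic loss: since $\E{\inner{x_i,\beta}^2}=\beta^T\E{x_ix_i^T}\beta=\norm{\beta}^2$ and $\E{y_i\inner{x_i,\beta}}=\inner{\betaOLS,\beta}$ (because $\betaOLS=\E{x_iy_i}$ when $\E{x_ix_i^T}=\I$), the objective equals $\tfrac12\norm{\beta-\betaOLS}^2+\lambda\normO{\beta}$ up to an additive constant. This separates across coordinates, so the unique minimizer is the soft-thresholded vector $\betaLASSO_j=\soft(\betaOLS_j,\lambda)=\operatorname{sign}(\betaOLS_j)(\vert\betaOLS_j\vert-\lambda)_+$; in particular $j\in\supp(\betaLASSO)$ if and only if $\vert\betaOLS_j\vert>\lambda$.

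Next I would invoke Theorem \ref{thm::population-bound} in the form noted in the remark following it --- with $p$ replaced by $\vert\supp(\betaGLM)\vert$ for a sparse (non-well-spread) coefficient vector --- to obtain the entrywise proximity bound $\vert\betaOLS_j-\betaGLM_j/\c_\cgf\vert\le \eta/\vert\supp(\betaGLM)\vert=:\epsilon$ for every $j$. For the first containment, take any $j\notin\supp(\betaGLM)$, so $\betaGLM_j=0$ and hence $\vert\betaOLS_j\vert\le\epsilon\le\lambda$ by the hypothesis $\lambda\ge\eta/\vert\supp(\betaGLM)\vert$; the soft-thresholding characterization then forces $\betaLASSO_j=0$, giving $\supp(\betaLASSO)\subset\supp(\betaGLM)$.

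For the reverse containment under the additional lower bound on $\vert\betaGLM_j\vert$, fix $j\in\supp(\betaGLM)$. Using the reverse triangle inequality with the entrywise bound, and recording that $\c_\cgf>0$ (so $\vert\betaGLM_j/\c_\cgf\vert=\vert\betaGLM_j\vert/\c_\cgf$), I would estimate $\vert\betaOLS_j\vert\ge\vert\betaGLM_j\vert/\c_\cgf-\epsilon>(\lambda+\epsilon)-\epsilon=\lambda$, where the strict inequality invokes the assumption $\vert\betaGLM_j\vert>\c_\cgf(\lambda+\eta/\vert\supp(\betaGLM)\vert)$. Hence $\vert\betaOLS_j\vert>\lambda$ and $j\in\supp(\betaLASSO)$, which combined with the first part yields $\supp(\betaLASSO)=\supp(\betaGLM)$.

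I expect the only genuinely delicate point to be justifying the reduction to exact soft-thresholding: everything hinges on the covariance being exactly the identity, so that the population quadratic term becomes $\norm{\beta}^2$ and the objective decouples into independent scalar problems. The remaining work is bookkeeping, combining the $\O{1/\vert\supp(\betaGLM)\vert}$ proximity bound from Theorem \ref{thm::population-bound} with the threshold condition; the sign convention and the positivity of $\c_\cgf$ should be stated explicitly to make the reverse-triangle step valid.
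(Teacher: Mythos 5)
Your proposal is correct and takes essentially the same approach as the paper: both arguments reduce the population lasso under identity covariance to coordinate-wise soft-thresholding of $\betaOLS$ (you by completing the square so the objective becomes $\tfrac12\|\beta-\betaOLS\|_2^2+\lambda\normO{\beta}$, the paper via the subgradient normal equations), and both then apply the entrywise bound of Theorem \ref{thm::population-bound} with $p$ replaced by $\vert\supp(\betaGLM)\vert$ in the two directions to get the containment and, under the beta-min condition, the equality. The difference in how the soft-thresholding characterization is derived is cosmetic; your explicit flagging of the positivity of $\c_\cgf$ is a point the paper leaves implicit.
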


So far in this section, we have only discussed properties of the population
parameters, such as $\betaGLM$ and $\betaOLS$. In the remainder of this section, we
turn our attention to results for the estimators that are the main
focus of this paper; these results ultimately build on our earlier
results, i.e. Theorem
\ref{thm::population-bound}.

In order to precisely describe the performance of $\betaOURhat$, we
first need bounds on the OLS estimator.  The OLS estimator has been
studied extensively in the literature; however, for our purposes, we
find it convenient to derive a new bound on its accuracy.  While we
have not seen this exact bound elsewhere, 
it is very similar to Theorem 5 of \cite{dhillon2013subsampling}.

\begin{proposition}{}
  \label{prop::ols-rate}
  Assume that $\E{x_i} = 0$, $\E{x_ix_i^T} = \Sig$, and that
  $\Sig^{-\nicefrac{1}{2}}x_i$ and $y_i$ are 
  sub-Gaussian with norms $\kappa$ and $\gamma$, respectively.
  For $\lmin$ denoting the smallest eigenvalue of $\Sig$,
  and $|S| > \eta p $,
  \eq{
    \norm{\betaOLShat - \betaOLS} \leq
    \eta\lmin^{\scalebox{.8}{ $-\nicefrac{1}{2}$}}\sqrt{\frac{p}{|S|}},
  }
  with probability at least $1-3e^{-p}$, 
  where $\eta$ depends only on $\gamma$ and $\kappa$.  
\end{proposition}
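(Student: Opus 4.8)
The plan is to pass to whitened coordinates $w_i = \Sig^{-\nicefrac{1}{2}}x_i$ (which by assumption are mean-zero, isotropic, and sub-Gaussian with norm $\kappa$) and to peel off the factor $\lmin^{-\nicefrac{1}{2}}$ at the very start. Writing $\widehat{\mathbf{C}} = \frac{1}{|S|}\sum_{i\in S} w_i w_i^T$ for the sub-sampled whitened covariance, $b = \E{w_iy_i}$, and $\hat b = \frac{1}{n}\sum_{i=1}^n w_iy_i$, the substitution $x_i = \Sig^{\nicefrac{1}{2}}w_i$ gives $\frac{1}{|S|}\X_S^T\X_S = \Sig^{\nicefrac{1}{2}}\widehat{\mathbf{C}}\Sig^{\nicefrac{1}{2}}$ and $\frac{1}{n}\X^Ty = \Sig^{\nicefrac{1}{2}}\hat b$, hence $\Sig^{\nicefrac{1}{2}}\betaOLShat = \widehat{\mathbf{C}}^{-1}\hat b$ and $\Sig^{\nicefrac{1}{2}}\betaOLS = b$. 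Consequently
\[ \norm{\betaOLShat - \betaOLS} \;\le\; \norm{\Sig^{-\nicefrac{1}{2}}}\,\norm{\widehat{\mathbf{C}}^{-1}\hat b - b} \;=\; \lmin^{-\nicefrac{1}{2}}\,\norm{\widehat{\mathbf{C}}^{-1}\hat b - b}, \]
so it remains only to bound the whitened error $\norm{\widehat{\mathbf{C}}^{-1}\hat b - b}$ by $\eta\sqrt{p/|S|}$ with $\eta$ depending on $\kappa,\gamma$ alone.

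For the whitened error I would use the splitting $\widehat{\mathbf{C}}^{-1}\hat b - b = \widehat{\mathbf{C}}^{-1}(\hat b - b) + \widehat{\mathbf{C}}^{-1}(\I - \widehat{\mathbf{C}})b$, which yields
\[ \norm{\widehat{\mathbf{C}}^{-1}\hat b - b} \;\le\; \norm{\widehat{\mathbf{C}}^{-1}}\left(\norm{\hat b - b} + \norm{\I - \widehat{\mathbf{C}}}\,\norm{b}\right). \]
This reduces the task to three elementary estimates: (i) a lower bound on $\lambda_{\min}(\widehat{\mathbf{C}})$ controlling $\norm{\widehat{\mathbf{C}}^{-1}}$; (ii) concentration of the whitened covariance $\norm{\I - \widehat{\mathbf{C}}}$; and (iii) concentration of the cross-moment $\norm{\hat b - b}$. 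The scalar $\norm{b}$ is dispatched immediately: since $\norm{b}^2 = \E{y_i\inner{b,w_i}} \le \sqrt{\E{y_i^2}}\,\norm{b}$ by Cauchy--Schwarz and isotropy of $w_i$, we get $\norm{b} \le \sqrt{\E{y_i^2}} = \O{\gamma}$.

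Estimates (i) and (ii) both follow from the standard sub-Gaussian covariance concentration bound (e.g. \cite{vershynin2010introduction}), which gives $\norm{\I - \widehat{\mathbf{C}}} \le C\kappa^2\sqrt{p/|S|}$ with probability at least $1 - 2e^{-p}$ after setting the deviation parameter proportional to $\sqrt p$. The hypothesis $|S| > \eta p$ is precisely what forces this deviation below $1/2$, so that on the same event $\lambda_{\min}(\widehat{\mathbf{C}}) \ge 1/2$ and hence $\norm{\widehat{\mathbf{C}}^{-1}} \le 2$. For (iii) I would write $\norm{\hat b - b} = \sup_{\norm u = 1}\frac{1}{n}\sum_{i=1}^n\big(\inner{u,w_i}y_i - \E{\inner{u,w_i}y_i}\big)$ and run a net argument: for fixed unit $u$, the products $\inner{u,w_i}y_i$ are sub-exponential with norm $\O{\kappa\gamma}$ (a product of two sub-Gaussians), so Bernstein's inequality controls the centered average, and a union bound over a $1/4$-net of the sphere (cardinality $\le 9^p$) with deviation scaled by $\sqrt p$ gives $\norm{\hat b - b} \le C\kappa\gamma\sqrt{p/n}$ with probability at least $1 - e^{-p}$. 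Since $|S| \le n$, this term is dominated by the covariance term $\sqrt{p/|S|}$.

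Combining the three estimates gives $\norm{\widehat{\mathbf{C}}^{-1}\hat b - b} \le 2\big(C\kappa\gamma\sqrt{p/n} + C\kappa^2\gamma\sqrt{p/|S|}\big) \le \eta\sqrt{p/|S|}$, and a union bound over the two failure events, of probabilities $2e^{-p}$ and $e^{-p}$, produces the stated $1-3e^{-p}$. The main obstacle is step (iii): whereas the covariance concentration can be quoted as a black box, the cross-moment deviation mixes the randomness of $w_i$ and $y_i$ into heavier-tailed (sub-exponential) summands, so the net-plus-Bernstein argument — in particular verifying that the sub-exponential norm of $\inner{u,w_i}y_i$ is uniformly $\O{\kappa\gamma}$ over the sphere — is where the genuine work lies; everything else is linear algebra together with a single invocation of a known matrix concentration inequality.
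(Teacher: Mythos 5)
Your proposal is correct and follows essentially the same route as the paper's own proof: whiten the covariates, use the decomposition $\widehat{\mathbf{C}}^{-1}\hat b - b = \widehat{\mathbf{C}}^{-1}(\hat b - b) + \widehat{\mathbf{C}}^{-1}(\I - \widehat{\mathbf{C}})b$, control the covariance term by sub-Gaussian matrix concentration and the cross-moment term by a net-plus-Bernstein argument for the sub-exponential products $\inner{u,w_i}y_i$, and peel off $\lmin^{-\nicefrac{1}{2}}$ to return to the original coordinates. The only (immaterial) differences are cosmetic: you extract $\lmin^{-\nicefrac{1}{2}}$ at the outset rather than at the end, and you bound $\norm{b}$ via Cauchy--Schwarz and isotropy rather than through the sub-exponential norm of $w_iy_i$ as the paper does.
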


Proposition \ref{prop::ols-rate} is proved in the Supplementary Material.  Our main
result on the performance of $\betaOURhat$ is given next.  
\begin{theorem}
  \label{thm::main-bound}
  Let the assumptions of Theorem \ref{thm::population-bound} 
  and Proposition \ref{prop::ols-rate} hold with 
  $\mathbb{E}[\|\Sig^{-\nicefrac{1}{2}}x\|_2] = \tmu \sqrt{p}$. 
  Further assume that the function $f(z)= z\E{\ddcgf(\inner{x,\betaOLS}z)}$
  satisfies $f(\bar{c})>1+\bar{\delta}\sqrt{p}$ for some $\bar{c}$ 
  and $\bar{\delta}$ such that
  the derivative of $f$ in the interval $[0,\bar{c}]$
  does not change sign, i.e., its absolute value is lower bounded by $\upsilon>0$.
  Then, for $n$ and $|S|$ sufficiently large, 
  with probability at least $1-5e^{-p}$, we have
  \eq{\label{eq::main-bound-1}
    \normI{\betaOURhat - \betaGLM} 
    \leq \eta_1 \frac{1}{p} + 
    \eta_2 \sqrt{\frac{p}{\minn{n/\log(n),|S|}}},
  }
  where the constants $\eta_1$ and $\eta_2$ are defined by
  \eq{\label{eq::main-bound-2}
    \eta_1 =& \eta k \bar{c}\kappa^3\rho\|\Sig^{1/2}\|_\infty(\tau/r)^2\\
    \eta_2 \label{eq::main-bound-3}
    =&  \eta\bar{c}\lmin^{-1/2} 
    \left(1+ \upsilon^{-1}\lmin^{1/2} \|\betaOLS\|_\infty\maxx{(b+k/\tmu),
        k \bar{c}\kappa} \right),
  }
  and $\eta > 0$ is a constant depending on $\kappa$ and $\gamma$.
\end{theorem}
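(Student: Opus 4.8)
The plan is to bound $\normI{\betaOURhat - \betaGLM}$ by isolating three sources of error: the population-level proportionality error (supplied by Theorem \ref{thm::population-bound}), the error in estimating $\betaOLS$ (supplied by Proposition \ref{prop::ols-rate}), and the error in estimating the scaling constant, which is the only genuinely new quantity. Writing $\betaOURhat = \hc\,\betaOLShat$ and inserting $\c_\cgf\betaOLS$, I would use
\[
\normI{\betaOURhat - \betaGLM} \le \hc\,\normI{\betaOLShat - \betaOLS} + |\hc - \c_\cgf|\,\normI{\betaOLS} + \normI{\c_\cgf\betaOLS - \betaGLM}.
\]
The last term is exactly what Theorem \ref{thm::population-bound} controls: it is at most $\c_\cgf\,\eta/p \le \bar c\,\eta/p$, yielding the $\eta_1/p$ summand in \eqref{eq::main-bound-1}. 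The first term is bounded by Proposition \ref{prop::ols-rate} (via $\normI{\cdot}\le\norm{\cdot}$) together with $\hc\le\bar c$, producing the $\bar c\lmin^{-1/2}\sqrt{p/|S|}$ contribution to $\eta_2$ in \eqref{eq::main-bound-3}. Everything then reduces to bounding $|\hc-\c_\cgf|$.

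For the scaling constant, I would define the population map $f(z)=z\,\E{\ddcgf(z\inner{x,\betaOLS})}$, whose fixed point $f(\c_\cgf)=1$ plays the role of $\c_\cgf$, and its empirical analog $\hat f(z)=\tfrac{z}{n}\sum_{i=1}^{n}\ddcgf(z\,\hy_i)$, whose root is $\hc$ (Step 2 of Algorithm \ref{alg::1}). Because the derivative $f^{(1)}$ is assumed not to change sign on $[0,\bar c]$ with $|f^{(1)}|\ge\upsilon$, and both $\c_\cgf$ and $\hc$ lie in this interval, the mean value theorem gives
\[
|\hc - \c_\cgf| \le \upsilon^{-1}\,|f(\hc) - f(\c_\cgf)| = \upsilon^{-1}\,|f(\hc) - \hat f(\hc)| \le \upsilon^{-1}\sup_{z\in[0,\bar c]}|f(z)-\hat f(z)|.
\]
Thus the problem becomes a uniform deviation bound for $\hat f - f$ over a compact interval.

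To estimate this supremum I would insert the intermediate function $g(z)=\tfrac{z}{n}\sum_{i=1}^{n}\ddcgf(z\inner{x_i,\betaOLS})$ and split $\hat f-f=(\hat f-g)+(g-f)$. For the plug-in error $\hat f-g$ I would use that $\ddcgf$ is $k$-Lipschitz to get $|\hat f(z)-g(z)|\le \tfrac{k\bar c^2}{n}\sum_{i=1}^{n}|\inner{x_i,\betaOLShat-\betaOLS}|$; then Cauchy--Schwarz and $\tfrac1n\norm{\X(\betaOLShat-\betaOLS)}^2\lesssim\lmax\,\norm{\betaOLShat-\betaOLS}^2$ reduce this to Proposition \ref{prop::ols-rate}, giving a term at the $|S|$ rate. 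For the sampling error $g-f$, at each fixed $z$ the summands $\ddcgf(z\inner{x_i,\betaOLS})$ are sub-Gaussian (a Lipschitz image of the sub-Gaussian $\inner{x_i,\betaOLS}$), so sub-Gaussian concentration gives deviation of order $\sqrt{p/n}$ at confidence $1-e^{-p}$ pointwise; a net argument over $[0,\bar c]$, whose Lipschitz modulus in $z$ is controlled by $\max_i|\hy_i|\lesssim\sqrt{\log n}$ under sub-Gaussian tails, then upgrades this to a uniform bound of order $\sqrt{p/(n/\log n)}$. Multiplying the resulting bound on $|\hc-\c_\cgf|$ by $\normI{\betaOLS}$, taking the worse of the two rates as $\sqrt{p/\min\{n/\log n,|S|\}}$, and collecting the various Lipschitz, sub-Gaussian, and conditioning constants into the factor $\max\{(b+k/\tmu),\,k\bar c\kappa\}$ reproduces \eqref{eq::main-bound-2}--\eqref{eq::main-bound-3}, with the three failure events union-bounded to $5e^{-p}$.

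The main obstacle will be the uniform control of the sampling error $g-f$ with the correct simultaneous dependence on $p$ and $\log n$: one must marry per-point sub-Gaussian concentration (which delivers the $\sqrt{p/n}$ rate and the $e^{-p}$ confidence) with a modulus-of-continuity/net argument in $z$, where the extra $\sqrt{\log n}$ factor is forced by having only sub-Gaussian --- rather than bounded --- arguments $\hy_i$. A secondary point that must be checked is that $\hc$ actually lands in $[0,\bar c]$, so that the $\upsilon$-lower bound on $|f^{(1)}|$ applies; this is where the hypothesis $f(\bar c)>1+\bar\delta\sqrt p$ is used, since combined with the uniform deviation bound it forces $\hat f(\bar c)>1$, hence $\hc\le\bar c$, with high probability.
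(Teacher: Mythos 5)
Your proposal is correct, and its skeleton coincides with the paper's: the same triangle-inequality decomposition into the proportionality error (Theorem \ref{thm::population-bound}), the OLS estimation error (Proposition \ref{prop::ols-rate}), and the scaling-constant error; the same use of the intermediate value theorem plus the hypothesis $f(\bar c)>1+\bar\delta\sqrt p$ to place $\hc_\cgf$ in $[0,\bar c]$; and the same mean-value argument with the derivative lower bound $\upsilon$ to convert a deviation bound on $f$ into a bound on $|\hc_\cgf-\c_\cgf|$. Where you genuinely diverge is in the key uniform concentration step. The paper compares $\hat f(c,\betaOLShat)$ to the \emph{population} function evaluated at the \emph{estimated} coefficients, $f(c,\betaOLShat)$, and therefore needs concentration that is uniform over a $p$-dimensional ball $\Bols$ of coefficient vectors; this is its Lemma \ref{lem::concentration-g}, proved with bracketing functions, an $\e$-net over the ball, and sub-Gaussian Hoeffding bounds, with the $\log n$ factor emerging from the covering exponent. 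You instead insert the \emph{empirical} function at the \emph{population} coefficients, $g(z)=\tfrac zn\sum_i\ddcgf(z\inner{x_i,\betaOLS})$, splitting $\hat f-f=(\hat f-g)+(g-f)$: the plug-in term $\hat f-g$ is handled by the Lipschitz property of $\ddcgf$, Cauchy--Schwarz, and operator-norm control of the empirical Gram matrix (then Proposition \ref{prop::ols-rate}), while $g-f$ only requires concentration uniform over the one-dimensional interval $z\in[0,\bar c]$, with the $\log n$ factor coming from $\max_{i\le n}|\inner{x_i,\betaOLS}|\lesssim\sqrt{\log n + p}$. This buys you a substantially more elementary argument --- no $p$-dimensional empirical-process/bracketing lemma at all --- at the cost of two cosmetic discrepancies: (i) your route adds an extra high-probability event (Gram-matrix concentration on the full sample), so the failure probability constant comes out slightly larger than the paper's $5e^{-p}$ unless events are shared; and (ii) bounding $\tfrac1n\sum_i|\inner{x_i,v}|$ by $\sqrt{\lmax}\norm{v}$ introduces a condition-number factor into $\eta_2$ unless you work with the whitened quantity $\norm{\Sig^{1/2}(\betaOLShat-\betaOLS)}$, which the paper's proof of Proposition \ref{prop::ols-rate} bounds directly in \eqref{eq::ols-bound-scaled}. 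One shared blemish: like the paper, you silently identify the root of $f(z)=1$ with the constant $c_\cgf=1/\E{\ddcgf(\inner{x,\betaGLM})}$ of Theorem \ref{thm::population-bound}; these differ by an $\O{1/p}$ term that would need to be absorbed into $\eta_1$, but since the paper's own proof makes the identical conflation, this is not a gap relative to it.
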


Note that the convergence rate of the upper bound in 
\eqref{eq::main-bound-1} depends on the sum of 
the two terms, both of which 
are functions of the data dimensions $n$ and $p$.
The first term on the right in \eqref{eq::main-bound-1} comes from 
Theorem \ref{thm::population-bound},
which bounds the discrepancy between
$c_{\Psi} \times \betaOLS$ and $\betaGLM$.
This term is small
when $p$ is large, and it does not depend on the 
number of observations $n$.

The second term in the upper bound \eqref{eq::main-bound-1} comes
from estimating $\betaOLS$
and $c_\cgf$. This term is increasing in $p$, which reflects the fact
that estimating $\betaGLM$ is more challenging when $p$ is large. 
As expected, this term is decreasing in $n$ and $|S|$, 
i.e. larger sample size
yields better estimates.  
When the full OLS solution is used ($|S|=n$),
the second term becomes 
$\mathcal{O}(\sqrt{p\log(n)/n} )$,
which 
suggests that $n/ \log(n)$ should be at least of order $p$
for good performance.
Also, note that there is a theoretical threshold for the sub-sampling size $|S|$,
namely $\O{n/\log(n)}$, beyond which further sub-sampling provides no improvement.
This suggests that the sub-sampling size should be smaller than $\O{n/\log(n)}$.


\section{Converting One GLM to Another by Scaling}
\label{sec::conversion}
In this section, we describe an efficient algorithm to transform a generalized linear model 
to another.
It is often the case that a practitioner would like to change the loss function (equivalently the model)
he/she uses based on its performance. When the dataset is large, 
training a new model from the scratch is computationally inefficient and will be time consuming.
In the following, we will use the proportionality relation to transition between different loss functions.

Assume that a practitioner fitted a GLM using the loss function (or cumulant generating function) $\cgf_1$, 
but he/she would like to train a new model using the loss function $\cgf_2$.
Instead of maximizing the log-likelihood based on $\cgf_2$,
one can exploit the proportionality relation and obtain the coefficients
for the new GLM problem.
\begin{algorithm}[t]
  \caption{Conversion from one GLM to another}
  \label{alg::2}
  \begin{algorithmic}
    \STATE {\bfseries Input:} Data $(y_i,x_i)_{i=1}^n$,  and $\hat{\beta}^{\text{glm}}_1$
    \STATE {\bfseries Step 1. Compute $\hy = \X\hat{\beta}^{\normalfont\text{glm}}_1$, 
      and $\kappa = \frac{1}{n}\sumton \cgf_1^{(2)}(\hy_i)$.}
    \vspace{.1in}

    \STATE {\bfseries Step 2. Solve the following equation for $\rho \in \reals$:
      $\kappa
      = \frac{\rho}{n} \sum_{i=1}^n \cgf_2^{(2)}(\hy_i\rho)$}
    \STATE \hspace{.39in} Use Newton's root-finding method:
    \STATE \hspace{.65in} Initialize $\rho=1$;
    \STATE \hspace{.65in} Repeat until convergence:
    \vspace{.05in}
    \STATE \hspace{.75in} 
    $\rho \leftarrow \rho -
    \cfrac{\rho\, \frac{1}{n} \sum_{i=1}^n \cgf_2^{(2)}(\rho\, \hy_i)-\kappa}
    {\frac{1}{n} \sum_{i=1}^n \left\{\cgf_2^{(2)}(\rho\, \hy_i) 
        + \rho\, \hy_i \cgf_2^{(3)}(\rho\, \hy_i)\right\} } $. 
    \STATE \hspace{.55in} {\bfseries }
    \vspace{.01in}
    \STATE {\bfseries Output}: $\hat{\beta}^{\text{glm}}_2 = 
    \rho \times\hat{\beta}^{\text{glm}}_1$.
  \end{algorithmic}
\end{algorithm} 
Denote by $\betaGLM_1$ and $\betaGLM_2$ the GLM coefficients corresponding
to the loss functions $\cgf_1$ and $\cgf_2$, respectively.
We have
\eqn{
  \frac{1}{\c_{\cgf_1}}\betaGLM_1 = \frac{1}{\c_{\cgf_2}}\betaGLM_2 = \betaOLS,
}
that is, both coefficients are proportional to the OLS coefficients which does not depend on the loss function.
Therefore, these coefficients $\betaGLM_1$ and $\betaGLM_2$ are also proportional to each other and we can write
\eq{\label{eq::prop2glms}
  \betaGLM_2 = \frac{\c_{\cgf_2}}{\c_{\cgf_1}}\ \betaGLM_1 \coloneqq \rho\ \betaGLM_1,
}
where the proportionality constant between two GLM types turns out to be
the ratio between $c_{\cgf_1}$ and $c_{\cgf_2}$, i.e. $\rho = \c_{\cgf_2} / \c_{\cgf_1}$.
Using the definition of $\c_{\cgf_2}$, we write
\eqn{
  1 &= \c_{\cgf_2}\ \E{\cgf_2^{(2)}(\inner{x, \betaGLM_2})},\\
  & = \c_{\cgf_1}\rho\ \E{\cgf_2^{(2)}(\inner{x, \betaGLM_1} \rho)}.
}
Dividing the both sides by $\c_{\cgf_1}$ and using the equality $1/\c_{\cgf_1} = \E{\cgf_1^{(2)}(\inner{x, \betaGLM_1})}$, we obtain
\eqn{
  \E{\cgf_1^{(2)}(\inner{x, \betaGLM_1})} = 
  \rho\ \E{\cgf_2^{(2)}(\inner{x, \betaGLM_1} \rho)}.
}
The above equation only involves $\betaGLM_1$ as the coefficients
(which is already assumed to be known or fitted by the practitioner).
Therefore, if we solve it for the ratio $\rho$, 
we can estimate $\betaGLM_2$ by simply
using the proportionality relation given in \eqref{eq::prop2glms}. 

The procedure described above is summarized as Algorithm \ref{alg::2}. 
We emphasize that
this procedure does not require the computation of the OLS estimator which was the
main cost of SLS. The procedure only requires a per-iteration cost of $\O{n}$. 
In other words, conversion from one GLM type to another is much
simpler than obtaining the GLM coefficients from the scratch.

\section{Binary Classification with Proper Scoring Rules}
\label{sec::binary-classification}
In this section, we assume that for $i \in [n]$, 
the response is binary $y_i\ \in \{0,1 \}$.
The binary classification problem can be described 
by the following minimization of an empirical risk
\eq{
  \underset{\beta \in \reals^p}{\minimize}\ \frac{1}{n} \sum_{i=1}^n \ell(y_i;q(\inner{x_i,\beta})),
} 
where $\ell$ and $q$ are referred to as the loss and the link functions, respectively.
There are various loss functions that are used in practice. 
Examples include log-loss, boosting loss, square loss etc (See Table \ref{tab::loss}).
As before, we constrain our analysis on the canonical links. 
The concept of canonical links for binary classification is introduced by
\cite{buja2005loss}, and it is quite similar to the generalized linear problems.

\renewcommand{\arraystretch}{1.8}
\begin{table*}[t]\footnotesize
  \caption{Common loss functions and their canonical links}
  \label{tab::loss}
  \begin{center}
    \begin{sc}
      \begin{tabular}{|l|l|c|c|}
        \hline
        Name & Loss function: $\ell(y;q)$ & Weight: $w(q)$ & Canonical link: $q(z)$\\
        \hline
        Log-loss & $-y\log(q) - (1-y)\log(1-q)$ 
                                          &$\frac{1}{q(1-q)}$ &$\frac{1}{1+\exp(-z)}$\\

        Boosting loss  & $y(q^{-1}-1)^{1/2}+(1-y)(q^{-1}-1)^{-1/2}$ 
                                          &$\frac{1}{[q(1-q)]^{3/2}}$ &$\frac{1}{2}+\frac{z/2}{2(z^2/4+1)^{1/2}}$\\

        Square loss & $y(1-q)^2+(1-y)q^2$&$1$ &$\frac{1+z}{2}$\\ 
        \hline
      \end{tabular}
    \end{sc}
  \end{center}
\end{table*}
\renewcommand{\arraystretch}{1.2}
For any given loss function, we define the partial losses $\ell_k(\cdot) = \ell(y=k; \cdot)$ 
for $k \in \{0,1 \}$. Since we have a binary response variable, 
we can write any loss in the following format
\eqn{
  \ell(y;q) =& y \ell_1(q) + (1-y) \ell_0(q),\\
  = & y\left(\ell_1(q) - \ell_0 (q) \right) + \ell_0(q).
}
The above formulation is of the form of a generalized linear problem.
Before moving forward, we recall the concept of proper scoring in binary classification, 
which is sometimes referred to as Fisher consistency.
\begin{definition}[Proper scoring rules]
  Assume that $y\sim \text{Bernoulli}(\eta)$.
  If the expected loss $\E{\ell(y,q)}$ is minimized by $q = \eta$ for all $\eta \in(0,1)$,
  we call the loss function a proper scoring rule. 
\end{definition}

The following theorem by \cite{schervish1989general} provides a methodology for 
constructing a loss function
for the proper scoring rules.
\begin{theorem}[\cite{schervish1989general}]\label{thm::schervish}
  Let $w(dt)$ be a positive measure on $(0,1)$ that is finite on interval $(\e,1-\e)$ $\forall \e>0$. Then the following defines a proper scoring rule
  \eqn{
    \ell_1(q) = \int^{1}_q (1-t)w(dt), \text{ and } \ell_0(q)= \int^q_0 t w(dt).
  }
\end{theorem}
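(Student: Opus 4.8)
The plan is to reduce the claim to a one-dimensional minimization and then verify it by a direct difference computation rather than by differentiation, so that the argument goes through for an arbitrary positive measure $w$, which need not admit a density. Since $y\sim\text{Bernoulli}(\eta)$, writing $\ell(y;q) = y\,\ell_1(q) + (1-y)\,\ell_0(q)$ gives the expected loss
\[
L(q) := \E{\ell(y;q)} = \eta\,\ell_1(q) + (1-\eta)\,\ell_0(q),
\]
and it suffices to show $L(q)\geq L(\eta)$ for every $q\in(0,1)$, using the definitions $\ell_1(q)=\int_q^1 (1-t)\,w(dt)$ and $\ell_0(q)=\int_0^q t\,w(dt)$.

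First I would compute the increments of the partial losses directly from their integral representations. For any $q$ and $\eta$,
\[
\ell_1(q) - \ell_1(\eta) = -\int_\eta^q (1-t)\,w(dt), \qquad \ell_0(q) - \ell_0(\eta) = \int_\eta^q t\,w(dt),
\]
where $\int_\eta^q$ carries the usual sign convention when $q<\eta$. Substituting these into $L(q)-L(\eta) = \eta[\ell_1(q)-\ell_1(\eta)] + (1-\eta)[\ell_0(q)-\ell_0(\eta)]$ and collecting the integrands, the coefficient of $w(dt)$ simplifies as $-\eta(1-t) + (1-\eta)t = t-\eta$, which telescopes everything into the clean expression
\[
L(q) - L(\eta) = \int_\eta^q (t-\eta)\,w(dt).
\]

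The final step is a sign analysis of this integral. If $q>\eta$, the integration runs over $t\in[\eta,q]$ where $t-\eta\geq 0$, and since $w$ is a positive measure the integral is nonnegative; if $q<\eta$, reversing the limits yields $\int_q^\eta (\eta-t)\,w(dt)$ with $\eta-t\geq 0$ on $[q,\eta]$, again nonnegative. Hence $L(q)\geq L(\eta)$ for all $q$, so $q=\eta$ minimizes the expected loss and the pair $(\ell_0,\ell_1)$ is a proper scoring rule. I expect the only real subtlety to lie in the bookkeeping around the measure $w$: the tempting route of differentiating $L$ and setting $L'(q)=w(q)(q-\eta)=0$ works only when $w$ has a density, whereas the telescoping-difference argument avoids this entirely and uses nothing beyond positivity of $w$ together with its finiteness on every subinterval $(\e,1-\e)$, which is exactly what guarantees the integrals $\int_\eta^q$ are well defined.
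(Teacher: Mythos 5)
Your proof is correct, and it is worth noting that the paper itself contains no proof of this statement: the theorem is quoted from Schervish (1989) (via the proper-scoring-rule framework of Buja et al.), so your argument fills in what the paper leaves to a citation. Your computation is sound: the increments $\ell_1(q)-\ell_1(\eta)=-\int_\eta^q(1-t)\,w(dt)$ and $\ell_0(q)-\ell_0(\eta)=\int_\eta^q t\,w(dt)$ are finite because $w$ is finite on compact subintervals of $(0,1)$, the coefficient collapse $-\eta(1-t)+(1-\eta)t=t-\eta$ gives $L(q)-L(\eta)=\int_\eta^q(t-\eta)\,w(dt)$, and positivity of $w$ yields nonnegativity in both cases $q>\eta$ and $q<\eta$; since the paper's definition of proper scoring only requires $q=\eta$ to be \emph{a} minimizer (not the unique one), this suffices. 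For comparison, the classical route in Schervish and Buja et al. argues pointwise under the integral: one writes $\ell(y;q)$ as a mixture over $t$ of elementary cost-weighted misclassification losses $\ell^t(y;q)=y(1-t)\mathbf{1}\{q\le t\}+(1-y)t\,\mathbf{1}\{q>t\}$, verifies that each $\ell^t$ is proper (the expected elementary loss is minimized by thresholding, and $q=\eta$ lands on the correct side of $t$ for every $t$), and integrates against $w$; your telescoping-difference identity is exactly the integrated form of that pointwise argument, and it buys the same generality --- arbitrary positive measures, no density or differentiation needed --- in a single display. One small caveat you could add for completeness: if $w$ puts sufficient mass near $0$ or $1$, then $\ell_0$ or $\ell_1$ can be identically $+\infty$, and $L(q)-L(\eta)$ is then not a legitimate subtraction; this is repaired either by assuming the defining integrals are finite (implicit in all the paper's examples) or by stating your identity additively as $L(q)=L(\eta)+\int_\eta^q(t-\eta)\,w(dt)$, which holds in the extended reals because all terms are nonnegative, and from which the same conclusion follows.
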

The measure $w(dt)$ uniquely defines the loss function
(generally referred to as the weight function, since all losses can be written
as weighted average of cost weighted misclassification error 
\cite{buja2005loss, reid2010composite}).
Examples of weight functions is given in Table~\ref{tab::loss}.
The above theorem has many interesting interpretations;
one that is most useful to us is that $\ell_0^{(1)}(q) = qw(q)$. 

The notion of canonical links for proper scoring rules are introduced by \cite{buja2005loss},
which corresponds to the notion of matching loss \cite{helmbold1999relative,reid2010composite}.
The derivation of canonical links stems from the Hessian of the above minimization,
which remedies two potential problems: non-convexity and asymptotic variance inflation.
It turns out that by setting $w(q)q^{(1)}$ as constant, 
one can remedy both problems \cite{buja2005loss}. We will skip the derivation
and, without loss of generality, assume that the canonical link-loss pair satisfies $w(q)q^{(1)}=1$.
Note that any loss function has a natural canonical link. 
The following Theorem summarizes this concept.
\begin{theorem}[\cite{buja2005loss}]
  For proper scoring rules with $w>0$, there exists a canonical link function which is unique
  up to addition and multiplication by constants. Conversely, any link function is 
  canonical for a unique proper scoring rule.
\end{theorem}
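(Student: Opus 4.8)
The plan is to reduce the whole statement to the elementary analysis of the separable ordinary differential equation that defines canonicality, and to let Schervish's correspondence (Theorem~\ref{thm::schervish}) carry the rest. Following the convention set just before the statement, I take \emph{canonical} to mean that $w(q)\,q^{(1)}$ equals a positive constant $\lambda$ (the normalization $\lambda=1$ being one admissible choice), where $q^{(1)} = \d q/\d z$ and $w(q)$ is shorthand for $w(q(z))$. The two free constants that appear when solving this equation are exactly the source of the claimed non-uniqueness: the integration constant yields the additive freedom and the value of $\lambda$ yields the multiplicative freedom.

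For \textbf{existence}, given a proper scoring rule with weight $w>0$, I would solve $w(q)\,q^{(1)}=\lambda$ by separation of variables. Setting $G(q) = \int_{q_0}^{q} w(t)\,\d t$ for a base point $q_0 \in (0,1)$, the relation integrates to $G(q(z)) = \lambda z + C$. Because $w>0$ on $(0,1)$, the antiderivative $G$ is strictly increasing, hence a bijection from $(0,1)$ onto an open interval $I \subseteq \reals$; inverting gives $q(z) = G^{-1}(\lambda z + C)$, a strictly increasing differentiable link. This exhibits a canonical link and simultaneously delivers \textbf{uniqueness}: every canonical link for the same $w$ must satisfy $G(q(z)) = \lambda z + C$ for some $\lambda>0$ and $C \in \reals$, so any two of them agree after the affine reparametrization $z \mapsto \lambda z + C$ of the argument---precisely uniqueness up to multiplication (the scale $\lambda$) and addition (the shift $C$) by constants.

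For the \textbf{converse}, I start from an arbitrary link $q(z)$ that is strictly monotone and continuously differentiable with $q^{(1)}>0$. Inverting the canonical relation with $\lambda=1$, I set $u = q(z)$, i.e. $z = q^{-1}(u)$, and define the weight $w(u) = 1/q^{(1)}(q^{-1}(u))$, which is positive and continuous. By Theorem~\ref{thm::schervish} this $w$ generates a proper scoring rule through $\ell_1(q)=\int_q^1 (1-t)\,w(\d t)$ and $\ell_0(q)=\int_0^q t\,w(\d t)$, and by construction $q$ is its canonical link. Uniqueness of the scoring rule follows because Schervish's map from weights to losses is injective---indeed the identity $\ell_0^{(1)}(q) = q\,w(q)$ noted after Theorem~\ref{thm::schervish} recovers $w$ from the loss---while the canonical condition pins the weight down from $q$ up only to an overall positive scale, which does not change the scoring rule.

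The main obstacle is not the algebra but the domain and boundary bookkeeping. Whether $G$ maps $(0,1)$ onto all of $\reals$---so that the canonical link is defined on the entire line rather than a subinterval---hinges on the integrability of $w$ near the endpoints $0$ and $1$, and must be tracked separately. For the converse one has to verify that $w(u) = 1/q^{(1)}(q^{-1}(u))$ meets the local-finiteness hypothesis of Theorem~\ref{thm::schervish}; this follows from $q \in C^1$ with $q^{(1)}>0$, but it is the only place where the regularity assumptions on the link genuinely enter, and it is where I would concentrate the careful estimates.
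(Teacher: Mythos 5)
There is no proof in the paper to compare yours against: the statement is quoted from \cite{buja2005loss}, and the authors explicitly skip the derivation, recording only the normalization $w(q)q^{(1)}=1$ and the identity $\ell_0^{(1)}(q)=qw(q)$ following Theorem~\ref{thm::schervish}. Judged on its own, your argument is correct and is the natural one. The forward direction --- integrating $w(q)q^{(1)}=\lambda$ to $G(q(z))=\lambda z+C$ with $G(q)=\int_{q_0}^{q}w(t)\,\d t$ strictly increasing --- gives existence, and it gives uniqueness exactly up to the affine freedom $(\lambda,C)$ claimed in the statement. The converse --- setting $w(u)=1/q^{(1)}(q^{-1}(u))$, invoking Theorem~\ref{thm::schervish} to build the loss, and using $\ell_0^{(1)}(q)=qw(q)$ to see that the weight, hence the loss, is recovered from the rule --- is likewise sound, and it is consistent with how the paper itself manipulates these objects in Section~\ref{sec::binary-classification}.

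Two points deserve sharper treatment. First, your remark that rescaling $w$ ``does not change the scoring rule'' is doing essential work: rescaling $w$ by $a>0$ rescales the loss to $a\ell$, so the converse uniqueness holds only modulo positive scaling of the loss (equivalently, after fixing the normalization $w(q)q^{(1)}=1$, as the paper does); without some such convention the claim is literally false, since $q$ is canonical for every weight $aw$. Second, the domain issue you flag is genuine and is visible in the paper's own Table~\ref{tab::loss}: for the square loss ($w\equiv 1$) the antiderivative $G$ maps $(0,1)$ onto a bounded interval, so the canonical link $q(z)=(1+z)/2$ is a probability only for $z\in[-1,1]$; correspondingly, in the converse one must require the given link to be a bijection onto $(0,1)$ so that $q^{-1}$, and hence $w$, is defined on all of $(0,1)$, and Lipschitz continuity of $q$ near the ends of its domain is what controls finiteness of $w$ on intervals $(\epsilon,1-\epsilon)$. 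Both points are resolved by conventions you state or gesture at, so they are gaps in exposition rather than in substance.
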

The canonical link for a given loss function can be explicitly derived from 
the equation $w(q)q^{(1)}=1$. We have provided some examples in Table~\ref{tab::loss}. 
Using the definition of canonical link for proper scoring rules, 
we write the normal equations $\frac{\d }{\d \beta} \E{\ell(y,q(\inner{x,\beta}))} =0 $ as
\eqn{
  \E{xq^{(1)}(\inner{x,\beta})\ell_0^{(1)}(q(\inner{x,\beta}))} = &
  \E{yxq^{(1)}(\inner{x,\beta}) \left(\ell_0^{(1)}(q(\inner{x,\beta})) - \ell_1^{(1)}(q(\inner{x,\beta}))\right) }, \\
  \E{xq^{(1)}(\inner{x,\beta})q(\inner{x,\beta})w(q(\inner{x,\beta}))}=& \E{yq^{(1)}(\inner{x,\beta})w(q(\inner{x,\beta}))},\\
  \E{xq(\inner{x,\beta})}=& \E{yx},\\
  \Sig\beta\E{q^{(1)}(\inner{x,\beta})} = & \E{yx}.
}
The last equation provides us with the analog of the proportionality relation
we observed in generalized linear problems.
In this case, we observe that the
proportionality constant becomes $1/\E{q^{(1)}(\inner{x,\beta})}$.
Therefore, our algorithm can be used to obtain a fast training procedure 
for the binary classification problems
under canonical links.

\section{Canonicalization of the Square Loss}
\label{sec::canonicalization}
In this section, we present a method to approximate the square loss with
a canonical form. 
Using this canonical approximation, we can use the techniques developed in previous sections to gain computational benefits.
Consider a minimization problem of the following form
\eq{
  \underset{\beta}{\text{minimize}}\  \frac{1}{n}\sumton[y_i-f(\inner{x_i,\beta})]^2.
}
The above problem is commonly encountered in many machine learning tasks --
specifically, 
in the context of neural networks, the function $f$ is called the activation function.
Here, we consider a toy example to demonstrate how our methodology can be useful in
a minimization problem of the above form.

We first use Taylor series expansion around a point $\theta$
(which should be close to $\inner{x,\beta}$),
in order to approximate the function $f(z)$ with a linear function around $f(\theta)$.
This way, the square loss can be approximated with a generalized linear loss. We write
\eq{
  \min_\beta (y-f(\inner{x,\beta}))^2
  &=\min_\beta  f(\inner{x,\beta})^2- 2yf(\inner{x,\beta})  \\\nonumber
  &\approx \min_\beta \frac{f(\inner{x,\beta})^2}{2f'(\theta)}-y \inner{x,\beta} .
}
Then, we would have
\eq{
  \cgf(z) = \frac{f(z)^2}{2f'(\theta)},
}
and the proportionality relation given in previous sections would hold approximately.
The above approximation will be accurate when the activation function
is smooth around the user-specified point $\theta$.
We suggest to use $\theta=0$ since when $p$ is large and $\beta$ is well-spread, 
the inner product $\inner{x,\beta}$ should be close to its expectation $\E{\inner{x,\beta}}=0$.
This method can be used to derive proportionality
relations for GLMs with non-canonical links (conditional on link being nice),
and also may be of interest in non-convex optimization.


\section{Experiments}
\label{sec::experiments}
\begin{figure}[t]
\centering
  \includegraphics[width=.9\linewidth]{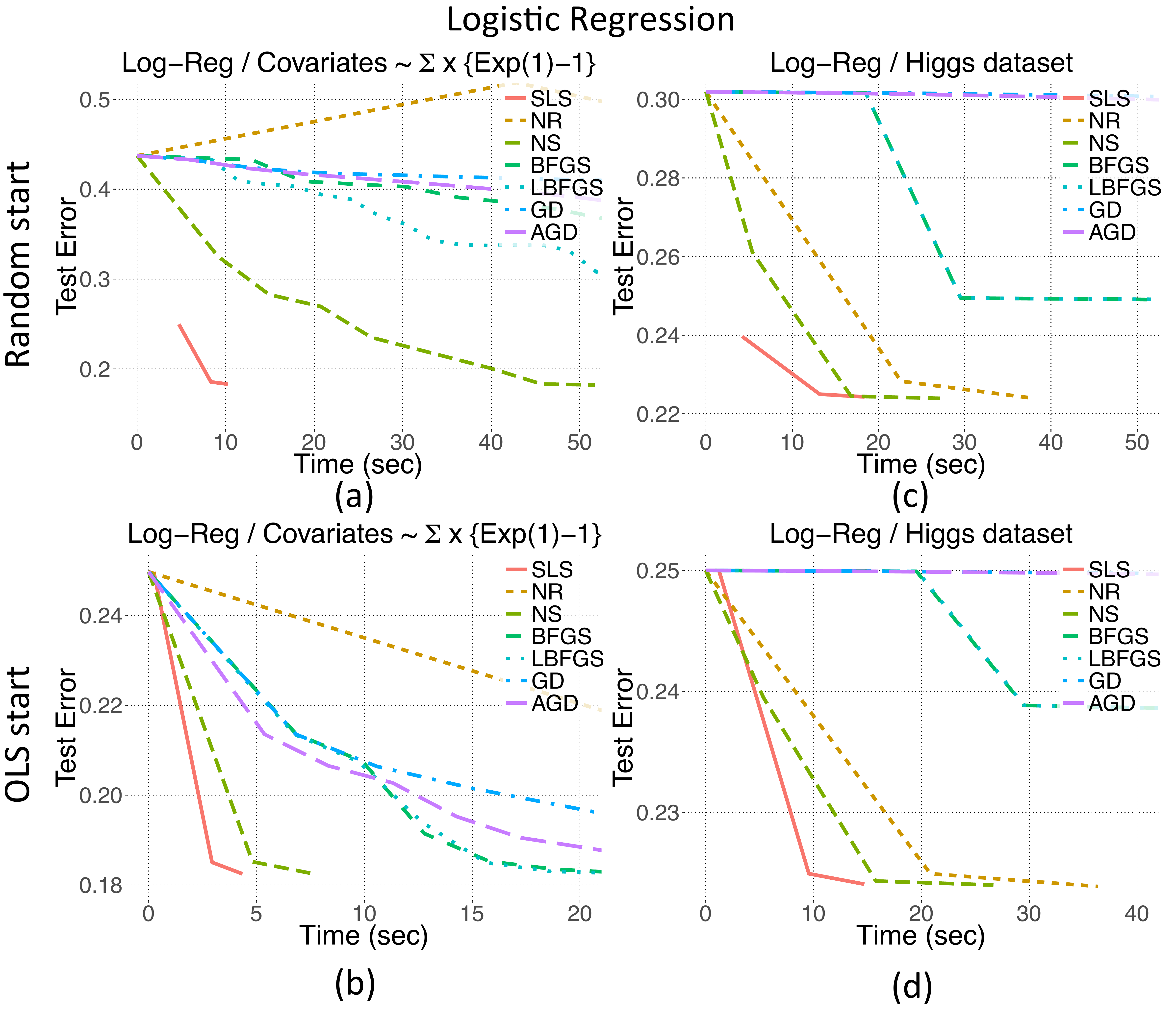}
  \caption{
    \label{fig::benchmark}
We compared the
    performance of SLS to that of MLE for the logistic regression problem on several datasets.
MLE optimization is solved by various
    optimization algorithms. 
    SLS is represented with red straight line.
    The details are provided in Table \ref{tab::results}.
  }
\end{figure}
This section contains the results of a variety of
numerical studies, which show that the Scaled Least Squares estimator 
reaches the minimum achievable test error
substantially faster than commonly used batch algorithms for finding
the MLE.  Both logistic and Poisson regression models (two types of
GLMs) are
utilized in our analyses, which are based on several synthetic and real datasets.

Below, we briefly describe the optimization algorithms 
for the MLE that were used in the experiments.

\begin{figure}[t]
  \centering
  \includegraphics[width=.9\linewidth]{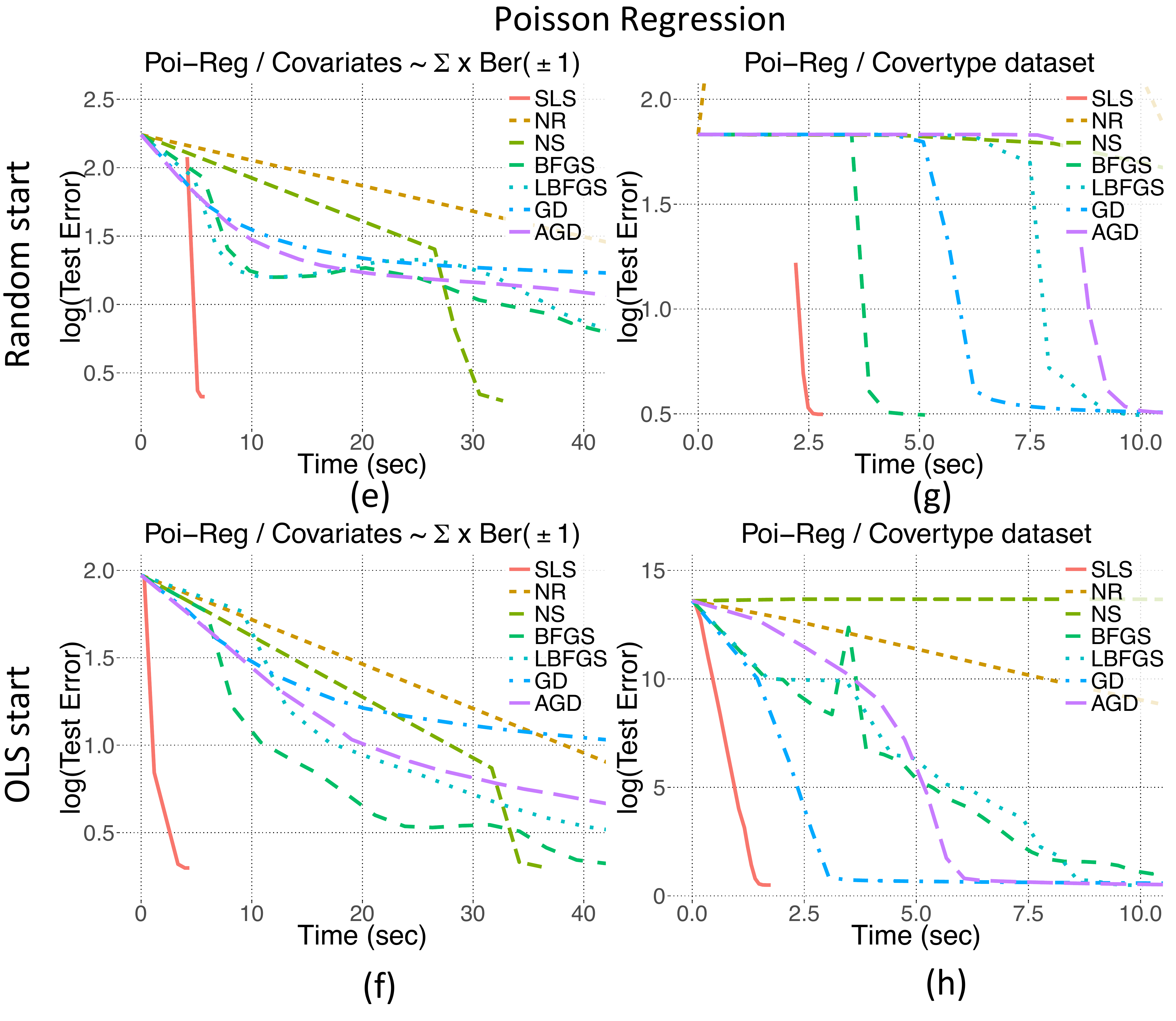}
  \caption{
    \label{fig::benchmark-1}
    We compared the
    performance of SLS to that of MLE for the Poisson regression problem on several datasets.
    MLE optimization is solved by various
    optimization algorithms. 
    SLS is represented with red straight line.
    The details are provided in Table \ref{tab::results}.
  }
\end{figure}
\begin{enumerate}
\item \textbf{Newton-Raphson (NR)}
  achieves locally quadratic convergence 
  by scaling the gradient by
  the inverse of the Hessian evaluated at the current iterate. 
  Computing the Hessian has a per-iteration cost of $\O{np^2}$,
  which makes it impractical for large-scale datasets. 
\item \textbf{Newton-Stein (NS)} is a recently proposed 
  second-order batch algorithm specifically designed for GLMs 
  \cite{erdogdu2015newton-stein, erdogdu2015newton-stein-long}. 
  The algorithm uses Stein's lemma and sub-sampling to efficiently estimate 
  the Hessian with a cost of $\O{np}$
  per-iteration, achieving near quadratic rates.
\item \textbf{Broyden-Fletcher-Goldfarb-Shanno (BFGS)} is the most popular 
  and stable quasi-Newton method \cite{nesterov2004introductory}. 
  At each iteration, the gradient is scaled by a matrix that is formed
  by accumulating information from previous iterations and gradient computations. 
  The convergence is locally super-linear with 
  a per-iteration cost of $\O{np}$.
\item \textbf{Limited memory BFGS (LBFGS)} is a variant of BFGS, 
  which uses only the recent iterates and gradients to approximate the Hessian, 
  providing significant improvement in terms of memory usage. 
  LBFGS has many variants; we use the formulation given in \cite{bishop1995neural}.
\item \textbf{Gradient descent (GD)} takes a step in 
  the opposite direction of the gradient, evaluated at the current iterate. 
  Its performance strongly depends on the condition number of the design matrix. 
  Under certain assumptions, the convergence is 
  linear with $\O{np}$ per-iteration cost.
\item \textbf{Accelerated gradient descent (AGD)} 
  is a modified version of gradient descent with 
  an additional ``momentum'' term \cite{nesterov1983method}. 
  Its per iteration cost is $\O{np}$ and
  its performance strongly depends on the smoothness of the objective function.
\end{enumerate}
For all the algorithms for computing the MLE, 
the step size at each iteration is chosen via 
the backtracking line search \cite{Boyd:2004}.

Recall that the proposed Algorithm \ref{alg::1} is composed of two
steps; the first finds an estimate of the OLS coefficients.
This up-front computation is not needed for any of the MLE algorithms
described above.  On the other hand, each of the MLE algorithms
requires some initial value for $\beta$, but no such initialization
is needed to find the OLS estimator in Algorithm \ref{alg::1}.  This
raises the question of how the MLE algorithms should be initialized,
in order to compare them fairly with the proposed method.
We consider two scenarios in our experiments:  first, we use the OLS
estimator computed for Algorithm \ref{alg::1} to initialize the MLE
algorithms; second, we use a random initial value.  

On each dataset, the main criterion for assessing the performance of
the estimators is how rapidly the minimum test error is achieved.
The test error is measured as the mean squared error of 
the estimated mean using the current parameters at each iteration on a test dataset, 
which is a randomly selected (and set-aside) 10\% portion of the entire dataset.
As noted previously, the MLE is more accurate for small $n$
(see Figure \ref{fig::mle-vs-our}).
However, in the regime considered here ($n \gg p \gg 1$), 
the MLE and the SLS perform very similarly in terms of their error
rates; for instance, on the Higgs dataset, the SLS and MLE
have test error rates of $22.40\%$
and $22.38\%$, respectively.  
For each dataset, the minimum achievable test error is set to be
the maximum of the final test errors, where the maximum is taken over
all of the estimation methods.  
Let $\Sig^{(1)}$ and $\Sig^{(2)}$ be two randomly generated covariance matrices. 
The datasets we analyzed were:  
(i) a synthetic dataset generated from a logistic regression model with
iid $\{\mbox{exponential}(1)\!-\!1\}$ predictors scaled by $\Sig^{(1)}$; 
(ii) the Higgs dataset
(logistic regression)
\cite{baldi2014searching}; 
(iii) a synthetic dataset generated
from a Poisson regression model with iid 
binary($\pm 1$) predictors
scaled by $\Sig^{(2)}$; 
(iv) the Covertype dataset (Poisson regression)
\cite{blackard1999comparative}.  

In all cases, the SLS outperformed the alternative algorithms for
finding the MLE by a large margin, in terms of computation.  Detailed
results may be found in Figures \ref{fig::benchmark} and \ref{fig::benchmark-1},
and Table \ref{tab::results}.
We provide additional experiments with different datasets 
in the Supplementary Material.

\begin{table*}[t]\scriptsize
  \caption{\small 
    \label{tab::results} Details of the experiments shown in Figures \ref{fig::benchmark} and \ref{fig::benchmark-1}.
  }
  \vspace{-.15in}
  \begin{center}
    \begin{sc}
      \begin{tabular}{l|c|c|c|c|c|c|c|c}
        Model &\multicolumn{4}{c}{Logistic regression}\vline
        &\multicolumn{4}{c}{Poisson regression} \\
        \hline
        Dataset&\multicolumn{2}{c}{$\Sig\times$\{Exp(1)-1\}}\vline
        &\multicolumn{2}{c}{Higgs {\tiny\cite{baldi2014searching}}}\vline
               &\multicolumn{2}{c}{$\Sig\times$Ber($\pm 1$)}\vline
        &\multicolumn{2}{c}{Covertype {\tiny \cite{blackard1999comparative}}} \\
        \hline
        Size &\multicolumn{2}{c}{{\scriptsize $n=6.0\times 10^5$, $p=300$}}\vline
        &\multicolumn{2}{c}{{\scriptsize $n=1.1\!\times\! 10^7$, $p=29$}}\vline
               &\multicolumn{2}{c}{{\scriptsize $\!n=6.0\!\times\! 10^5$, $p=300$\!}}\vline
        &\multicolumn{2}{c}{{\scriptsize $\!n=5.8\!\times\! 10^5$, $p=53$\!\!\!\!}}\\
        \hline
        Initialized\!\!\!\!&{\scriptsize Rnd} &{\scriptsize Ols}&
                                                                  {\scriptsize Rnd} &{\scriptsize Ols}&
                                                                                                        {\scriptsize Rnd} &{\scriptsize Ols}&
                                                                                                                                              {\scriptsize Rnd} &{\scriptsize Ols}\\
        \hline
        Plot\!&{\scriptsize (a)}&
                                  {\scriptsize (b)} &{\scriptsize (c)}&
                                                                        {\scriptsize (d)} &{\scriptsize (e)}&
                                                                                                              {\scriptsize (f)} &{\scriptsize (g)}&{\scriptsize (h)} \\
        \hline
        Method& 
                \multicolumn{8}{c}{Time in seconds / number of iterations \ 
                {\scriptsize (to reach min test error)}}\\
        \hline
        \ \ Sls  &8.34/4   & 2.94/3 & 13.18/3 &9.57/3& 5.42/5    &3.96/5   & 2.71/6   &1.66/20\\
        \ \ Nr   &301.06/6 & 82.57/3 & 37.77/3 &36.37/3& 170.28/5  &130.1/4  & 16.7/8   &32.48/18\\
        \ \ Ns   &51.69/8  & 7.8/3 & 27.11/4 &26.69/4& 32.71/5   &36.82/4  & 21.17/10 &\!\!282.1/216\!\!\!\!\\
        \ \ Bfgs &148.43/31& 24.79/8& 660.92/68 &701.9/68& 67.24/29  &72.42/26 & 5.12/7   &22.74/59\\
        \ \ Lbfgs&125.33/39& 24.61/8&\!\!6368.1/651\!\!&\!\!6946.1/670\!\!&\!224.6/106\!&\!\!357.1/88\!\!& 10.01/14 &10.05/17\\
        \ \ Gd   & 669/138 & 134.91/25 &\!\!\!100871/10101\!\!\!\!&\!\!\!141736/13808\!\!\!\!& 1711/513 &\!\!1364/374\!\!& 14.35/25 &33.58/87\\
        \ \ Agd  & 218.1/61& 35.97/12&\!\! 2405.5/251\!\! &\!\!2879.69/277\!\!& 103.3/51  &\!\!102.74/40\!\!& 11.28/15 &11.95/25\\
        \hline
      \end{tabular}
    \end{sc}
  \end{center}
  \vskip -0.1in
\end{table*}

\section{Discussion}  
\label{sec::discussion}
In this paper, 
we showed that 
the true minimizer of a generalized linear problem and the OLS estimator
are approximately proportional
under the general random design setting.
Using this relation,
we proposed a computationally efficient algorithm
for large-scale problems
that achieves the same accuracy as the empirical risk minimizer
by first estimating the OLS coefficients and
then estimating the proportionality constant through iterations that 
can attain quadratic or cubic convergence rate,
with only $\O{n}$ per-iteration cost.

We briefly mentioned that the proportionality 
between the coefficients holds even when there is regularization
in Section \ref{sec::regularization}.
Further pursuing this idea may be interesting for 
large-scale problems where regularization is crucial.
Another interesting line of research is to find similar
proportionality relations between the parameters
in other large-scale optimization problems such as 
support vector machines.
Such relations may reduce the problem complexity significantly.

%
\newpage

\bibliographystyle{amsalpha}
{\small \bibliography{bib}}

\providecommand{\bysame}{\leavevmode\hbox to3em{\hrulefill}\thinspace}
\providecommand{\MR}{\relax\ifhmode\unskip\space\fi MR }
\providecommand{\MRhref}[2]{%
  \href{http://www.ams.org/mathscinet-getitem?mr=#1}{#2}
}
\providecommand{\href}[2]{#2}
\begin{thebibliography}{DMMS11}

\bibitem[BD99]{blackard1999comparative}
J.~A. Blackard and D.~J. Dean, \emph{{Comparative accuracies of artificial
  neural networks and discriminant analysis in predicting forest cover types
  from cartographic variables}}, Comput. Electron. Agr. \textbf{24} (1999),
  131--151.

\bibitem[Bis95]{bishop1995neural}
C.~M. Bishop, \emph{{Neural Networks for Pattern Recognition}}, Oxford
  University Press, 1995.

\bibitem[Bri82]{brillinger2012generalized}
D.~R Brillinger, \emph{{A generalized linear model with "Gaussian" regressor
  variables}}, A Festschrift For Erich L. Lehmann, CRC Press, 1982,
  pp.~97--114.

\bibitem[Bro70]{broyden1970convergence}
Charles~G Broyden, \emph{The convergence of a class of double-rank minimization
  algorithms 2. the new algorithm}, IMA Journal of Applied Mathematics
  \textbf{6} (1970), no.~3, 222--231.

\bibitem[BSS05]{buja2005loss}
Andreas Buja, Werner Stuetzle, and Yi~Shen, \emph{Loss functions for binary
  class probability estimation and classification: Structure and applications}.

\bibitem[BSW14]{baldi2014searching}
Pierre Baldi, Peter Sadowski, and Daniel Whiteson, \emph{Searching for exotic
  particles in high-energy physics with deep learning}, Nature communications
  \textbf{5} (2014).

\bibitem[BV04]{Boyd:2004}
S.~Boyd and L.~Vandenberghe, \emph{{Convex Optimization}}, Cambridge University
  Press, 2004.

\bibitem[CGS10]{chen2010normal}
L.~H.~Y. Chen, L.~Goldstein, and Q.-M. Shao, \emph{{Normal approximation by
  Stein's method}}, Springer, 2010.

\bibitem[DLFU13]{dhillon2013subsampling}
Paramveer Dhillon, Yichao Lu, Dean~P Foster, and Lyle Ungar, \emph{New
  subsampling algorithms for fast least squares regression}, Advances in Neural
  Information Processing Systems, 2013, pp.~360--368.

\bibitem[DMMS11]{Drineas:2011}
Petros Drineas, Michael~W. Mahoney, S.~Muthukrishnan, and Sarl\`{o}s,
  \emph{Faster least squares approximation}, Numer. Math. \textbf{117} (2011),
  no.~2.

\bibitem[EBD16]{erdogdu2016scaled}
Murat~A Erdogdu, Mohsen Bayati, and Lee~H Dicker, \emph{Scaled least squares
  estimator for glms in large-scale problems}, Advances in Neural Information
  Processing Systems, 2016.

\bibitem[EM15]{erdogdu2015convergence}
Murat~A Erdogdu and Andrea Montanari, \emph{Convergence rates of sub-sampled
  newton methods}, Advances in Neural Information Processing Systems, 2015,
  pp.~3052--3060.

\bibitem[Erd15a]{erdogdu2015newton-stein}
Murat~A Erdogdu, \emph{Newton-stein method: A second order method for glms via
  stein's lemma}, Advances in Neural Information Processing Systems, 2015,
  pp.~1216--1224.

\bibitem[Erd15b]{erdogdu2015newton-stein-long}
\bysame, \emph{Newton-stein method: An optimization method for glms via stein's
  lemma}, arXiv preprint arXiv:1511.08895 (2015).

\bibitem[Fis36]{fisher1936use}
R.~A. Fisher, \emph{{The use of multiple measurements in taxonomic problems}},
  Annals Eugenic \textbf{7} (1936), 179--188.

\bibitem[Fle70]{fletcher1970new}
Roger Fletcher, \emph{A new approach to variable metric algorithms}, The
  computer journal \textbf{13} (1970), no.~3, 317--322.

\bibitem[Gol70]{goldfarb1970family}
Donald Goldfarb, \emph{A family of variable-metric methods derived by
  variational means}, Mathematics of computation \textbf{24} (1970), no.~109,
  23--26.

\bibitem[Gol07]{goldstein2007bounds}
L.~Goldstein, \emph{$l^1$ bounds in normal approximation}, Annals Probability
  \textbf{35} (2007), 1888--1930.

\bibitem[GR97]{goldstein1997stein}
L.~Goldstein and G.~Reinert, \emph{{Stein's method and the zero bias
  transformation with application to simple random sampling}}, Annals of
  Applied Probability \textbf{7} (1997), 935--952.

\bibitem[HKW99]{helmbold1999relative}
David~P Helmbold, Jyrki Kivinen, and Manfred~K Warmuth, \emph{Relative loss
  bounds for single neurons}, IEEE Transactions on Neural Networks \textbf{10}
  (1999), no.~6, 1291--1304.

\bibitem[HS52]{hestenes1952methods}
M.~R. Hestenes and E.~Stiefel, \emph{Methods of conjugate gradients for solving
  linear systems}, J. Res. Nat. Bur. Stand. \textbf{49} (1952), 409--436.

\bibitem[KF09]{koller2009probabilistic}
D.~Koller and N.~Friedman, \emph{{Probabilistic Graphical Models: Principles
  and Techniques}}, MIT press, 2009.

\bibitem[LD89]{li1989regression}
K.-C. Li and N.~Duan, \emph{{Regression analysis under link violation}}, Annals
  of Statistics \textbf{17} (1989), 1009--1052.

\bibitem[LD09]{li2009dimension}
Bing Li and Yuexiao Dong, \emph{Dimension reduction for nonelliptically
  distributed predictors}, The Annals of Statistics (2009), 1272--1298.

\bibitem[Li91]{li1991sliced}
Ker-Chau Li, \emph{Sliced inverse regression for dimension reduction}, Journal
  of the American Statistical Association \textbf{86} (1991), no.~414,
  316--327.

\bibitem[Mar10]{martens2010deep}
James Martens, \emph{Deep learning via hessian-free optimization}, Proceedings
  of the 27th International Conference on Machine Learning (ICML-10), 2010,
  pp.~735--742.

\bibitem[MN89]{mccullagh1989generalized}
P.~McCullagh and J.~A. Nelder, \emph{{Generalized Linear Models}}, 2nd ed.,
  Chapman and Hall, 1989.

\bibitem[NB72]{nelder1972generalized}
John~A Nelder and R.~Jacob Baker, \emph{Generalized linear models}, Wiley
  Online Library, 1972.

\bibitem[Nes83]{nesterov1983method}
Y.~Nesterov, \emph{{A method of solving a convex programming problem with
  convergence rate $O(1/k^2)$}}, Soviet Math. Dokl. \textbf{27} (1983),
  372--376.

\bibitem[Nes04]{nesterov2004introductory}
\bysame, \emph{{Introductory Lectures on Convex Optimization: A Basic Course}},
  Springer, 2004.

\bibitem[PS75]{paige1975solution}
C.~C. Paige and M.~A. Saunders, \emph{Solution of sparse indefinite systems of
  linear equations}, SIAM Journal of Numerical Analysis \textbf{12} (1975),
  617--629.

\bibitem[PV15]{plan2015generalized}
Y.~Plan and R.~Vershynin, \emph{The generalized lasso with non-linear
  observations}, 2015, arXiv preprint arXiv:1502.04071.

\bibitem[PW15]{pilanci2015newton}
Mert Pilanci and Martin~J Wainwright, \emph{Newton sketch: A linear-time
  optimization algorithm with linear-quadratic convergence}, arXiv preprint
  arXiv:1505.02250 (2015).

\bibitem[RKM16]{roosta2016sub}
Farbod Roosta-Khorasani and Michael~W Mahoney, \emph{Sub-sampled newton methods
  i: globally convergent algorithms}, arXiv preprint arXiv:1601.04737 (2016).

\bibitem[RT08]{rokhlin2008fast}
V.~Rokhlin and M.~Tygert, \emph{A fast randomized algorithm for overdetermined
  linear least-squares regression}, P. Natl. Acad. Sci. \textbf{105} (2008),
  13212--13217.

\bibitem[RW10]{reid2010composite}
Mark~D Reid and Robert~C Williamson, \emph{Composite binary losses}, Journal of
  Machine Learning Research \textbf{11} (2010), no.~Sep, 2387--2422.

\bibitem[Sch89]{schervish1989general}
Mark~J Schervish, \emph{A general method for comparing probability assessors},
  The Annals of Statistics (1989), 1856--1879.

\bibitem[Sha70]{shanno1970conditioning}
David~F Shanno, \emph{Conditioning of quasi-newton methods for function
  minimization}, Mathematics of computation \textbf{24} (1970), no.~111,
  647--656.

\bibitem[TAH15]{thrampoulidis2015lasso}
Christos Thrampoulidis, Ehsan Abbasi, and Babak Hassibi, \emph{Lasso with
  non-linear measurements is equivalent to one with linear measurements},
  Advances in Neural Information Processing Systems, 2015, pp.~3420--3428.

\bibitem[Ver10]{vershynin2010introduction}
R.~Vershynin, \emph{Introduction to the non-asymptotic analysis of random
  matrices}, 2010, arXiv:1011.3027.

\bibitem[WJ08]{wainwright2008graphical}
M.~J. Wainwright and M.~I. Jordan, \emph{Graphical models, exponential
  families, and variational inference}, Foundations and Trends in Machine
  Learning \textbf{1} (2008), 1--305.

\end{thebibliography}
\newpage
\appendix


\section{Proof of Main Results}
In this section, we provide the details and the proofs of our technical results.
For convenience, we briefly state the following definitions.

\begin{definition}[Sub-Gaussian]
  For a given constant $\kappa$, 
  a random variable $x\in \reals$ is said to be \emph{sub-Gaussian} 
  if it satisfies
  \[
    \sup_{m \geq 1}m^{\nicefrac{-1}{2}}
    \E{|x|^m}^{1/m} \leq \kappa.
  \]
  Smallest such $\kappa$ is the sub-Gaussian norm of $x$ and 
  it is denoted by $\|x\|_\sg$. 
  Similarly, a random vector $y \in \reals^p$ is a 
  \emph{sub-Gaussian vector} if there exists a constant $\kappa'$
  such that
  \eqn{
    \sup_{v \in S^{p-1}}\| \< y, v\>\|_\sg \leq \kappa'.
  }
\end{definition}
\begin{definition}[Sub-exponential]
  For a given constant $\kappa$, 
  a random variable $x\in \reals$  is called \emph{sub-exponential} if it satisfies
  \[
    \sup_{m \geq 1}m^{-1}\E{|x|^m}^{1/m} \leq \kappa .
  \]
  Smallest such $\kappa$ is the sub-exponential norm of 
  $x$ and it is denoted by $\|x\|_\se$. 
  Similarly, a random vector $y \in \reals^p$ is a 
  \emph{sub-exponential vector} if there exists a constant $\kappa'$
  such that
  \eqn{
    \sup_{v \in S^{p-1}}\| \< y, v\>\|_\se \leq \kappa'.
  }
\end{definition}

We start with the proof of Theorem \ref{thm::population-bound}.

\begin{proof}[Proof of Theorem \ref{thm::population-bound}]
  For simplicity, we denote the whitened covariate by $w = \Sig^{-1/2}x$.
  Since $w$ is sub-Gaussian with norm $\kappa$, its $j$-th entry $w_j$
  has bounded third moment. That is,
  \eq{
    \kappa =& \sup_{\norm{u}=1}\normSG{\inner{u,w}},\\ \nonumber
    \geq & \normSG{w_j}= \sup_{m \geq 1}m^{-1/2}\E{|w_j|^m}^{1/m},\\
    \geq & \frac{1}{\sqrt{3}}\E{|w_j|^3}^{1/3},\nonumber
  }
  where in the first step, we used $u=e_j$,
  the $j$-th standard basis vector.
  Hence, we obtain a bound on the third moment, i.e, 
  \eq{
    \label{eq:bound-third-moment}
    \max_j\E{|w_j|^3} \leq 3^{3/2}\kappa^3.
  }
  Using the normal equations, we write
  \eq{\label{eq::normal-whitened-2}
    \E{yx} = \E{x \dcgf(\<x,\beta\>)} 
    =& \Sig^{1/2}\E{w \dcgf(\<w,\Sig^{1/2}\beta\>)},\\
    =&  \Sig^{1/2}\nonumber
    \E{w \dcgf(\<w,\tbeta\>)},
  }
  where we defined $\tbeta=\Sig^{1/2}\beta$.
  By multiplying both sides with $\Sig^{-1}$, we obtain
  \eq{\label{eq::ols-normal-0}
    \betaOLS =\Sig^{-1/2} \E{w \dcgf(\<w,\tbeta\>)}.
  }

  Now we define the partial sums 
  $W_{-i}=\sum_{j\neq i} \tbeta_jw_j = \inner{\tbeta, w} - \tbeta_i w_i$. 
  We will focus on the $i$-th entry of the above expectation given in
  \eqref{eq::ols-normal-0}.
  Denoting the zero biased transformation of $w_i$ conditioned on $W_{-i}$ by $w_i^*$, 
  we have
  \eq{\label{eq::zero-bias-ineq}
    \E{w_i\dcgf(\< w, \tbeta\>)} = &
    \E{\E{w_i\dcgf\left(\tbeta_i w_i+W_{-i}\right)\big | W_{-i}} },\\ \nonumber
    = & \tbeta_i\E{ \ddcgf(\tbeta_i w_i^* + W_{-i})},\\\nonumber
    = & \tbeta_i\E{ \ddcgf(\tbeta_i (w_i^*-w_i) +  \<w,\tbeta\>)},
  }
  where in the second step, we used the assumption on conditional moments.
  Let $\D$ be a diagonal matrix with diagonal entries
  $\D_{ii} = \E{ \ddcgf(\tbeta_i (w_i^*-w_i) +  \<w,\tbeta\>)}$.
  Using \eqref{eq::ols-normal-0} together with \eqref{eq::zero-bias-ineq}, 
  we obtain the equality
  \eq{\label{eq::beta-ols-zero-bias}
    \betaOLS =&\Sig^{-1/2} \D\tbeta,\\
    =&\Sig^{-1/2} \D\Sig^{1/2}\beta.\nonumber
  }

  Now, using the Lipschitz continuity assumption of the variance function,
  we have
  \eq{\label{eq::lipschitz-psi2}
    \left|\E{ \ddcgf(\tbeta_i (w_i^*-w_i) + \<w,\tbeta\>)}
      -\E{ \ddcgf( \<w,\tbeta\>)}\right| 
    \leq k |\tbeta_i| \E{|w_i^* -w_i|}.
  } 

  In the following, we will use the properties of zero-biased transformations.
  Consider the quantity
  \eq{
    r = \sup\frac{\E{|w^*_i-w_i|\big|W_{-i}}}{\E{|w_i|^3\big|W_{-i}}}
  }
  where $w^*_i$ has $w_i$-zero biased distribution (conditioned on $W_{-i}$) and
  the supremum is taken with respect to all random variables
  with mean 0, standard deviation 1 and finite third moment, 
  and $w^*_i$ is achieving the minimal $\ell_1$ coupling to $w_i$ conditioned on $W_{-i}$.
  It is shown in \cite{goldstein2007bounds} that the above bound holds for $r = 1.5$ 
  for the unconditional zero-bias transformations. 
  Here, we take a similar approach to show that the same bound holds for the conditional case as well.
  By using the triangle inequality, we have
  \eqn{
  \E{|w^*_i-w_i|\big|W_{-i}} \leq &  \E{|w^*_i|\big|W_{-i}} +   \E{|w_i|\big|W_{-i}}\\
  \leq &  \frac{1}{2}\E{|w_i|^3\big|W_{-i}} +   \E{|w_i|^3\big|W_{-i}}^{1/3}.
  }
  Since $\E{|w_i|^2\big|W_{-i}}$ is constant, it is equal to $\E{|w_i|^2} = 1$. This yields that
  the second term in the last line is upper bounded by $\E{|w_i|^3\big|W_{-i}}$. 
  Consequently, by taking expectations over both hand sides
   we obtain that
  \eqn{
    \E{|w^*_i-w_i|} \leq 1.5 \  \E{|w_i|^3} .
  }
  Then the right hand side of \eqref{eq::lipschitz-psi2} can be upper bounded by
  \eq{
    k |\tbeta_i| \E{|w_i^* -w_i|} \leq&
    r k \max_i\left\{|\tbeta_i|  \E{|w_i|^3}\right\},\\\nonumber
    \leq& 1.5k \normI{\Sig^{1/2}\beta} 3^{3/2} \kappa^3 ,\\\nonumber
    \leq& 8k \kappa^3\|\Sig^{1/2}\beta\|_\infty , \nonumber
  }
  where in the second step we used the bound on the third moment given in
  \eqref{eq:bound-third-moment}. The last inequality provides us with the
  following result,
  \eq{
    \max_i\left|\D_{ii} - \frac{1}{\c_\cgf}\right|
    \leq 8k \kappa^3\|\Sig^{1/2}\beta\|_\infty.
  }

  Finally, combining this with \eqref{eq::ols-normal-0} and \eqref{eq::beta-ols-zero-bias},
  we obtain
  \eq{
    \normI{\betaOLS - \frac{1}{\c_\cgf} \beta} = &
    \normI{\Sig^{-1/2} \D\Sig^{1/2}\beta - \frac{1}{\c_\cgf} \beta}, \\
    \nonumber
    =&\normI{\Sig^{-1/2} \left(\D -\frac{1}{\c_\cgf}\I\right)\Sig^{1/2}\beta},\\
    \nonumber
    \leq & \max_i\left|\D_{ii} - \frac{1}{\c_\cgf}\right|
    \normI{\Sig^{1/2}}\normI{\Sig^{-1/2}}\normI{\beta}^2,\\
    \nonumber
    \leq & 8k \kappa^3\rho(\Sig^{1/2})\|\Sig^{1/2}\|_\infty\frac{\tau^2 }{r^2p},
  }
  where in the last step, we used the assumption that $\beta$ is $r$-well-spread.
\end{proof}


\begin{proof}[Proof of Proposition \ref{prop::ols-rate}]
  For convenience, we denote the 
  whitened covariates with $\tx_i  = \Sig^{-1/2}x_i$.
  We have $\E{\tx_i} = 0$, $\E{\tx_i\tx_i^T} = \I$, 
  and $\normSG{\tx_i} \leq \kappa$.
  Also denote the
  sub-sampled covariance matrix with 
  $\SigH = \frac{1}{|S|}\sum_{i\in S} x_ix_i^T$,
  and its whitened version as
  $\SigT = \frac{1}{|S|}\sum_{i \in S} \tx_i\tx_i^T$.
  Further,
  define
  $\hz = \frac{1}{n}\sumton \tx_i y_i$ and
  $\zeta = \E{\tx y}$. Then, we have
  \eqn{
    \betaOLShat = \SigH^{-1}\Sig^{1/2}\hz\ \  \text{  and  }\ \ 
    \betaOLS = \Sig^{-1/2}\zeta.
  }

  For now, we work on the event that $\SigH$ is invertible.
  We will see that this event holds with very high probability.
  We write
  \eq{\label{eq::ols-main-bound}
    \norm{\Sig^{1/2}(\betaOLShat - \betaOLS)} =& 
    \norm{\Sig^{1/2}\SigH^{-1}\Sig^{1/2}\hz - \Sig^{-1/2} \zeta},\\
    \nonumber
    =& \norm{\SigT^{-1}\left\{\hz -\zeta +
        \left(\I- \Sig^{-1/2}\SigH\Sig^{-1/2}\right)\zeta\right\}},\\
    \nonumber
    \leq & \norm{\SigT^{-1}} \left\{
      \norm{\hz - \zeta} + \norm{\I- \SigT}\norm{\zeta}
    \right\},
  }
  where we used the triangle inequality and 
  the properties of the operator norm.

  For the first term on the right hand side of \eqref{eq::ols-main-bound}, 
  we write
  \eqn{
    \norm{\SigT^{-1}}
    =&  \frac{1}{ \lmin(\SigT)},\\
    \leq&\frac{1}{1 - \delta},
  }
  where we assumed that such a $\delta>0$ exists.
  In fact, when $\delta < 0.5$, we obtain a bound of $2$
  on the right hand side, which also justifies 
  the invertibility assumption of $\SigH$. 
  By Lemma \ref{lem::versh-sg} and the 
  following remark, we have with probability at least $1-2\expp{-p}$,
  \eqn{
    \norm{\SigT- \I } \leq c\sqrt{\frac{p}{|S|}},
  }
  where $c$ is a constant depending only on $\kappa$.
  When $|S| > 4c^2p$, we obtain
  \eqn{
    \abs{\lmin(\SigT) - 1} \leq 
    \norm{\SigT- \I } \leq 0.5,
  }
  where the first inequality follows from the Lipschitz property
  of the eigenvalues.

  Next, we bound the difference between $\hz$ and its expectation $\zeta$.
  We write the bounds on the sub-exponential norm
  \eq{
    \normSE{\tx y}  
    =& \sup_{\norm{v}=1}\sup_{m \geq 1} m^{-1}\E{|\inner{v,\tx}y|^m}^{1/m},\\
    \leq &\sup_{\norm{v}=1}\sup_{m \geq 1} \nonumber
    m^{-1}\E{|\inner{v,\tx}|^{2m}}^{1/2m}\E{|y|^{2m}}^{1/2m},\\
    \leq &\sup_{\norm{v}=1}\sup_{m \geq 1} \nonumber
    m^{-1/2}\E{|\inner{v,\tx}|^{2m}}^{1/2m}\sup_{m \geq 1}m^{-1/2}\E{|y|^{2m}}^{1/2m},\\
    \leq & 2\normSG{\tx}\normSG{y} = 2 \by \kappa .\nonumber
  }
  Hence, we have $\max_i\normSE{\tx_i y_i - \E{\tx_i y_i}} \leq 4\by \kappa $.
  Further, let $e_j$ denote the $j$-th standard basis,
  and notice that each entry of $\tx$ is also sub-Gaussian 
  with norm upper bounded by $\kappa$, i.e.,
  \eq{
    \kappa =\normSG{\tx} =& \sup_{\norm{u}=1}\normSG{\inner{u,\tx}},\\
    \geq& \normSG{\inner{e_j,\tx}} = \normSG{w_j}.\nonumber
  }
  Also, we can write
  \eq{
    2\gamma \kappa \geq \normSE{\tx y} = &\sup_{\norm{u}=1} 
    \sup_{m\geq 1}m^{-1}\E{\abs{\inner{u,\tx}y}^m}^{1/m},\\\nonumber
    \geq &\sup_{\norm{u}=1} \E{\abs{\inner{u,\tx}y}},\\\nonumber
    \geq &\sup_{\norm{u}=1} \E{{\inner{u,\tx}y}},\\\nonumber
    =& \sup_{\norm{u}=1} \inner{u,\zeta} = \norm{\zeta},\nonumber
  }
  where in the last step, we used the fact that dual norm of $\ell_2$ norm
  is itself.

  Next, we apply Lemma \ref{lem::vector-hoeff} to $\hz - \zeta$,
  and obtain with probability at least $1 - \expp{-p}$
  \eqn{
    \norm{\hz - \zeta} \leq c\gamma \kappa \sqrt{\frac{p}{n}}, 
  }
  whenever $n > c^2p$ for an absolute constant $c$.

  Combining the above results in \eqref{eq::ols-main-bound}, 
  we obtain with probability at least $1-3\expp{-p}$
  \eq{\label{eq::ols-bound-scaled}
    \norm{\Sig^{1/2}(\betaOLShat - \betaOLS)}
    \leq 2
    \left\{
      c_1\gamma \kappa\sqrt{\frac{p}{n}}  
      + c_2\gamma \kappa \sqrt{\frac{p}{|S|}}
    \right\}
    \leq
    \eta\sqrt{\frac{p}{|S|}}
  }
  where $\eta$ depends only on $\kappa$ and $\gamma$,
  and $|S| > \eta p$.
  Finally, we write
  \eqn{
    \norm{\betaOLShat - \betaOLS} \leq 
    &\lmin^{-1/2}\norm{\Sig^{1/2}(\betaOLShat - \betaOLS)},\\
    \leq& \eta
    \lmin^{-1/2}\sqrt{\frac{p}{|S|}},
  }
  with probability at least $1- 3\expp{-p}$, whenever $|S| > \eta p$.
\end{proof}

The following lemma -- combined with the Proposition \ref{prop::ols-rate} -- 
provides the necessary tools to 
prove Theorem \ref{thm::main-bound}.

\begin{lemma}\label{lem::concentration-c}
  For a given function $\ddcgf$ that is Lipschitz continuous
  with $k$, and uniformly bounded by $b$,
  we define the function $f:\reals \times \reals^p \to \reals$ as
  \eqn{
    f(c, \beta) = c\ \E{\ddcgf(\inner{x,\beta}c )},
  } 
  and its empirical counterpart as
  \eqn{
    \hat{f}(c,\beta) = c \ \frac{1}{n} \sum_{i=1}^n \ddcgf(\inner{x_i,\beta}c).
  }
  Assume that for some $\delta, \bar{c} >0$,  we have
  $f(\bar{c},\betaOLS ) \geq 1+\delta$.
  Then, $\exists c_\cgf >0 $ satisfying the equation
  \eqn{
    1 = f(c_\cgf,\betaOLS).
  }
  Further, 
  assume that for some $\tilde{\delta}>0$, 
  we have $\delta =  \tilde{\delta} \sqrt{p}$,
  and 
  $n$ and $|S|$ sufficiently large, i.e.,
  \eqn{
    \minn{\frac{n}{\log(n)},|S|} > K^2/\tilde{\delta}^2
  }
  for $K=\eta\bar{c}\maxx{b+\kappa/\tmu, k \bar{c} \kappa}$.
  Then, with probability $1-5\expp{-p}$,
  there exists a constant $\hc_\cgf \in (0,\bar{c})$ satisfying the equation
  \eqn{
    1 = \hc_\cgf\ \frac{1}{n} \sumton \ddcgf(\inner{x_i,\betaOLShat}\hc_\cgf).
  }
  Moreover, if the derivative of $z \to f(z,\betaOLS)$
  is bounded below in absolute value (i.e. does not change sign) 
  by $\upsilon >0$ in the interval $z \in [0,\bar{c}]$,
  then
  with probability $1-5\expp{-p}$, we have
  \eqn{
    \abs{\hc_\cgf - c_\cgf} \leq 
    C  \sqrt{\frac{p}{\minn{n/\logg{n},|S|}}},
  }
  where $C = K/\upsilon$.
\end{lemma}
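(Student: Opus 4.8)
The plan is to establish the lemma in three logically distinct parts: first the existence of the population fixed point $c_\cgf$, then the existence of the empirical fixed point $\hc_\cgf$, and finally the quantitative bound $|\hc_\cgf - c_\cgf|$. The unifying tool throughout will be a uniform concentration estimate showing that the empirical map $\hat f(\cdot,\betaOLShat)$ is close to the population map $f(\cdot,\betaOLS)$ on the interval $[0,\bar c]$, combined with elementary intermediate-value and mean-value arguments.

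For the existence of $c_\cgf$, I would argue directly from the intermediate value theorem. Since $\ddcgf$ is bounded and Lipschitz, the map $z \mapsto f(z,\betaOLS) = z\,\E{\ddcgf(\inner{x,\betaOLS}z)}$ is continuous, vanishes at $z=0$, and by hypothesis exceeds $1+\delta > 1$ at $z = \bar c$. Hence some $c_\cgf \in (0,\bar c)$ solves $f(c_\cgf,\betaOLS)=1$. The same scheme handles $\hc_\cgf$, but now I need to know that $\hat f(\bar c,\betaOLShat)$ still exceeds $1$ with high probability; this is where the concentration bound enters. The key estimate to prove is
\eqn{
  \sup_{z \in [0,\bar c]}\,\bigl|\hat f(z,\betaOLShat) - f(z,\betaOLS)\bigr|
  \ \leq\ K\,\sqrt{\frac{p}{\minn{n/\logg{n},|S|}}}
}
with probability at least $1-5\expp{-p}$. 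Once this holds, the gap $f(\bar c,\betaOLS)\ge 1+\tilde\delta\sqrt p$ dominates the fluctuation (precisely when $\minn{n/\log(n),|S|} > K^2/\tilde\delta^2$), so $\hat f(\bar c,\betaOLShat) > 1$ and the intermediate value theorem again yields $\hc_\cgf \in (0,\bar c)$.

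To get the final bound, I would write $1 = f(c_\cgf,\betaOLS) = \hat f(\hc_\cgf,\betaOLShat)$ and split the difference as
\eqn{
  0 = f(\hc_\cgf,\betaOLS) - f(c_\cgf,\betaOLS)
      + \bigl(\hat f(\hc_\cgf,\betaOLShat) - f(\hc_\cgf,\betaOLS)\bigr).
}
The first bracket is at least $\upsilon\,|\hc_\cgf - c_\cgf|$ in absolute value by the mean value theorem and the lower bound $\upsilon$ on $|f'|$ over $[0,\bar c]$; the second bracket is controlled by the uniform concentration estimate. Rearranging gives $|\hc_\cgf - c_\cgf| \le K\upsilon^{-1}\sqrt{p/\minn{n/\logg n,|S|}}$, which is the claim with $C = K/\upsilon$.

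\textbf{The main obstacle} is proving the uniform concentration bound, which requires decomposing the error into two sources: replacing $\betaOLShat$ by $\betaOLS$, and replacing the empirical average by its expectation. For the first, I would use Lipschitz continuity of $\ddcgf$ together with Proposition \ref{prop::ols-rate} to bound terms like $|\ddcgf(\inner{x_i,\betaOLShat}z) - \ddcgf(\inner{x_i,\betaOLS}z)| \le k\bar c\,|\inner{x_i,\betaOLShat-\betaOLS}|$, converting the design error into the $\sqrt{p/|S|}$ rate; the factor $\tmu\sqrt p = \E{\|\Sig^{-1/2}x\|_2}$ enters here through the typical size of $\inner{x_i, \cdot}$. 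For the second, since $\ddcgf$ is uniformly bounded by $b$, the summands $\ddcgf(\inner{x_i,\betaOLS}z)$ are bounded (sub-exponential after the inner product is controlled), so a concentration inequality for bounded/sub-exponential averages yields the $\sqrt{p/(n/\log n)}$ rate — the logarithmic factor arising from a union bound or truncation needed to handle the unbounded inner product. The delicate point is making the bound \emph{uniform} in $z \in [0,\bar c]$; I expect to handle this either via the Lipschitz-in-$z$ property of $\hat f$ (since $z\ddcgf(\inner{x_i,\beta}z)$ has derivative controlled by $b + \bar c\kappa\cdot\text{(Lipschitz data)}$, matching the $\maxx{b+\kappa/\tmu,k\bar c\kappa}$ structure inside $K$) combined with an $\e$-net over the compact interval, or by a direct chaining argument. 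Assembling the two rates into the single $\minn{\cdot}$ form and tracking all constants into $K$ is the bulk of the technical work.
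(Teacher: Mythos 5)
Your three-part skeleton --- the intermediate value theorem for $c_\cgf$, showing that the gap $f(\bar c,\betaOLS)\ge 1+\tilde\delta\sqrt p$ dominates the fluctuation to produce $\hc_\cgf$, and the mean-value inequality $\upsilon\,\abs{\hc_\cgf-c_\cgf}\le\abs{\hat f(\hc_\cgf,\betaOLShat)-f(\hc_\cgf,\betaOLS)}$ for the rate --- is exactly the structure of the paper's proof, and your identity $0=f(\hc_\cgf,\betaOLS)-f(c_\cgf,\betaOLS)+\big(\hat f(\hc_\cgf,\betaOLShat)-f(\hc_\cgf,\betaOLS)\big)$ is just a repackaging of the paper's telescoping. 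Where you genuinely diverge is in how the key uniform concentration estimate is proved. The paper splits $\hat f(c,\betaOLShat)-f(c,\betaOLS)$ as $\big(\hat f(c,\betaOLShat)-f(c,\betaOLShat)\big)+\big(f(c,\betaOLShat)-f(c,\betaOLS)\big)$: it conditions on the event $\betaOLShat\in\B^\delta_\Sig(\betaOLS)$ (via Proposition \ref{prop::ols-rate}), absorbs the scalar $c\in[0,\bar c]$ into a rescaled ball in $\reals^p$, and applies a $p$-dimensional bracketing/covering lemma (Lemma \ref{lem::concentration-g}) to the first term --- this is where the $\log n$ and $\tmu$ factors actually originate, not in the design-error term as you guessed --- while the Lipschitz comparison is done at the \emph{population} level, where the expectation is over a fresh $x$ and no dependence issues arise. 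You split in the reverse order: an \emph{empirical} Lipschitz term $\abs{\hat f(z,\betaOLShat)-\hat f(z,\betaOLS)}$ plus concentration at the fixed, deterministic $\betaOLS$, uniform over $z$ only. Your route buys a much simpler uniformity argument (a one-dimensional net over $[0,\bar c]$ instead of a covering of a ball in $\reals^p$) and would in fact recover the claimed rate without needing the full strength of Lemma \ref{lem::concentration-g}.

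The price of your ordering is a gap you must patch: in the empirical Lipschitz term, $\Delta=\betaOLShat-\betaOLS$ is random and is built from the \emph{same} $x_i$'s appearing in the average, so you cannot control $\frac{1}{n}\sum_{i=1}^n\abs{\inner{x_i,\Delta}}$ by citing Proposition \ref{prop::ols-rate} alone, as if $\Delta$ were fixed and independent of the data. The repair is standard but must be stated: bound deterministically by Cauchy--Schwarz, $\frac{1}{n}\sum_{i=1}^n\abs{\inner{x_i,\Delta}}\le \lmax\big(\tfrac{1}{n}\sum_{i=1}^n x_ix_i^T\big)^{1/2}\norm{\Delta}$, then control the operator norm of the empirical covariance by sub-Gaussian covariance concentration (e.g.\ Lemma \ref{lem::versh-sg}) and finally apply Proposition \ref{prop::ols-rate} to $\norm{\Delta}$. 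With that insertion, your argument closes and yields the stated bound $K\sqrt{p/\minn{n/\logg{n},|S|}}$ with $C=K/\upsilon$.
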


\begin{proof}[Proof of Lemma \ref{lem::concentration-c}]
  First statement is obvious. We notice that $f(c,\betaOLS)$ is a continuous function in
  its first argument with $f(0,\betaOLS) =0$ and $f(\bar{c},\betaOLS) \geq 1+\delta$.
  Hence, there exists $c_\cgf>0$ such that $f(c_\cgf,\betaOLS) = 1$.
  If there are many solutions to the above equation, we choose
  the one that is closest to zero. The condition on the derivative will 
  guarantee the uniqueness of the solution.

  Next, we will show the existence of $\hc_\cgf$ using
  a uniform concentration given by Lemma \ref{lem::concentration-g}.
  Define the ellipsoid centered around $\betaOLS$ with radius $\delta$,
  $$\B_\Sig^\delta(\betaOLS) = 
  \left\{\beta : \big\|\Sig^{1/2}(\beta - \betaOLS)\big \|_2 \leq \delta \right\},$$
  and the event $\Ev$ that $\betaOLShat$ falls into $\Bols$, i.e.,
  \eqn{
    \Ev = \left\{ \betaOLShat \in \Bols\right\}. 
  }
  By Proposition \ref{prop::ols-rate} and the inequality given in 
  \eqref{eq::ols-bound-scaled}, whenever
  $|S| > \eta p\maxx{1,\eta/\delta^{2}}$, we obtain
  \eqn{
    \P{\Ev^C} \leq 3\expp{-p},
  }
  where $\Ev^C$ denotes the complement of the event $\Ev$,
  and $\eta$ is a constant depending only on $\kappa$ and $\gamma$.
  For any $c \in [0, \bar{c}]$, on the event $\Ev$, we have
  \eqn{
    \abs{\hat{f}(c,\betaOLShat) - f(c,\betaOLShat)} 
    \leq \sup_{\beta \in \Bols}\abs{\hat{f}(c,\beta) - f(c,\beta)}.
  }
  Hence, we obtain the following inequality
  \eqn{
    \P{\sup_{c \in [0,\bar{c}]}\abs{\hat{f}(c,\betaOLShat) - f(c,\betaOLShat)} >\e }\leq&
    \P{\sup_{c \in [0,\bar{c}]}
      \abs{\hat{f}(c,\betaOLShat) - f(c,\betaOLShat)} >\e ;\Ev}+ \P{\Ev^C},\\
    \leq& \P{\sup_{c \in [0,\bar{c}]}
      \sup_{\beta \in \Bols}\abs{\hat{f}(c,\beta) - f(c,\beta)} >\e}
    + 3\expp{-p}. 
  } 
  In the following, we will use Lemma \ref{lem::concentration-g} 
  for the first term in the last line above. 
  Denoting by $w$, the whitened covariates,
  we have $\inner{x,\beta} = \inner{w,\Sig^{1/2}\beta}$.
  Therefore,
  \eqn{
    &\sup_{c \in [0,\bar{c}]}
    \sup_{\beta \in \Bols}\abs{\hat{f}(c,\beta) - f(c,\beta)}\\ 
    &\leq
    \bar{c}\sup_{c \in [0,\bar{c}]}\sup_{\beta \in \Bols} 
    \abs{\frac{1}{n}\sum_{i=1}^n\ddcgf(\inner{w_i,\Sig^{1/2}\beta}c)
      -\E{\ddcgf(\inner{w,\Sig^{1/2}\beta}c)}}.
  }
  Next, define the ball centered around $\betaOLStilde=\Sig^{1/2}\betaOLS$,
  with radius $\delta$ as $\Bolst = \Sig^{1/2}\Bols$.
  We have $\beta \in \Bols$ if and only if $\Sig^{1/2}\beta \in \Bolst$.
  Then, the right hand side of the above inequality can be written as
  \eqn{
    &\bar{c}\sup_{c \in [0,\bar{c}]}\sup_{\beta \in \Bolst} 
    \abs{\frac{1}{n}\sum_{i=1}^n\ddcgf(\inner{w_i,\beta}c)
      -\E{\ddcgf(\inner{w,\beta}c)}},\\
    &=\bar{c}\sup_{\beta \in \B_{\bar{c}\delta}(\tilde{\beta}^{\sf ols})} 
    \abs{\frac{1}{n}\sum_{i=1}^n\ddcgf(\inner{w_i,\beta})
      -\E{\ddcgf(\inner{w,\beta})}}.
  }
  Then, by Lemma \ref{lem::concentration-g},
  we obtain 
  \eq{
    \label{eq::bound-fhat-f}
    \P{\sup_{c \in [0,\bar{c}]}
      \abs{\hat{f}(c,\betaOLShat) - f(c,\betaOLShat)} >
      c'\bar{c}(b+\kappa/\tmu)
      \sqrt{\frac{p}{n/\logg{n}}} }
    \leq 5\expp{-p}
  }
  whenever $np > 51\maxx{\chi,\chi^{-1}}$ where 
  $\chi = (b+\kappa/\tmu)^2/(c'\delta^2k^2\bar{c}^2\tmu^2)$.

  Also, by the Lipschitz condition for $\ddcgf$, we have
  for any $c \in [0,\bar{c}]$, and $\beta_1, \beta_2$, 
  \eqn{ 
    \abs{f(c, \beta_1) - f(c, \beta_2)}
    \leq &k c^2  \E{\abs{\inner{w,\Sig^{1/2}(\beta_1 - \beta_2)}}} \\
    \leq & k \bar{c}^2 \kappa \norm{\Sig^{1/2}(\beta_1 - \beta_2)}.
  } 
  Applying the above bound for 
  $\beta_1 = \betaOLShat$ and $\beta_2 = \betaOLS$,
  we obtain with probability $1-3\expp{-p}$
  \eq{\label{eq::lipschitz-f}
    \abs{f(c, \betaOLShat) - f(c, \betaOLS)}
    \leq  \eta k \bar{c}^2 \kappa\sqrt{\frac{p}{|S|}},
  }
  where the last step follows from Proposition \ref{prop::ols-rate}
  and the inequality given in 
  \eqref{eq::ols-bound-scaled}.

  Combining this with the previous bound, 
  and taking into account that $\mu = \tmu\sqrt{p}$,
  for any $c \in [0,\bar{c}]$,
  with probability $1-5\expp{-p}$, we obtain
  \eqn{
    \abs{\hat{f}(c,\betaOLShat) - f(c,\betaOLS)} 
    \leq&
    c'\bar{c}(b+\kappa/\tmu)
    \sqrt{\frac{p}{n/\logg{n}}} 
    + \eta k \bar{c}^2\kappa\sqrt{\frac{p}{|S|}}\\
    \leq& K \sqrt{\frac{p}{\minn{n/\logg{n},|S|}}} 
  }
  where $K = \eta\bar{c}\maxx{b+\kappa/\tmu, k \bar{c}\kappa }$.
  Here, $\eta$ depends only on $\kappa$ and $\gamma$.

  In particular, for $c = \bar{c}$ we observe that
  \eqn{
    \hat{f}(\bar{c},\betaOLShat) \geq& f(\bar{c},\betaOLS) -
    K \sqrt{\frac{p}{\minn{n/\logg{n},|S|}}} \\
    \geq & 1 + \delta -
    K \sqrt{\frac{p}{\minn{n/\logg{n},|S|}}}.
  }
  Therefore, 
  for sufficiently large $n$ and $|S|$ satisfying
  \eqn{
    \minn{\frac{n}{\log(n)},|S|} > K^2/\tilde{\delta}^2
  }
  we obtain $\hat{f}(\bar{c},\betaOLShat)>1$.
  Since this function is continuous and $\hat{f}(0,\betaOLShat)=0$,
  we obtain the existence of $\hc_\cgf \in [0, \bar{c}]$ 
  with probability at least $1-5\expp{-p}$.

  Now, since $\hc_\cgf$ and $c_\cgf$ satisfy the equations
  $
  \hat{f}(\hc_\cgf, \betaOLShat) = f(c_\cgf,\betaOLS) =  1
  $
  (with high probability),
  by the inequality given in \eqref{eq::bound-fhat-f}, 
  with probability at least $1-5\expp{-p}$, we obtain
  \eqn{
    \abs{1 - f(\hc_\cgf, \betaOLShat)} = &
    \abs{\hat{f}(\hc_\cgf,\betaOLShat) - f(\hc_\cgf, \betaOLShat)}\\
    \leq & c'\bar{c}(b+\kappa/\tmu)\sqrt{\frac{p}{n/\log(n)}}.
  }

  Also, by the same argument in \eqref{eq::lipschitz-f}, 
  and Proposition \ref{prop::ols-rate}, we get 
  \eqn{
    \abs{f(\hc_\cgf, \betaOLShat) - f(\hc_\cgf, \betaOLS)}
    \leq& k  \bar{c}^2 \kappa \norm{\Sig(\betaOLShat - \betaOLS)}\\
    \leq& \eta k\bar{c}^2 \kappa\sqrt{\frac{p}{|S|}}.
  }

  Now, using the Taylor's series expansion of $c \to f(c,\betaOLS)$ around $c_\cgf$,
  and the assumption on the derivative of $f$ with respect to its
  first argument, we obtain
  \eqn{
    \upsilon \abs{\hc_\cgf - c_\cgf} 
    \leq& \abs{f(\hc_\cgf, \betaOLS) - f(c_\cgf, \betaOLS)}\\
    \leq & \abs{f(\hc_\cgf, \betaOLS) -f(\hc_\cgf, \betaOLShat)}
    + \abs{f(\hc_\cgf, \betaOLShat) - 1}\\
    \leq &  \eta k\bar{c}^2 \kappa \sqrt{\frac{p}{|S|}} 
    + c'\bar{c}(b+\kappa/\tmu)\sqrt{\frac{p}{n/\log(n)}}\\
    \leq & K  \sqrt{\frac{p}{\minn{n/\logg{n},|S|}}}
  }
  with probability at least $1-5\expp{-p}$. 
  Here, the constant $K$ is the same as before
  $$K = \eta\bar{c}\maxx{b+\kappa/\tmu,k \bar{c}\kappa }.$$

\end{proof}

\begin{proof}[Proof of Theorem \ref{thm::main-bound}]
  We have
  \eq{\label{eq::main-ineq-b}
    \normI{\betaOURhat - \betaGLM}  =& \normI{\hc_\cgf \betaOLShat - \betaGLM},\\
    \nonumber
    \leq & \normI{c_\cgf \betaOLS - \betaGLM} 
    + \normI{\hc_\cgf \betaOLShat  - c_\cgf \betaOLS},
  }
  where we used the triangle inequality for the $\ell_\infty$ norm.
  The first term on the right hand side can be bounded using
  Theorem \ref{thm::population-bound}. We write
  \eq{ 
    \normI{c_\cgf \betaOLS - \betaGLM} \leq \eta_1\ \frac{1}{p},
  }
  for $\eta_1=8k \bar{c}\kappa^3\rho(\Sig^{1/2})\|\Sig^{1/2}\|_\infty(\tau/r)^2$.

  For the second term, we write
  \eq{\label{eq::main-ineq-c}
    \normI{\hc_\cgf \betaOLShat  - c_\cgf \betaOLS} =& 
    \normI{\hc_\cgf \betaOLShat \pm \hc_\cgf \betaOLS - c_\cgf \betaOLS},\\
    \nonumber
    \leq &\normI{\hc_\cgf \betaOLShat - \hc_\cgf \betaOLS} 
    +\normI{\hc_\cgf \betaOLS - c_\cgf \betaOLS},\\
    \nonumber
    \leq & \abs{\hc_\cgf} \normI{\betaOLShat - \betaOLS}
    + \abs{\hc_\cgf - c_\cgf} \normI{\betaOLS},
  }
  where the first step follows from triangle inequality.
  By Lemma \ref{lem::concentration-c}, for sufficiently large $n$ and $|S|$, 
  with probability $1-5\expp{-p}$,
  the constant $\hc_\cgf$ exists and it is in the interval $ (0,\bar{c}]$.
  By the same lemma, with probability $1-5\expp{-p}$,
  we have
  \eq{
    \abs{\hc_\cgf - c_\cgf} \leq \eta_4 \sqrt{\frac{p}{\minn{n/\logg{n},|S|}}},
  }
  where $\eta_4 = \eta' \upsilon^{-1}\bar{c}\maxx{b+\kappa/\tmu, k \bar{c}\kappa }$,
  for some constant $\eta'$ depending on the sub-Gaussian norms $\kappa$ and $\gamma$.

  Also, by the norm equivalence and 
  Proposition \ref{prop::ols-rate}, we have with probability $1-3\expp{-p}$
  \eq{
    \normI{\betaOLShat - \betaOLS} 
    \leq &\eta_3\sqrt{\frac{p}{|S|}},
  }
  for $\eta_3 = \eta''\lmin^{-1/2}$, 
  where $\eta''$ is constant depending only on 
  $\gamma$ and $\kappa$.

  Finally, combining all these inequalities with 
  the last line of \eqref{eq::main-ineq-b},
  we have
  with probability $1-5\expp{-p}$,
  \eq{
    \normI{\betaOURhat - \betaGLM}&\leq
    \eta_1 \frac{1}{p}+
    \eta_3\bar{c}\ \sqrt{\frac{p}{|S|}} + 
    \eta_4\normI{\betaOLS} \sqrt{\frac{p}{\min\{n/\log(n),|S|\}}},\\
    &\leq \eta_1 \frac{1}{p}+\left(\eta_3 \bar{c}+ \eta_4 \normI{\betaOLS} \right)
    \sqrt{\frac{p}{\minn{n/\logg{n},|S|}}},\nonumber \\\nonumber
    =&\eta_1 \frac{1}{p} + \eta_2 \sqrt{\frac{p}{\minn{n/\logg{n},|S|}}},
  }
  where
  \eq{
    \eta_1 =& 8k \bar{c}\kappa^3\rho(\Sig^{1/2})\|\Sig^{1/2}\|_\infty(\tau/r)^2\\
    \eta_2 =& \eta_3 \bar{c}+ \eta_4 \normI{\betaOLS},\nonumber\\
    =&  \eta\bar{c}\lmin^{-1/2} 
    \left(1+ \upsilon^{-1}\lmin^{1/2} \|\betaOLS\|_\infty\maxx{(b+k/\tmu),
        k \bar{c}\kappa} \right).\nonumber
  }
\end{proof}


\begin{proof}[Proof of Corollary \ref{cor::lasso}]
  The normal equations for the lasso minimization yields
  \eqn{
    \E{xx^T}\betaLASSO_\lambda -\betaOLS + \lambda s = 0,
  }
  where $s \in \partial \normO{\betaLASSO_\lambda}$.
  It is well-known that under the orthogonal design 
  where the covariates have i.i.d. entries,
  the above equation reduces to
  \eqn{
    \soft(\betaOLS; \lambda) = \betaLASSO_\lambda,
  }
  where $\soft(\ \cdot \ ; \lambda)$ denotes the soft thresholding
  operator at level $\lambda$.
  For any $\beta \in \reals^p$, 
  let $\supp(\beta)$ denote the support of $\beta$, i.e.,
  the set $\{i\in [p] : \beta_i \neq 0 \}$.
  We have
  \eqn{
    \supp(\betaLASSO_\lambda) &= \{ i \in [p]: \betaLASSO_{\lambda, i} \neq 0\},\\
    &= \{i \in [p]: |\betaOLS_i| >\lambda \}
  }
  By Theorem \ref{thm::population-bound}, we have
  \eqn{
    |\betaOLS_i| \leq \frac{1}{\c_\cgf}|\betaGLM_i| 
    + \frac{\eta}{\vert\supp(\betaGLM)\vert},
  }
  which implies that
  \eqn{
    \supp(\betaLASSO_\lambda) \subset\left \{i\in [p]: 
      \frac{1}{\c_\cgf}|\betaGLM_i| + 
      \frac{\eta}{\vert\supp(\betaGLM)\vert} > \lambda\right\}.
  }
  Hence, whenever $\lambda > \eta/\vert\supp(\betaGLM)\vert$, we have
  \eqn{
    \supp(\betaLASSO_\lambda) \subset
    \supp(\betaGLM).
  }
  Further, we have by Theorem \ref{thm::population-bound}
  \eqn{
    \frac{1}{\c_\cgf}|\betaGLM_i| \leq |\betaOLS_i|  + \frac{\eta}{\vert\supp(\betaGLM)\vert}.
  }
  Hence, whenever $|\betaGLM_i| > \c_\cgf\left( \lambda +
    \eta/\vert\supp(\betaGLM)\vert\right)$, we get $|\betaOLS_i| > \lambda$.
  If this condition is satisfied for any entry in the support of $\betaGLM$,
  the corresponding lasso coefficient will be non-zero. Therefore, we get
  \eqn{
    \supp(\betaGLM) \subset
    \supp(\betaLASSO_\lambda)
  }
  under this assumption. Combining this with the previous result, we conclude the proof.
\end{proof}



\begin{figure}[t]
  \centering
  \includegraphics[width=.9\linewidth]{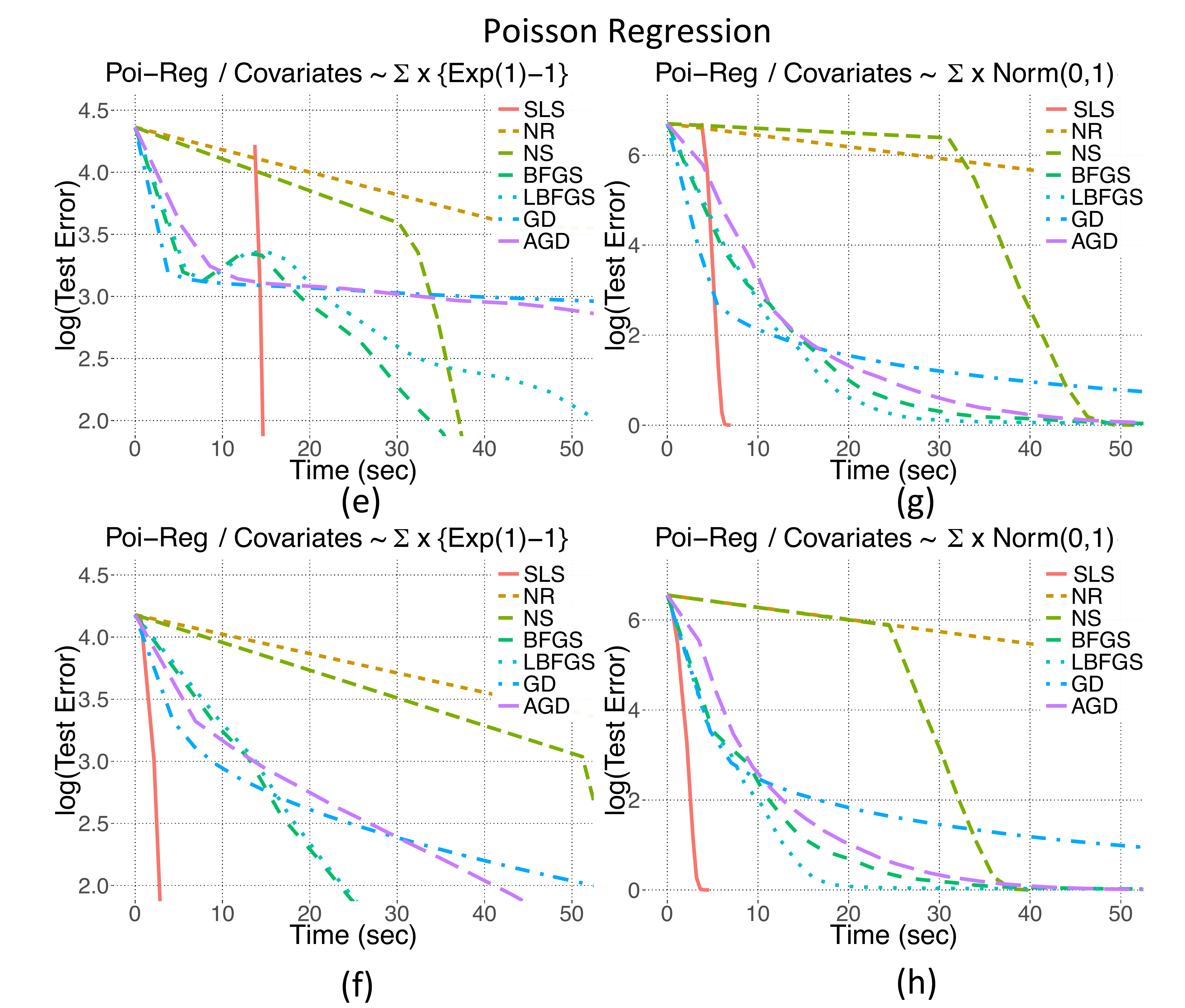}
  \caption{
    \label{fig::benchmark-2}
    Additional experiments comparing the performance of SLS to that of 
    MLE obtained with various
    optimization algorithms on several datasets. 
    SLS is represented with red straight line.
    The details are provided in Table \ref{tab::results-2}
  }
\end{figure}
\section{Additional Experiments}\label{sec::additional-experiments}
In this section, we provide additional experiments.
The overall setting is the same as Section \ref{sec::experiments}.
The only difference is that we change the sampling distribution of the
datasets, which are stated in the title of each plot.
As in Section \ref{sec::experiments}, SLS estimator
outperforms its competitors by a large margin
in terms of the computation time.

The results are provided in Figures \ref{fig::benchmark-2} and \ref{fig::benchmark-3},
and Table \ref{tab::results-2}.
\begin{figure}[h]
  \centering
  \includegraphics[width=.9\linewidth]{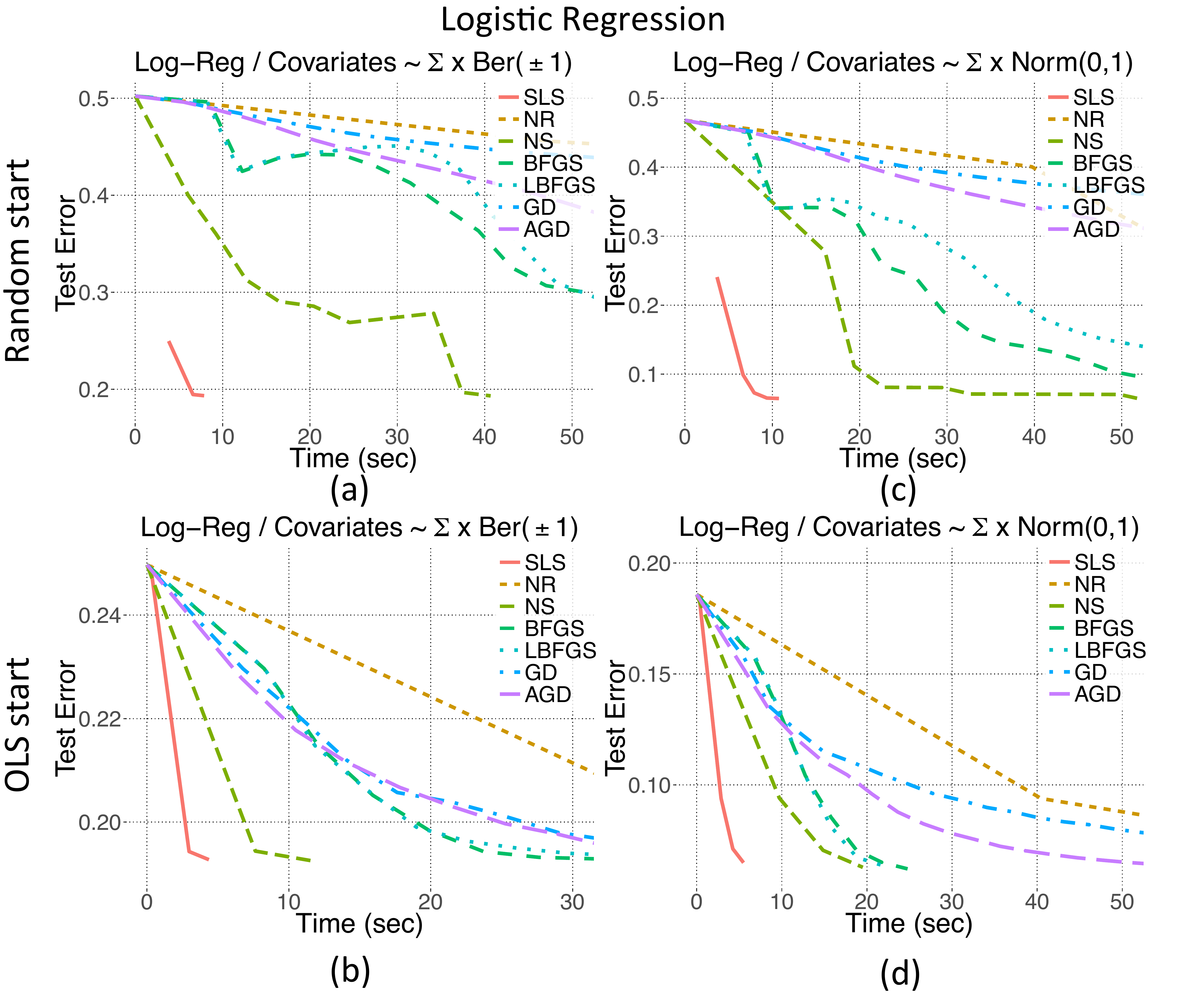}
  \caption{
    \label{fig::benchmark-3}
    Additional experiments comparing the performance of SLS to that of 
    MLE obtained with various
    optimization algorithms on several datasets. 
    SLS is represented with red straight line.
    The details are provided in Table \ref{tab::results-2}
  }
\end{figure}

\begin{table*}[h]\scriptsize
  \caption{\label{tab::results-2} 
    Details of the experiments shown in Figures \ref{fig::benchmark-2} and  \ref{fig::benchmark-3}.
  }
  \begin{center}
    \begin{sc}
      \begin{tabular}{l|c|c|c|c|c|c|c|c}
        Model &\multicolumn{4}{c}{Logistic Regression}\vline
        &\multicolumn{4}{c}{Poisson Regression} \\
        \hline
        Dataset
              &\multicolumn{2}{c}{$\Sig\times$Ber($\pm 1$)}\vline
        &\multicolumn{2}{c}{$\Sig\times$Norm(0,1)}\vline
              &\multicolumn{2}{c}{$\Sig\times$\{Exp(1)-1\}}\vline
        &\multicolumn{2}{c}{$\Sig\times$Norm(0,1)} \\
        \hline
        Size &\multicolumn{2}{c}{{\scriptsize $n=6.0\!\times\! 10^5$, $p\!=\!300$}}\vline
        &\multicolumn{2}{c}{{\scriptsize $n=6.0\!\times\! 10^5$, $p\!=\!300$}}\vline
              &\multicolumn{2}{c}{{\scriptsize $n=6.0\!\times\! 10^5$, $p\!=\!300$}}\vline
        &\multicolumn{2}{c}{{\scriptsize $n=6.0\!\times\! 10^5$, $p\!=\!300$}}\\
        \hline
        Initialize\!\!&{\scriptsize Rnd} &{\scriptsize Ols}&
                                                             {\scriptsize Rnd} &{\scriptsize Ols}&
                                                                                                   {\scriptsize Rnd} &{\scriptsize Ols}&
                                                                                                                                         {\scriptsize Rnd} &{\scriptsize Ols}\\
        \hline
        Plot\!&{\scriptsize (a)}&
                                  {\scriptsize (b)} &{\scriptsize (c)}&
                                                                        {\scriptsize (d)} &{\scriptsize (e)}&
                                                                                                              {\scriptsize (f)} &{\scriptsize (g)}&{\scriptsize (h)} \\
        \hline
        Method$\downarrow$\!\!& 
                                \multicolumn{8}{c}{Time in Seconds / Number of Iterations \ 
                                {\scriptsize (to reach min test error)}}\\
        \hline
        \ \ Sls  &6.61/3   &2.97/3  &9.38/5    & 4.25/4   &14.68/4  &2.99/4   &6.66/10  &4.13/10\\
        \ \ Nr   &222.21/6 &84.08/3 &186.33/6  & 115.76/4 &218.1/6  &218.9/4  &364.63/9 &363.4/9\\
        \ \ Ns   &40.68/10 &11.57/3 &53.06/9   & 19.52/4  &39.22/6  &59.61/4  &51.48/10 &39.8/10\\
        \ \ Bfgs &125.83/33&35.41/9 &155.3/48  & 24.78/8  &46.61/20 &48.71/12 &92.84/36 &74.22/38\\
        \ \ LBfgs&142.09/38&44.41/12&444.62/143& 21.79/7  &96.53/39 &50.56/12 &296.4/111&228.1/117\\
        \ \ Gd   &409.9/134&79.45/22&1773.1/509& 135.62/44&569.1/211&124.31/48&792.3/344&1041.1/366\\
        \ \ Agd  &177.3/159&43.76/12&359.56/95 & 53.73/18 &157.9/57 &63.16/16 &74.74/32 &62.21/32\\
        \hline
      \end{tabular}
    \end{sc}
  \end{center}
\end{table*}
\newpage
\section{Auxiliary Lemmas}

\begin{lemma}[Sub-exponential vector concentration]
  \label{lem::vector-hoeff}
  Let $x_1,x_2,..., x_n$ be independent centered sub-exponential
  random vectors with $\max_i\|x_i\|_\se = \kappa$.
  Then we have
  \eq{\label{eq::sub-exp-1}
    \P{\norm{\frac{1}{n}\sumton x_i} > c\kappa \sqrt{\frac{p}{n}}}
    \leq \expp{-p}.
  }
  whenever $n > 4c^2p$ for an absolute constant $c$.
\end{lemma}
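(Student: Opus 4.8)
The plan is to combine a covering argument for the $\ell_2$ norm with a scalar Bernstein tail bound applied in each fixed direction. Writing $u = \frac{1}{n}\sumton x_i$, I would first fix a $\tfrac12$-net $\N$ of the unit sphere $S^{p-1}$, which can be chosen with cardinality $|\N| \leq 5^p$. A standard consequence of the net property is the comparison $\norm{u} \leq 2\max_{v\in\N}\inner{u,v}$, so it suffices to control the one-dimensional marginals $\inner{u,v} = \frac{1}{n}\sumton \inner{x_i,v}$ uniformly over $v \in \N$.

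For each fixed $v \in S^{p-1}$, the summands $\inner{x_i,v}$ are independent, centered, and sub-exponential with $\normSE{\inner{x_i,v}} \leq \kappa$ by the definition of the sub-exponential vector norm. I would then invoke the Bernstein inequality for averages of independent sub-exponential variables, which yields
\[
  \P{\abs{\tfrac{1}{n}\sumton \inner{x_i,v}} > t} \leq 2\expp{-c_0 n \minn{t^2/\kappa^2,\ t/\kappa}}
\]
for an absolute constant $c_0$. The key step is to set $t = c\kappa\sqrt{p/n}$ and to observe that the hypothesis $n > 4c^2 p$ forces $t/\kappa = c\sqrt{p/n} < \tfrac12 < 1$, so the minimum in the exponent is attained by the quadratic term $t^2/\kappa^2 = c^2 p/n$. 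This places us squarely in the sub-Gaussian regime of Bernstein's inequality and gives a tail of order $2\expp{-c_0 c^2 p}$ for each fixed direction.

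Finally I would union-bound over the net and fold in the covering comparison:
\[
  \P{\norm{u} > 2t} \leq \P{\max_{v\in\N}\abs{\inner{u,v}} > t} \leq |\N|\, 2\expp{-c_0 c^2 p} \leq 2\cdot 5^p \expp{-c_0 c^2 p}.
\]
Choosing the absolute constant $c$ large enough that $c_0 c^2 > 1 + \log 5$ makes the right-hand side at most $\expp{-p}$; after relabeling the radius $2t$ (absorbing the factor of $2$ from the net comparison into $c$) this is exactly the claimed bound. The main technical point is the simultaneous bookkeeping of constants: one must verify that $n > 4c^2 p$ keeps the Bernstein exponent quadratic \emph{and} that the per-coordinate decay rate $c_0 c^2$ strictly dominates the net's entropy $\log 5$, so that the union bound over $5^p$ directions still leaves an $\expp{-p}$ tail.
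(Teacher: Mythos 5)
Your proposal is correct and follows essentially the same route as the paper's own proof: a $\tfrac{1}{2}$-net over the sphere (cardinality $5^p$), Bernstein's inequality for the sub-exponential marginals $\inner{x_i,v}$ in each fixed direction, verification via $n > 4c^2p$ that the quadratic branch of the exponent governs, and a union bound over the net with the constant $c$ chosen to dominate the entropy term $\log 5$. The only differences are cosmetic (two-sided versus one-sided tails, and where the factor from the net comparison is absorbed), so there is nothing to fix.
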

\begin{proof}[Proof of Lemma \ref{lem::vector-hoeff}]
  For a vector $z \in \reals^p$,
  we have $\norm{z} = \sup_{\norm{u}=1} \inner{u,z}$ since the dual of $\ell_2$ norm is itself.
  Therefore, we write
  \eqn{
    \P{\norm{\frac{1}{n}\sumton x_i} > t}
    =& \P{\sup_{\norm{u}=1} \frac{1}{n}\sumton \inner{u,x_i} > t }.
  }
  Now, let $\N_\e$ be an $\e$-net over $\S^{p-1}= \{u\in\reals^p:\ \norm{u}=1 \}$,
  and observe that
  \eqn{
    \max_{u \in \N_\e}\  \< u,x\> \geq &(1-\e)\sup_{\norm{u}=1}\<u,x\>,\\
    = & (1-\e) \|x\|_2,
  }
  with $\abs{\N_\e}\leq (1+2/\e)^p$.
  Hence, we may write
  \eqn{
    \P{\sup_{\norm{u}=1} \frac{1}{n}\sumton \inner{u,x_i} > t }
    \leq & \P{\max_{u \in \N_\e} \frac{1}{n}\sumton \inner{u,x_i} > {t}(1-\e)},\\
    \leq & \abs{\N_\e}\P{ \frac{1}{n}\sumton \inner{u,x_i} > t(1-\e)}.\\
  }
  For any $u \in \S^{p-1}$, we have $\normSE{\inner{u,x_i}}\leq \kappa$.
  Then, by the Bernstein-type inequality 
  for sub-exponential
  random variables \cite{vershynin2010introduction}, we have
  \eqn{
    \P{ \frac{1}{n}\sumton \inner{u,x_i} > {t}(1-\e)}
    \leq \expp{-cn\min\left\{\frac{t^2(1-\e)^2}{\kappa^2},\frac{t(1-\e)}{\kappa} \right\}},
  }
  for an absolute constant $c$.
  Therefore, the probability on the left hand side of \eqref{eq::sub-exp-1} 
  can be bounded by
  \eqn{
    \left(1+\frac{2}{\e}\right)^p \expp{-cn\frac{t^2(1-\e)^2}{\kappa^2}}
    = & \expp{-cn\frac{t^2(1-\e)^2}{\kappa^2} + p \log\left(1+\frac{2}{\e}\right)},
  }
  whenever $t < \kappa / (1-\e)$.
  Choosing $\e = 0.5$ and for an absolute constant $c'> 3.24/c$ and letting 
  $$t = c'\kappa \sqrt{\frac{p}{n}},$$
  we conclude the proof.
\end{proof}

\begin{lemma}\label{lem::concentration-g}
  Let $B(\tbeta)$ denote the ball centered around $\tbeta$ with radius $\delta$, 
  i.e., $$B(\tbeta) = \left\{\beta : \big\|\beta - \tbeta\big\|_2 \leq \delta \right\}.$$

  For $i=1,...,n$, let $x_i\in \reals^p$ be i.i.d. 
  centered sub-Gaussian random vectors
  with norm bounded by $\kappa$ and $\E{\norm{x}} = \tmu \sqrt{p}$. 
  Given a function $g:\reals \to \reals$ that is 
  uniformly bounded by $b >0 $, 
  and Lipschitz continuous with $k$,
  \eqn{
    \P{ \sup_{\beta \in B} \abs{ \frac{1}{n}\sum_{i=1}^n g(\<x_i,\beta \>) 
        - \E{g(\< x,\beta\>)}  } > c(b+\kappa/\tmu)\sqrt{\frac{ p}{n/\log(n)}} } \leq 2\expp{-p},
  }
  whenever 
  $np > 51\max\{\chi,\chi^{-1} \}$ for 
  $\chi = (b+\kappa/\tmu)^2/(c\delta^2k^2\tmu^2)$. Above,
  $c$
  is an absolute constant.
\end{lemma}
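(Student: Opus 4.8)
The plan is to reduce the supremum over the ball $B(\tbeta)$ to a maximum over a finite net, using the boundedness of $g$ to control the process at each net point and its Lipschitz continuity to control the deviation between a point and its nearest net element. Write $G_n(\beta) = \frac{1}{n}\sumton g(\inner{x_i,\beta}) - \E{g(\inner{x,\beta})}$ for the centered empirical process. For a \emph{fixed} $\beta$, the summands $g(\inner{x_i,\beta})$ are i.i.d.\ and lie in $[-b,b]$, so Hoeffding's inequality gives $\P{\abs{G_n(\beta)}>t}\leq 2\expp{-nt^2/(2b^2)}$. This is the only place the uniform bound $b$ enters, and it is the source of the $b$ in the final coefficient.

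First I would discretize. Let $\N_\epsilon$ be a minimal $\epsilon$-net of $B(\tbeta)$, so that $\abs{\N_\epsilon}\leq(1+2\delta/\epsilon)^p$, and I would union-bound the pointwise estimate over $\N_\epsilon$. Taking $t = b\sqrt{2p\,(1+\log(1+2\delta/\epsilon))/n}$ makes $\abs{\N_\epsilon}\cdot 2\expp{-nt^2/(2b^2)}$ of order $\expp{-p}$, so that $\max_{\beta_0\in\N_\epsilon}\abs{G_n(\beta_0)}\leq t$ with probability at least $1-\expp{-p}$.

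Next I would handle the off-net fluctuation, which I expect to be the main obstacle. For $\norm{\beta-\beta_0}\leq\epsilon$, Lipschitz continuity yields $\abs{g(\inner{x_i,\beta})-g(\inner{x_i,\beta_0})}\leq k\,\epsilon\,\norm{x_i}$, hence $\sup_{\norm{\beta-\beta_0}\leq\epsilon}\abs{G_n(\beta)-G_n(\beta_0)}\leq k\epsilon\left(\frac{1}{n}\sumton\norm{x_i}+\tmu\sqrt{p}\right)$. To make this uniform with high probability I would bound the empirical average of the covariate norms: since $x\mapsto\norm{x}$ is $1$-Lipschitz and the $x_i$ are sub-Gaussian with norm $\kappa$, each $\norm{x_i}-\E{\norm{x_i}}$ is sub-Gaussian with norm $\O{\kappa}$, so $\frac{1}{n}\sumton\norm{x_i}\leq \tmu\sqrt{p} + c\kappa\sqrt{p/n}$ with probability $1-\expp{-p}$, and the fluctuation is at most of order $k\epsilon\,\tmu\sqrt{p}$ on this event (using $n\gtrsim(\kappa/\tmu)^2$ to absorb the lower-order term).

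Finally I would balance the two error terms by choosing $\epsilon$ of order $\kappa/(k\tmu^2\sqrt{n})$. This choice is what produces the extra logarithmic factor: $\log(1+2\delta/\epsilon)$ is then of order $\log n$, so the net has cardinality $\sim n^{cp}$ and the net term becomes $\sim b\sqrt{p\log n/n}$, while the Lipschitz term is $k\epsilon\tmu\sqrt{p}\sim(\kappa/\tmu)\sqrt{p/n}\leq(\kappa/\tmu)\sqrt{p\log n/n}$. Adding the two contributions on the intersection of the two good events (of probability at least $1-2\expp{-p}$) and writing the rate as $\sqrt{p/(n/\log n)}$ gives the claimed bound with coefficient $c(b+\kappa/\tmu)$; the hypothesis $np>51\maxx{\chi,\chi^{-1}}$ with $\chi=(b+\kappa/\tmu)^2/(c\delta^2k^2\tmu^2)$ is exactly the condition ensuring $\epsilon\leq\delta$ and that the net and fluctuation terms are of the asserted orders. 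Tracking these constants through the balance—so that the combination $b+\kappa/\tmu$ and the threshold emerge cleanly—is the fiddly part of the argument.
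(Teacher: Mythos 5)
Your overall skeleton (pointwise concentration on an $\epsilon$-net, Lipschitz control of the off-net fluctuation, union bound, balancing $\epsilon$) matches the paper's strategy, but there is a genuine gap in the step controlling the off-net fluctuation. You claim that because $x \mapsto \norm{x}$ is $1$-Lipschitz and $x_i$ is sub-Gaussian with norm $\kappa$, the variable $\norm{x_i} - \E{\norm{x_i}}$ is sub-Gaussian with norm $\O{\kappa}$. Dimension-free concentration of Lipschitz functions is a Gaussian (or convex-concentration) phenomenon; it is false for general sub-Gaussian vectors, which is all the lemma assumes. Counterexample: let $\xi$ be a fair random sign, $g\sim\normal_p(0,\I)$ independent of $\xi$, and $x=\sqrt{2}\,\ind{\xi=1}\,g$; then every marginal $\inner{v,x}$ has sub-Gaussian norm $\O{1}$, yet $\norm{x}$ equals $0$ or $\approx\sqrt{2p}$ with probability $\nicefrac{1}{2}$ each, so its fluctuations around its mean are of order $\sqrt{p}$, not $\O{1}$. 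The correct general bound --- the one the paper derives in \eqref{eq::subGsubE} via $\| \norm{x}^2\|_\se \le \sum_j \|x_j^2\|_\se$ --- is $\normSG{\norm{x}} \lesssim \kappa\sqrt{p}$.

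With the corrected bound, obtaining probability $1-\expp{-p}$ forces a deviation of size $t\sim\kappa p/\sqrt{n}$ (not $\kappa\sqrt{p/n}$), so your good event reads $\frac{1}{n}\sum_i \norm{x_i} \le \tmu\sqrt{p} + C\kappa p/\sqrt{n}$, and the off-net fluctuation acquires the extra contribution $k\epsilon\cdot C\kappa p/\sqrt{n} \sim \kappa^2 p/(\tmu^2 n)$ for your choice $\epsilon\sim\kappa/(k\tmu^2\sqrt{n})$. This term is dominated by the target rate $(b+\kappa/\tmu)\sqrt{p\log(n)/n}$ only if $(\kappa/\tmu)^2 p \lesssim n\log n$, a condition that the hypothesis $np>51\maxx{\chi,\chi^{-1}}$ does not imply (it lower-bounds the product $np$, not the ratio $n/p$), so your argument does not close under the lemma's stated assumptions. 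The paper's proof is engineered precisely to sidestep this: instead of concentrating $\frac{1}{n}\sum_i\norm{x_i}$ on its own, it folds the Lipschitz correction into one-sided bracketing functions $u_\beta(x),\,l_\beta(x)=g(\inner{x,\beta})\pm\e\norm{x}/(4\mu)$ with net width $\Delta_*=\e/(4k\mu)$; thanks to the $1/\mu = 1/(\tmu\sqrt{p})$ scaling, these summands are sub-Gaussian with the dimension-free norm $b+\kappa/\tmu$, so a single Hoeffding-type bound plus the union bound over the net finishes the proof. (A secondary point needing care in your write-up: with your $\epsilon$, the factor $\log(1+2\delta/\epsilon)$ involves $\log(\delta k\tmu^2\sqrt{n}/\kappa)$, and showing this is $\O{\log n}$ again requires invoking the $\chi$-condition, which the paper does through its explicit choice of $\e^2$.) To repair your proof you would either import exactly this bracketing/scaling trick or add an assumption such as $(\kappa/\tmu)^2p\lesssim n\log n$ that the lemma does not make.
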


\begin{proof}[Proof of Lemma \ref{lem::concentration-g}]
  Let $\E{\| x\|_2} = \mu = \tmu \sqrt{p}$ and
  for $\e >0$, $\beta \in B(\tbeta)$
  and $w \in \reals^p$
  define the bounding functions
  \eqn{
    l_\beta(w) =& g(\< w,\beta\>) - \e{\| w\|_2}/{4\mu},\\
    u_\beta(w)=& g(\< w,\beta\>) + \e{\| w\|_2}/{4\mu}.
  }
  Let $\N_\Delta$ be a net over $B(\tbeta)$ in the sense that for any $\beta_1 \in B(\tbeta)$,
  $\exists \beta_2 \in \N_\Delta$ such that $\norm{\beta_1 - \beta_2} \leq \Delta$. 
  We fix $\Delta_* = \epsilon /(4k \mu)$ and write 
  $\forall \beta_1 \in B$, $\exists \beta_2\in \N_{\Delta_*}$,
  \begin{enumerate}{}
  \item an upper bound of the form:
    \eqn{
      g(\inner{w,\beta_1}) \leq& g(\inner{w,\beta_2}) + k\abs{\inner{w,\beta_1-\beta_2}},\\
      \leq& g(\inner{w,\beta_2}) + k \norm{w} \Delta_*, \\
      = & u_{\beta_2}(w),
    }
  \item and a lower bound of the form:
    \eqn{
      g(\inner{w,\beta_1}) \geq& g(\inner{w,\beta_2}) - k\abs{\inner{w,\beta_1-\beta_2}},\\
      \geq& g(\inner{w,\beta_2}) - k \norm{w} \Delta_*, \\
      = & l_{\beta_2}(w),
    }
  \end{enumerate}
  where the second steps in the above inequalities follow from 
  the Cauchy-Schwarz inequality.
  These functions are called \emph{bracketing functions}
  in the context of empirical process theory.

  Hence, we can write that 
  $\forall \beta_1 \in B(\tbeta)$, $\exists \beta_2 \in \N_{\Delta_*}$
  such that
  \eqn{
    \frac{1}{n}\sum_{i=1}^n l_{\beta_2}(x_i) - \E{l_{\beta_2}(x)} -\e/2 
    & \leq 
    \frac{1}{n}\sum_{i=1}^n g(\<x_i,\beta_1 \>) - \E{g(\< x,\beta_1\>)},\\
    & \leq  \frac{1}{n}\sum_{i=1}^n u_{\beta_2}(x_i) - \E{u_{\beta_2}(x)} +\e/2
    .
  }

  The above inequalities translate to the following conclusion:
  Whenever the following event happens,
  \eqn{
    \left\{\left | \frac{1}{n}\sum_{i=1}^n g(\<x_i,\beta_1 \>) 
        - \E{g(\< x,\beta_1\>)}  \right | > \e \right\},
  }
  at least one of the following events happens
  \eqn{
    \left\{ \frac{1}{n}\sum_{i=1}^n u_{\beta_2}(x_i) 
      - \E{u_{\beta_2}(x)} > \e/2 \right\}
    \ \ \text{or}\ \ 
    \left\{ \frac{1}{n}\sum_{i=1}^n l_{\beta_2}(x_i) 
      - \E{l_{\beta_2}(x)}  < -\e/2 \right\}.
  }

  Therefore, using the union bound on the above events,
  we may obtain
  \eq{ \label{eq::prob-bound}
    &\P{\sup_{\beta \in B(\tbeta)}\left | \frac{1}{n}\sum_{i=1}^n g(\<x_i,\beta \>) 
        - \E{g(\< x,\beta\>)}  \right | > \e } \\
    & \leq
    \P{ \max_{\beta\in\N_{\Delta_*}}\frac{1}{n}\sum_{i=1}^n u_{\beta}(x_i) 
      - \E{u_{\beta}(x)} > \e/2 }
    \nonumber \\
    &\ \ +\nonumber
    \P{\max_{\beta\in\N_{\Delta_*}} \frac{1}{n}\sum_{i=1}^n l_{\beta}(x_i) 
      - \E{l_{\beta}(x)}  <- \e/2 }
    .
  } 

  Note that the right hand side of the above 
  inequality has two terms 
  both of which are of the same form.
  For simplicity, we bound only the first one. 
  The bound for the second one follows from the 
  exact same steps.

  The relation between sub-Gaussian and sub-exponential norms
  \cite{vershynin2010introduction} allows us to write
  \eq{\label{eq::subGsubE}
    \| \|x \|_2 \|^2_\sg \leq  
    \| \|x \|^2_2 \|_\se 
    \leq& \sum_{i=1}^p \|x_i^2\|_\se, \\ \nonumber
    \leq &2\sum_{i=1}^p \|x_i\|^2_\sg
    \leq 2\kappa^2 p,
  }
  where the second step follows from the triangle inequality.
  Hence, we conclude that 
  $\| x \|_2 - \E{\| x \|_2}$ is a centered sub-Gaussian random variable 
  with norm upper bounded by $3\kappa\sqrt{p}$. 

  For $\e < 4/3$, we notice that the random variable 
  $u_\beta(x)=g(\< x,\beta\>) + \e {\| x\|_2}/{4\mu}$ 
  is also sub-Gaussian with norm
  \eqn{
    \|u_\beta(x)\|_\sg &\leq b +  \frac{\epsilon}{4\tmu} 3\kappa \\
    & \leq b +  \kappa/\tmu,
  }
  and consequently, 
  the centered random variable $u_\beta(x)- \E{u_\beta(x)}$ has
  the sub-Gaussian norm upper bounded by $2b+2\kappa/\tmu$.

  Then, by the Hoeffding-type inequality
  for the sub-Gaussian random variables, we obtain
  \eqn{
    \P{ 
      \frac{1}{n}\sum_{i=1}^n u_\beta(x_i)
      -\E{u_\beta(x)}
      > \e/2}  \leq &
    \expp{-c n\frac{\e^2}{(b+\kappa/\tmu)^2}}
  }
  for an absolute constant $c>0$. 

  By the same argument above, one can obtain the same result
  for the function $l_\beta(x)$. 
  Using Hoeffding bounds in \eqref{eq::prob-bound} along with the union bound
  over the net, we immediately obtain
  \eqn{
    \P{\sup_{\beta \in B(\tbeta)}\left | \frac{1}{n}\sum_{i=1}^n g(\<x_i,\beta \>) 
        - \E{g(\< x,\beta\>)}  \right | > \e } 
    \leq
    2\abs{\N_{\Delta_*}}
    \expp{-c n\frac{\e^2}{(b+\kappa/\tmu)^2}}
  }
  for some absolute constant $c$.

  Using a standard covering argument over the net $\N_{\Delta_*}$ 
  as given in Lemma \ref{lem::sphere},
  we have
  \eqn{
    \abs{\N_{\Delta_*}} \leq 
    \left(\frac{\delta\sqrt{p}}{\Delta_*}\right)^p = 
    \left(\frac{4\delta k \tmu p}{\e }\right)^p.
  }
  Combining this with the previous bound,
  and choosing
  \eqn{
    \e^2  = \frac{p}{n}\frac{(b+\kappa/\tmu)^2}{2c}
    \log\left(\frac{32c\delta^2k^2\tmu^2 pn}{(b+\kappa/\tmu)^2} \right)
  }
  we get 
  \eqn{
    &2\left(\frac{4\delta k \tmu p}{\e }\right)^p
    \expp{-c n\frac{\e^2}{(b+\kappa/\tmu)^2}}\\
    & = 2 \expp{-\frac{p}{2} \log \log
      \left(\frac{32c\delta^2k^2\tmu^2pn}{(b+\kappa/\tmu)^2}\right)}\\
    &\leq 2\expp{-p},
  }
  whenever $np > 51\max\{\chi,\chi^{-1} \}$ for $\chi = (b+\kappa/\tmu)^2/(c\delta^2k^2\tmu^2)$.

\end{proof}

\begin{lemma}[\cite{erdogdu2015convergence}]\label{lem::sphere}
  Let $B \subset \reals^p$ be the ball of radius $\delta$ 
  centered around some $\beta \in \reals^p$ 
  and $\N_\e$ be an $\e$-net over $B$. Then,
  \eqn{
    \abs{\N_\epsilon}
    \leq \left(\frac{\delta\sqrt{p}}{\e}\right)^p.
  }
\end{lemma}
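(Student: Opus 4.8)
The plan is to prove this by a direct coordinate-discretization (grid) argument, which is the natural route to the specific constant $(\delta\sqrt{p}/\e)^p$. After translating so that $\beta = 0$, I would note that $B$ is contained in the axis-aligned cube $[-\delta,\delta]^p$. The one conceptual step is to convert the $\ell_2$ covering requirement into a coordinate-wise requirement: if a candidate point agrees with a target to within $\e/\sqrt{p}$ in each of the $p$ coordinates, then their $\ell_2$ distance is at most $\sqrt{p\,(\e/\sqrt{p})^2} = \e$. This is exactly where the factor $\sqrt{p}$ is produced.

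Concretely, I would lay a regular grid of spacing $2\e/\sqrt{p}$ along each coordinate axis over the interval $[-\delta,\delta]$. Since the spacing is $2\e/\sqrt{p}$, every real number in $[-\delta,\delta]$ lies within $\e/\sqrt{p}$ of some grid value; applying the coordinate-wise bound above to all $p$ coordinates shows that every point of the cube --- in particular every point of $B$ --- lies within $\ell_2$-distance $\e$ of a grid point. Hence the product grid is an $\e$-net in the sense required (note that the definition of a net used here only asks that each point of $B$ be $\e$-close to a net point, and does not insist that the net points themselves lie in $B$, which removes any boundary complication). The number of grid values spanning an interval of length $2\delta$ at spacing $2\e/\sqrt{p}$ is $\delta\sqrt{p}/\e$, and taking the product over the $p$ coordinates gives $\abs{\N_\e} \le (\delta\sqrt{p}/\e)^p$.

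I expect no genuine obstacle here; the only care needed is bookkeeping, namely that the exact number of grid values per coordinate is $\lceil \delta\sqrt{p}/\e \rceil + 1$, so a rigorous version either assumes the mild normalization $\e \le \delta$ (under which the per-coordinate rounding is absorbed into the stated form) or states the bound up to such $O(1)$-per-coordinate corrections. Apart from this, the result is the standard $\ell_2$ covering-number estimate for a Euclidean ball, and the $\sqrt{p}$ factor is simply the price of passing from per-coordinate accuracy to Euclidean accuracy.
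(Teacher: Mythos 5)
Your proof is correct and is essentially the paper's own argument: the paper likewise encloses $B$ in a cube of side $2\delta$, grids it with mesh width $2\e/\sqrt{p}$, and observes that each grid cell lies inside a ball of radius $\e$ (the half-diagonal of a cube of edge $2\e/\sqrt{p}$ is exactly $\e$), giving the count $\left(2\delta/(2\e/\sqrt{p})\right)^p = \left(\delta\sqrt{p}/\e\right)^p$. The rounding issue you flag is present in (and ignored by) the paper's proof as well, and your remark that net points need not lie in $B$ corresponds to the paper's step of projecting the cube centers onto $B$.
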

\begin{proof}[Proof of Lemma \ref{lem::sphere}]
  The set $B$ can be contained in a $p$-dimensional cube of size $2\delta$. 
  Consider a grid over this cube with mesh width $2\e/\sqrt{p}$. 
  Then $B$ can be covered with at most $(2\delta/(2\e/\sqrt{p}))^p$ 
  many cubes of edge length $2\e/\sqrt{p}$. 
  If ones takes the projection of the centers of such cubes onto 
  $B$ and considers the circumscribed balls of radius $\e$, 
  we may conclude that $B$ can be covered with at most
  $$\left(\frac{2\delta}{2\e/\sqrt{p}}\right)^p$$
  many balls of radius $\e$.
\end{proof}
\begin{lemma}[Corollary 5.50 of \cite{vershynin2010introduction}]
  \label{lem::versh-sg}
  Let $\tx_1, \tx_2, ..., \tx_n$ be
  isotropic random vectors with sub-Gaussian norm
  upper bounded by $\kappa$.
  Then for every $t>0$, with probability at least $1-2\expp{-c_1t^2}$,
  the empirical covariance $\SigT$ satisfies,
  \eqn{
    \norm{\SigT - \I} \leq \max\{ \delta,\delta^2\}\ \ \ 
    \text{where}\ \ \ \delta = c_2\sqrt{\frac{p}{n}} + \frac{t}{\sqrt{n}}
  }
  where $c_1,c_2$ are constants depending only on $\kappa$.
\end{lemma}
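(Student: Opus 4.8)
The plan is to prove this cited concentration bound by the standard two-step route for the operator norm of an empirical covariance matrix: first reduce the supremum over the sphere to a maximum over a finite net, then control each net point by a scalar concentration inequality and take a union bound. First I would pass from the operator norm to a discrete maximum. Since $\SigT - \I$ is symmetric, a standard covering of the unit sphere $\S^{p-1}$ by an $\e$-net $\N_\e$ at level $\e = 1/4$ (for which $|\N_{1/4}| \leq 9^p$) gives
\eqn{
  \norm{\SigT - \I} \leq 2 \max_{u \in \N_{1/4}} \abs{u^T(\SigT - \I)u}.
}
Writing $Z_i = \inner{u,\tx_i}$, isotropy yields $\E{Z_i^2} = 1$, so each net term is the centered empirical average $u^T(\SigT - \I)u = \tfrac{1}{n}\sumton (Z_i^2 - 1)$.

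Next I would verify that the summands are sub-exponential. For a fixed unit vector $u$, the sub-Gaussian hypothesis gives $\normSG{Z_i} \leq \kappa$, and the relation between the two Orlicz norms (the same one used in \eqref{eq::subGsubE}) gives $\normSE{Z_i^2} \leq 2\normSG{Z_i}^2 \leq 2\kappa^2$; centering costs at most a further factor of two, so $\normSE{Z_i^2 - 1} \leq K$ with $K$ a constant multiple of $\kappa^2$. The Bernstein-type inequality for independent centered sub-exponential variables then yields, for every $s > 0$,
\eqn{
  \P{\abs{\frac{1}{n}\sumton (Z_i^2 - 1)} > s}
  \leq 2\expp{-c\, n \minn{s^2/K^2,\ s/K}}.
}

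The crux is to choose $s$ so that, after the union bound over the net, the exponent dominates both $p\log 9$ and $t^2$, and to recognize that the resulting threshold has exactly the $\maxx{\delta,\delta^2}$ shape with $\delta = c_2\sqrt{p/n} + t/\sqrt{n}$. Taking $s$ to be a constant multiple of $\maxx{\delta,\delta^2}$, I would split into the two Bernstein regimes: when $\delta \leq 1$ the quadratic branch $s^2/K^2$ is active and forces $s \asymp \kappa^2 \delta$ (so $\maxx{\delta,\delta^2} = \delta$), while when $\delta > 1$ the linear branch $s/K$ is active and forces $s \asymp \kappa^2 \delta^2$ (so $\maxx{\delta,\delta^2} = \delta^2$). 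In either case $n\minn{s^2/K^2,\ s/K} \gtrsim p + t^2$, using $(c_2\sqrt{p/n}+t/\sqrt{n})^2 \geq c_2^2 p/n + t^2/n$, so the union bound $9^p \cdot 2\expp{-cn\minn{\cdots}}$ is dominated by $2\expp{-c_1 t^2}$ once $c_2$ is taken large enough to absorb $p\log 9$ into the $\sqrt{p/n}$ part of $\delta$. Combining with the factor $2$ from the net closes the argument. The main obstacle I anticipate is the bookkeeping of this two-regime structure so that the threshold collapses to the single expression $\maxx{\delta,\delta^2}$ rather than two separate bounds; this is precisely the step where the two-sided Bernstein inequality must be applied carefully, matching each deviation regime to the corresponding power of $\delta$.
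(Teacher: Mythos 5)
The paper itself gives no proof of this lemma: it is quoted, with proof deferred, as Corollary 5.50 of Vershynin's survey, and your argument is precisely the net-plus-Bernstein proof given there (a $1/4$-net of cardinality $9^p$, sub-exponentiality of $\inner{u,\tx_i}^2-1$ with norm $O(\kappa^2)$, Bernstein's inequality, and a union bound absorbed by taking $c_2$ large depending on $\kappa$). Your proposal is therefore correct and takes essentially the same route; the one piece of bookkeeping to tighten is that Bernstein should be applied at deviation level $\epsilon/2$ with $\epsilon=\max\{\delta,\delta^2\}$, using the identity $\min\{\epsilon,\epsilon^2\}=\delta^2$ and pushing all $\kappa$-dependence into the constants $c_1,c_2$, since taking $s\asymp\kappa^2\delta$ literally would yield the bound $\kappa^2\max\{\delta,\delta^2\}$ and then require a rescaling of $t$ to recover the stated form.
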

\begin{remark}
  For $t = \sqrt{p/c_1}$, we get with probability at least $1-2\expp{-p}$,
  \eqn{
    \norm{\SigT - \I} \leq 
    C\sqrt{\frac{p}{n}}
  }
  where $$C=\left\{c_2 + \frac{1}{\sqrt{c_1}} \right\},$$
  and $n > C^2 p$. Here, $C$ only depends on $\kappa$.
\end{remark}
\begin{lemma}[Corollary 5.52 of \cite{vershynin2010introduction}]
  \label{lem::versh-heavy}
  Let $x_1, x_2, ..., x_n$ be random vectors with mean 0 and covariance $\Sig$ supported 
  on a centered Euclidean ball of radius $\sqrt{R}$,
  i.e., $\|x_i\|_2 \leq \sqrt{R}$. For $\e \in (0,1)$
  and $c>0$ an absolute constant,
  with probability at least $1-1/p^2$,
  the empirical covariance matrix satisfies
  \eqn{
    \norm{\SigH - \Sig} \leq \e \norm{\Sig},
  }
  for $n > cR\log(p)/(\e^2 \norm{\Sig}) $.
\end{lemma}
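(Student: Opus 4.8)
This statement is the standard matrix concentration bound (Corollary 5.52 of \cite{vershynin2010introduction}). The plan is to write the deviation $\SigH - \Sig$ as a normalized sum of independent, centered, symmetric random matrices and apply the (non-commutative) matrix Bernstein inequality. First I would set $Z_i = x_ix_i^T - \Sig$, so that $\SigH - \Sig = \frac{1}{n}\sumton Z_i$ with $\E{Z_i}=0$ and each $Z_i$ symmetric. Since the $x_i$ are supported on the ball of radius $\sqrt{R}$, each rank-one term obeys $\norm{x_ix_i^T} = \norm{x_i}^2 \leq R$, and because $\norm{\Sig} = \norm{\E{x_ix_i^T}} \leq \E{\norm{x_i}^2} \leq R$, we obtain the uniform operator-norm bound $\norm{Z_i} \leq 2R =: M$.

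The crucial second ingredient is the matrix variance. Using $\norm{x_i}^2 \leq R$ and discarding the negative-semidefinite term, I would estimate
\eqn{
\E{Z_i^2} = \E{\norm{x_i}^2\, x_ix_i^T} - \Sig^2 \preceq R\,\E{x_ix_i^T} = R\,\Sig,
}
so that $\norm{\sumton \E{Z_i^2}} \leq nR\norm{\Sig} =: \sigma^2$. Matrix Bernstein then gives, for any $t>0$,
\eqn{
\P{\norm{\textstyle\sum_i Z_i} > t} \leq 2p\,\expp{-\frac{t^2/2}{\sigma^2 + Mt/3}}.
}
Choosing the threshold $t = n\e\norm{\Sig}$ (so that the event is $\norm{\SigH-\Sig} > \e\norm{\Sig}$) and substituting $M = 2R$, $\sigma^2 = nR\norm{\Sig}$, the denominator is dominated by $\sigma^2$ for $\e \in (0,1)$, and the exponent is of order $-n\e^2\norm{\Sig}/R$. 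Taking $n > cR\logg{p}/(\e^2\norm{\Sig})$ with $c$ a large enough absolute constant forces this exponent below $-3\logg{p}$, so the prefactor $2p$ is absorbed and the failure probability is at most $2p\cdot p^{-3} \leq 1/p^2$, as claimed.

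The main obstacle is the variance step. The fact that the sample complexity scales with $\norm{\Sig}$ in the denominator — rather than with $R$ — relies entirely on the refined bound $\E{Z_i^2}\preceq R\,\Sig$; a lazy estimate such as $\E{Z_i^2}\preceq R^2\I$ would only yield the weaker requirement $n \gtrsim R^2\logg{p}/(\e^2\norm{\Sig}^2)$. Beyond this, I would need to verify carefully that the Bernstein exponent is genuinely variance-dominated (not Bernstein/bounded-term dominated) throughout the stated range of $\e$ and $n$, and to track the absolute constants so that the $2p$ dimensional prefactor is converted into the clean probability $1/p^2$.
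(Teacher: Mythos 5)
The paper does not prove this lemma at all---it is imported verbatim as Corollary 5.52 of \cite{vershynin2010introduction}---and your argument is a correct reconstruction of exactly the proof given in that reference: Vershynin's Corollary 5.52 also reduces to the non-commutative (matrix) Bernstein inequality applied to $Z_i = x_ix_i^T - \Sig$, with the same two ingredients, the uniform bound $\norm{Z_i}\le 2R$ and the refined variance bound $\E{Z_i^2}\preceq R\,\Sig$, which is precisely what makes the sample complexity scale as $R\log(p)/(\e^2\norm{\Sig})$ rather than $R^2\log(p)/(\e^2\norm{\Sig}^2)$. The only loose end is cosmetic: $2p\cdot p^{-3}=2/p^2$, so the absolute constant $c$ must be taken slightly larger to absorb the factor $2$ and yield the stated $1/p^2$, a point you already flag.
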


\end{document}